\theoremstyle{plain}
\newtheorem{thm}{\protect\theoremname}
\theoremstyle{definition}
\newtheorem{defn}[thm]{\protect\definitionname}
\theoremstyle{plain}
\newtheorem{prop}[thm]{\protect\propositionname}
\theoremstyle{plain}
\newtheorem{cor}[thm]{\protect\corollaryname}
\theoremstyle{plain}
\newtheorem{lem}[thm]{\protect\lemmaname}
\author{
    Trung Phung$^{1}$ \hspace{0.04cm}
    Trung Le$^{2}$ \hspace{0.04cm}
    Long Vuong$^{1}$ \hspace{0.04cm} 
    Toan Tran$^{1}$ \hspace{0.04cm}
    Anh Tran$^{1}$ \hspace{0.04cm}
    Hung Bui$^{1}$ \hspace{0.04cm}
    Dinh Phung$^{1,2}$ \\
    $^{1}$ VinAI Research, Vietnam \hspace{0.1cm}
    $^{2}$ Monash University, Australia \\
	\texttt{v.trungpq3@vinai.io, trunglm@monash.edu}, \\ \texttt{\{v.longvt8, v.toantm3, v.anhtt152, v.hungbh1, v.dinhpq2\}@vinai.io}
}
\providecommand{\definitionname}{Definition}
\providecommand{\theoremname}{Theorem}
\providecommand{\corollaryname}{Corollary}
\providecommand{\definitionname}{Definition}
\providecommand{\lemmaname}{Lemma}
\providecommand{\propositionname}{Proposition}
\providecommand{\theoremname}{Theorem}
\begin{document}
\title{On Learning Domain-Invariant Representations for Transfer Learning
with Multiple Sources}
\maketitle
\begin{abstract}
Domain adaptation (DA) benefits from the rigorous theoretical works
that study its insightful characteristics and various aspects, e.g.,
learning domain-invariant representations and its trade-off. However,
it seems not the case for the multiple source DA and domain generalization
(DG) settings which are remarkably more complicated and sophisticated
due to the involvement of multiple source domains and potential unavailability
of target domain during training. In this paper, we develop novel
upper-bounds for the target general loss which appeal to us to define
two kinds of domain-invariant representations. We further study the
pros and cons as well as the trade-offs of enforcing learning each
domain-invariant representation. Finally, we conduct experiments to
inspect the trade-off of these representations for offering practical
hints regarding how to use them in practice and explore other interesting
properties of our developed theory. 
\end{abstract}
\global\long\def\sidenote#1{\marginpar{\small\emph{{\color{Medium}#1}}}}%

\global\long\def\se{\hat{\text{se}}}%
\global\long\def\interior{\text{int}}%
\global\long\def\boundary{\text{bd}}%
\global\long\def\ML{\textsf{ML}}%
\global\long\def\GML{\mathsf{GML}}%
\global\long\def\HMM{\mathsf{HMM}}%
\global\long\def\support{\text{supp}}%
\global\long\def\new{\text{*}}%
\global\long\def\stir{\text{Stirl}}%
\global\long\def\mA{\mathcal{A}}%
\global\long\def\mB{\mathcal{B}}%
\global\long\def\mF{\mathcal{F}}%
\global\long\def\mK{\mathcal{K}}%
\global\long\def\mH{\mathcal{H}}%
\global\long\def\mX{\mathcal{X}}%
\global\long\def\mZ{\mathcal{Z}}%
\global\long\def\mS{\mathcal{S}}%
\global\long\def\Ical{\mathcal{I}}%
\global\long\def\mT{\mathcal{T}}%
\global\long\def\Pcal{\mathcal{P}}%
\global\long\def\dist{d}%
\global\long\def\HX{\entro\left(X\right)}%
\global\long\def\entropyX{\HX}%
\global\long\def\HY{\entro\left(Y\right)}%
\global\long\def\entropyY{\HY}%
\global\long\def\HXY{\entro\left(X,Y\right)}%
\global\long\def\entropyXY{\HXY}%
\global\long\def\mutualXY{\mutual\left(X;Y\right)}%
\global\long\def\mutinfoXY{\mutualXY}%
\global\long\def\given{\mid}%
\global\long\def\gv{\given}%
\global\long\def\goto{\rightarrow}%
\global\long\def\asgoto{\stackrel{a.s.}{\longrightarrow}}%
\global\long\def\pgoto{\stackrel{p}{\longrightarrow}}%
\global\long\def\dgoto{\stackrel{d}{\longrightarrow}}%
\global\long\def\lik{\mathcal{L}}%
\global\long\def\logll{\mathit{l}}%
\global\long\def\vectorize#1{\mathbf{#1}}%

\global\long\def\vt#1{\mathbf{#1}}%
\global\long\def\gvt#1{\boldsymbol{#1}}%
\global\long\def\idp{\ \bot\negthickspace\negthickspace\bot\ }%
\global\long\def\cdp{\idp}%
\global\long\def\das{}%
\global\long\def\id{\mathbb{I}}%
\global\long\def\idarg#1#2{\id\left\{  #1,#2\right\}  }%
\global\long\def\iid{\stackrel{\text{iid}}{\sim}}%
\global\long\def\bzero{\vt 0}%
\global\long\def\bone{\mathbf{1}}%
\global\long\def\boldm{\boldsymbol{m}}%
\global\long\def\bff{\vt f}%
\global\long\def\bx{\boldsymbol{x}}%

\global\long\def\bd{\boldsymbol{d}}%
\global\long\def\bl{\boldsymbol{l}}%
\global\long\def\bu{\boldsymbol{u}}%
\global\long\def\bo{\boldsymbol{o}}%
\global\long\def\bh{\boldsymbol{h}}%
\global\long\def\bs{\boldsymbol{s}}%
\global\long\def\bz{\boldsymbol{z}}%
\global\long\def\xnew{y}%
\global\long\def\bxnew{\boldsymbol{y}}%
\global\long\def\bX{\boldsymbol{X}}%
\global\long\def\tbx{\tilde{\bx}}%
\global\long\def\by{\boldsymbol{y}}%
\global\long\def\bY{\boldsymbol{Y}}%
\global\long\def\bZ{\boldsymbol{Z}}%
\global\long\def\bU{\boldsymbol{U}}%
\global\long\def\bv{\boldsymbol{v}}%
\global\long\def\bn{\boldsymbol{n}}%
\global\long\def\bV{\boldsymbol{V}}%
\global\long\def\bI{\boldsymbol{I}}%
\global\long\def\bw{\vt w}%
\global\long\def\balpha{\gvt{\alpha}}%
\global\long\def\bbeta{\gvt{\beta}}%
\global\long\def\bmu{\gvt{\mu}}%
\global\long\def\btheta{\boldsymbol{\theta}}%
\global\long\def\blambda{\boldsymbol{\lambda}}%
\global\long\def\bgamma{\boldsymbol{\gamma}}%
\global\long\def\bpsi{\boldsymbol{\psi}}%
\global\long\def\bphi{\boldsymbol{\phi}}%
\global\long\def\bpi{\boldsymbol{\pi}}%
\global\long\def\bomega{\boldsymbol{\omega}}%
\global\long\def\bepsilon{\boldsymbol{\epsilon}}%
\global\long\def\btau{\boldsymbol{\tau}}%
\global\long\def\bxi{\boldsymbol{\xi}}%
\global\long\def\realset{\mathbb{R}}%
\global\long\def\realn{\realset^{n}}%
\global\long\def\integerset{\mathbb{Z}}%
\global\long\def\natset{\integerset}%
\global\long\def\integer{\integerset}%

\global\long\def\natn{\natset^{n}}%
\global\long\def\rational{\mathbb{Q}}%
\global\long\def\rationaln{\rational^{n}}%
\global\long\def\complexset{\mathbb{C}}%
\global\long\def\comp{\complexset}%

\global\long\def\compl#1{#1^{\text{c}}}%
\global\long\def\and{\cap}%
\global\long\def\compn{\comp^{n}}%
\global\long\def\comb#1#2{\left({#1\atop #2}\right) }%
\global\long\def\nchoosek#1#2{\left({#1\atop #2}\right)}%
\global\long\def\param{\vt w}%
\global\long\def\Param{\Theta}%
\global\long\def\meanparam{\gvt{\mu}}%
\global\long\def\Meanparam{\mathcal{M}}%
\global\long\def\meanmap{\mathbf{m}}%
\global\long\def\logpart{A}%
\global\long\def\simplex{\Delta}%
\global\long\def\simplexn{\simplex^{n}}%
\global\long\def\dirproc{\text{DP}}%
\global\long\def\ggproc{\text{GG}}%
\global\long\def\DP{\text{DP}}%
\global\long\def\ndp{\text{nDP}}%
\global\long\def\hdp{\text{HDP}}%
\global\long\def\gempdf{\text{GEM}}%
\global\long\def\rfs{\text{RFS}}%
\global\long\def\bernrfs{\text{BernoulliRFS}}%
\global\long\def\poissrfs{\text{PoissonRFS}}%
\global\long\def\grad{\gradient}%
\global\long\def\gradient{\nabla}%
\global\long\def\partdev#1#2{\partialdev{#1}{#2}}%
\global\long\def\partialdev#1#2{\frac{\partial#1}{\partial#2}}%
\global\long\def\partddev#1#2{\partialdevdev{#1}{#2}}%
\global\long\def\partialdevdev#1#2{\frac{\partial^{2}#1}{\partial#2\partial#2^{\top}}}%
\global\long\def\closure{\text{cl}}%
\global\long\def\cpr#1#2{\Pr\left(#1\ |\ #2\right)}%
\global\long\def\var{\text{Var}}%
\global\long\def\Var#1{\text{Var}\left[#1\right]}%
\global\long\def\cov{\text{Cov}}%
\global\long\def\Cov#1{\cov\left[ #1 \right]}%
\global\long\def\COV#1#2{\underset{#2}{\cov}\left[ #1 \right]}%
\global\long\def\corr{\text{Corr}}%
\global\long\def\sst{\text{T}}%
\global\long\def\SST{\sst}%
\global\long\def\ess{\mathbb{E}}%

\global\long\def\Ess#1{\ess\left[#1\right]}%
\newcommandx\ESS[2][usedefault, addprefix=\global, 1=]{\underset{#2}{\ess}\left[#1\right]}%
\global\long\def\fisher{\mathcal{F}}%

\global\long\def\bfield{\mathcal{B}}%
\global\long\def\borel{\mathcal{B}}%
\global\long\def\bernpdf{\text{Bernoulli}}%
\global\long\def\betapdf{\text{Beta}}%
\global\long\def\dirpdf{\text{Dir}}%
\global\long\def\gammapdf{\text{Gamma}}%
\global\long\def\gaussden#1#2{\text{Normal}\left(#1, #2 \right) }%
\global\long\def\gauss{\mathbf{N}}%
\global\long\def\gausspdf#1#2#3{\text{Normal}\left( #1 \lcabra{#2, #3}\right) }%
\global\long\def\multpdf{\text{Mult}}%
\global\long\def\poiss{\text{Pois}}%
\global\long\def\poissonpdf{\text{Poisson}}%
\global\long\def\pgpdf{\text{PG}}%
\global\long\def\wshpdf{\text{Wish}}%
\global\long\def\iwshpdf{\text{InvWish}}%
\global\long\def\nwpdf{\text{NW}}%
\global\long\def\niwpdf{\text{NIW}}%
\global\long\def\studentpdf{\text{Student}}%
\global\long\def\unipdf{\text{Uni}}%
\global\long\def\transp#1{\transpose{#1}}%
\global\long\def\transpose#1{#1^{\mathsf{T}}}%
\global\long\def\mgt{\succ}%
\global\long\def\mge{\succeq}%
\global\long\def\idenmat{\mathbf{I}}%
\global\long\def\trace{\mathrm{tr}}%
\global\long\def\argmax#1{\underset{_{#1}}{\text{argmax}} }%
\global\long\def\argmin#1{\underset{_{#1}}{\text{argmin}\ } }%
\global\long\def\diag{\text{diag}}%
\global\long\def\norm{}%
\global\long\def\spn{\text{span}}%
\global\long\def\vtspace{\mathcal{V}}%
\global\long\def\field{\mathcal{F}}%
\global\long\def\ffield{\mathcal{F}}%
\global\long\def\inner#1#2{\left\langle #1,#2\right\rangle }%
\global\long\def\iprod#1#2{\inner{#1}{#2}}%
\global\long\def\dprod#1#2{#1 \cdot#2}%
\global\long\def\norm#1{\left\Vert #1\right\Vert }%
\global\long\def\entro{\mathbb{H}}%
\global\long\def\entropy{\mathbb{H}}%
\global\long\def\Entro#1{\entro\left[#1\right]}%
\global\long\def\Entropy#1{\Entro{#1}}%
\global\long\def\mutinfo{\mathbb{I}}%
\global\long\def\relH{\mathit{D}}%
\global\long\def\reldiv#1#2{\relH\left(#1||#2\right)}%
\global\long\def\KL{KL}%
\global\long\def\KLdiv#1#2{\KL\left(#1\parallel#2\right)}%
\global\long\def\KLdivergence#1#2{\KL\left(#1\ \parallel\ #2\right)}%
\global\long\def\crossH{\mathcal{C}}%
\global\long\def\crossentropy{\mathcal{C}}%
\global\long\def\crossHxy#1#2{\crossentropy\left(#1\parallel#2\right)}%
\global\long\def\breg{\text{BD}}%
\global\long\def\lcabra#1{\left|#1\right.}%
\global\long\def\lbra#1{\lcabra{#1}}%
\global\long\def\rcabra#1{\left.#1\right|}%
\global\long\def\rbra#1{\rcabra{#1}}%

\section{Introduction}

Although annotated data has been shown to be really precious and valuable
to deep learning (DL), in many real-world applications, annotating
a sufficient amount of data for training qualified DL models is prohibitively
labour-expensive, time-consuming, and error-prone. Transfer learning
is a vital solution for the lack of labeled data. Additionally, in
many situations, we are only able to collect limited number of annotated
data from multiple source domains. Therefore, it is desirable to train
a qualified DL model primarily based on multiple source domains and
possibly together with a target domain. Depending on the availability
of the target domain during training, we encounter either \emph{multiple
source domain adaptation} (MSDA) or \emph{domain generalization} (DG)
problem.

Domain adaptation (DA) is a specific case of MSDA when we need to
transfer from a single source domain to an another target domain available
during training. For the DA setting, the pioneering work \citep{DA_ben_david_2010_DAbound}
and other following work \citep{DA_redko_2017_theory_ot_for_da,DA_pmlr-zhang19_Jordan_bridging,DA_theo_zhao19a_on_learning_tradeoff,DA_johansson19a_support_invertibility,MDA_hoffman_nips2018_learn_ens,MSDA_Zhao_NEURIPS2018_adversarial_msda,le2021lamda}
are abundant to study its insightful characterizations and aspects,
notably what \emph{domain-invariant (}\textbf{\emph{DI}}\emph{) representations}
are, how to learn this kind of representations, and the trade-off
of enforcing learning DI representations. Those well-grounded studies
lay the foundation for developing impressive practical DA methods
\citep{DA_ganin_DANN_2016,DA_tzeng_cvpr2017_DANN_variants,DA_Saito_NEURIPS2020,DA_pmlr-zhang19_Jordan_bridging,DA_Cortes_JMLR_2019_gen_disc,DA_Yan_2020_mixup_train,tuan2021tidot}.

Due to the appearance of multiple source domains and possibly unavailability
of target domain, establishing theoretical foundation and characterizing
\emph{DI representations} for the MSDA and especially DG settings
are significantly more challenging. A number of state-of-the-art works
in MSDA and DG \citep{Zhao_Wang_Zhang_Gu_Li_Song_Xu_Hu_Chai_Keutzer_2020,MSDA_Peng_iccv2019_moment_matching,DG_huang2_eccv20_self_challenging,DG_dou_nips2019_model_agnostic_learning,DG_piratla_icml20a_pmlr_low_rank_decom,DG_Li_MMD_DANN_2018_CVPR,DG_Li_CDANN_2018_ECCV,DG_muandet_icml13_kernel,DG_ghifary_iccv2015,DG_Zhao_neurips2020_entropy_max,DG_Haohan_iclr_2019_Projecting_Superficial_Statistics,tuan2021most,nguyen2021stem}
have implicitly exploited and used DI representations in some sense
to achieve impressive performances, e.g., \citep{DG_Li_CDANN_2018_ECCV,DG_Li_CDANN_2018_ECCV,DG_dou_nips2019_model_agnostic_learning}
minimize representation divergence during training, \citep{DG_piratla_icml20a_pmlr_low_rank_decom}
decomposes representation to obtain invariant feature, \citep{DG_Zhao_neurips2020_entropy_max}
maximizes domain prediction entropy to learn invariance. Despite the
successes, the notion DI representations in MSDA and DG is not fully-understood,
and rigorous studies to theoretically characterize DI representations
for these settings are still very crucial and imminent. In this paper,
we provide theoretical answers to the following questions: (1) what
are DI representations in MSDA and DG, and (2) what one should expect
when learning them. Overall, our contributions in this work can be
summarized as: 
\begin{itemize}
\item In Section \ref{subsec:feature-definition}, we first develop two
upper-bounds for the general target loss in the MSDA and DG settings,
whose proofs can be found in Appendix A. We then base on these bounds
to characterize and define two kinds of DI representations: \emph{general
DI representations} and \emph{compressed DI representations}. 
\item We further develop theory to inspect the pros and cons of two kinds
of DI representations in Section \ref{subsec:Feature-Characteristics},
aiming to shed light on how to use those representations in practice.
Particularly, two types of DI representations optimize different types
of divergence on feature space, hence serving different purposes.
Appendix B contains proofs regarding these characteristics. 
\item Finally, we study in Section \ref{subsec:Trade-off} the trade-off
of two kinds of DI representations which theoretically answers the
question whether and how the target performance is hurt when enforcing
learning each kind of representation. We refer reader to Appendix
C for its proof. 
\item We conduct experiments to investigate the trade-off of two kinds of
representations for giving practical hints regarding how to use those
representations in practice as well as exploring other interesting
properties of our developed theory. 
\end{itemize}
It is worth noting that although MSDA has been investigated in some
works \citep{MDA_hoffman_nips2018_learn_ens,MSDA_Zhao_NEURIPS2018_adversarial_msda},
our proposed method is the first work that rigorously formulates and
provides theoretical analysis for representation learning in MSDA
and DG. Specifically, our bounds developed in Theorems \ref{thm:general_dom_inv}
and \ref{thm:compact_dom_inv} interweaving both input and latent
spaces are novel and benefit theoretical analyses in deep learning.
Our theory is developed in a general setting of multi-class classification
with a sufficiently general loss, which can be viewed as a non-trivial
generalization of existing works in DA \citep{DA_mansour_colt09_DAbound_genloss,DA_ben_david_2010_DAbound,DA_theo_zhao19a_on_learning_tradeoff,MDA_mansour_nips2009_ens_hypo,MSDA_Zhao_NEURIPS2018_adversarial_msda},
each of which considers binary classification with specific loss functions.
Moreover, our theoretical bounds developed in Theorem \ref{thm:trade_off_LowerBound}
is the first theoretical analysis of the trade-off of learning DI
representations in MSDA and DG. Particularly, by considering the MSDA
setting with single source and target domains, we achieve the same
trade-off nature discovered in \citep{DA_theo_zhao19a_on_learning_tradeoff},
but again our setting is more general than binary classification setting
in that work.

\section{Our Main Theory\label{sec:Main-Theory}}

\subsection{Notations}

Let $\mathcal{X}$ be a data space on which we endow a data distribution
$\mathbb{P}$ with a corresponding density function $p(x)$. We consider
the multi-class classification problem with the label set $\mathcal{Y}=\left[C\right]$,
where $C$ is the number of classes and $\left[C\right]:=\{1,\ldots,C\}$.
Denote $\mathcal{Y}_{\Delta}:=\left\{ \alpha\in\mathbb{R}^{C}:\norm{\alpha}_{1}=1\,\wedge\,\alpha\geq\bzero\right\} $
as the $C-$simplex label space, let $f:\mathcal{X}\mapsto\mathcal{Y}_{\Delta}$
be a probabilistic labeling function returning a $C$-tuple $f\left(x\right)=\left[f\left(x,i\right)\right]_{i=1}^{C}$,
whose element $f\left(x,i\right)=p\left(y=i\mid x\right)$ is the
probability to assign a data sample $x\sim\mathbb{P}$ to the class
$i$ (i.e., $i\in\left\{ 1,...,C\right\} $). Moreover, a domain is
denoted compactly as pair of data distribution and labeling function
$\mathbb{D}:=\left(\mathbb{P},f\right)$. We note that given a data
sample $x\sim\mathbb{P}$, its categorical label $y\in\mathcal{Y}$
is sampled as $y\sim Cat\left(f\left(x\right)\right)$ which a categorical
distribution over $f\left(x\right)\in\mathcal{Y}_{\Delta}$.

Let $l:\mathcal{Y}_{\Delta}\times\mathcal{Y}\mapsto\mathbb{R}$ be
a loss function, where $l\left(\hat{f}\left(x\right),y\right)$ with
$\hat{f}\left(x\right)\in\mathcal{Y}_{\simplex}$ and $y\in\mathcal{Y}$
specifies the loss (e.g., cross-entropy, Hinge, L1, or L2 loss) to
assign a data sample $x$ to the class $y$ by the hypothesis $\hat{f}$.
Moreover, given a prediction probability $\hat{f}\left(x\right)$
w.r.t. the ground-truth prediction $f\left(x\right)$, we define the
loss $\ell\left(\hat{f}\left(x\right),f\left(x\right)\right)=\mathbb{E}_{y\sim f\left(x\right)}\left[l\left(\hat{f}\left(x\right),y\right)\right]=\sum_{y=1}^{C}l\left(\hat{f}\left(x\right),y\right)f\left(x,y\right)$.
This means $\ell\left(\cdot,\cdot\right)$ is convex w.r.t. the second
argument. We further define the general loss caused by using a classifier
$\hat{f}:\mathcal{X}\mapsto\mathcal{Y}_{\Delta}$ to predict $\mathbb{D}\equiv\left(\mathbb{P},f\right)$
as 
\[
\mathcal{L}\left(\hat{f},f,\mathbb{P}\right)=\mathcal{L}\left(\hat{f},\mathbb{D}\right):=\mathbb{E}_{x\sim\mathbb{P}}\left[\ell\left(\hat{f}(x),f(x)\right)\right].
\]
We inspect the multiple source setting in which we are given multiple
source domains $\{\mathbb{D}^{S,i}\}_{i=1}^{K}$ over the common data
space $\mathcal{X}$, each of which consists of data distribution
and its own labeling function $\mathbb{D}^{S,i}:=\left(\mathbb{P}^{S,i},f^{S,i}\right)$.
Based on these source domains, we need to work out a learner or classifier
that requires to evaluate on a target domain $\mathbb{D}^{T}:=\left(\mathbb{P}^{T},f^{T}\right)$.
Depending on the \emph{knownness} or \emph{unknownness} of a target
domain during training, we experience either \emph{multiple source
DA} (MSDA) \citep{MDA_mansour_nips2009_ens_hypo,MDA_mansour_MDA_limited_target_pmlr-21,MDA_redko_aistat19_ot_for_MDA,MDA_hoffman_nips2018_learn_ens}
or \emph{domain generalization} (DG) \citep{DG_DaLi_MLDG_AAAI18,DG_Li_CDANN_2018_ECCV,DG_Li_MMD_DANN_2018_CVPR,DG_muandet_icml13_kernel}
setting.

One typical approach in MSDA and DG is to combine the source domains
together \citep{DA_ganin_DANN_2016,DG_Li_CDANN_2018_ECCV,DG_Li_MMD_DANN_2018_CVPR,DG_muandet_icml13_kernel,DG_albuquerque_2020_dist_matching,DG_dou_nips2019_model_agnostic_learning,DG_DGDANN_sicilia_2021}
to learn \emph{DI representations} with hope to generalize well to
a target domain. When combining the source domains, we obtain a mixture
of multiple source distributions denoted as $\mathbb{D}^{\pi}=\sum_{i=1}^{K}\pi_{i}\mathbb{D}^{S,i}$,
where the mixing coefficients $\pi=\left[\pi_{i}\right]_{i=1}^{K}$
can be conveniently set to $\pi_{i}=\frac{N_{i}}{\sum_{j=1}^{K}N_{j}}$
with $N_{i}$ being the training size of the $i-$th source domain.

In term of representation learning, input is mapped into a latent
space $\mathcal{Z}$ by a feature map $g:\mathcal{X}\mapsto\mathcal{Z}$
and then a classifier $\hat{h}:\mathcal{Z}\mapsto\mathcal{Y}_{\Delta}$
is trained based on the representations $g\left(\mathcal{X}\right)$.
Let $f:\mathcal{X}\mapsto\mathcal{Y}_{\Delta}$ be the original labeling
function. To facilitate the theory developed for latent space, we
introduce representation distribution being the pushed-forward distribution
$\mathbb{P}_{g}:=g_{\#}\mathbb{P}$, and the labeling function $h:\mathcal{Z}\mapsto\mathcal{Y}_{\Delta}$
induced by $g$ as $h(z)=\frac{\int_{g^{-1}\left(z\right)}f(x)p(x)dx}{\int_{g^{-1}\left(z\right)}p(x)dx}$
\citep{DA_johansson19a_support_invertibility}. Going back to our
multiple source setting, the source mixture becomes $\mathbb{D}_{g}^{\pi}=\sum_{i}\pi_{i}\mathbb{D}_{g}^{S,i}$,
where each source domain is $\mathbb{D}_{g}^{S,i}=\left(\mathbb{P}_{g}^{S,i},h^{S,i}\right)$,
and the target domain is $\mathbb{D}_{g}^{T}=\left(\mathbb{P}_{g}^{T},h^{T}\right)$.

Finally, in our theory development, we use Hellinger divergence between
two distributions defined as $\begin{aligned}D_{1/2}\left(\mathbb{P}^{1},\mathbb{P}^{2}\right)=2\int\left(\sqrt{p^{1}(x)}-\sqrt{p^{2}(x)}\right)^{2}dx\end{aligned}
$, whose squared $d_{1/2}=\sqrt{D_{1/2}}$ is a proper metric.

\subsection{Two Types of Domain-Invariant Representations\label{subsec:feature-definition}}

Hinted by the theoretical bounds developed in \citep{DA_ben_david_2010_DAbound},
\emph{DI representations learning}, in which feature extractor $g$
maps source and target data distributions to a common distribution
on the latent space, is well-grounded for the DA setting. However,
this task becomes significantly challenging for the MSDA and DG settings
due to multiple source domains and potential unknownness of target
domain. Similar to the case of DA, it is desirable to develop theoretical
bounds that directly motivate definitions of DI representations for
the MSDA and DG settings, as being done in the next theorem. 
\begin{thm}
\label{thm:general_dom_inv}Consider a mixture of source domains $\mathbb{D}^{\pi}=\sum_{i=1}^{K}\pi_{i}\mathbb{D}^{S,i}$
and the target domain $\mathbb{D}^{T}$. Let $\ell$ be any loss function
upper-bounded by a positive constant $L$. For any hypothesis $\hat{f}:\mathcal{X}\mapsto\mathcal{Y}_{\Delta}$
where $\hat{f}=\hat{h}\circ g$ with $g:\mathcal{X}\goto\mathcal{Z}$
and $\hat{h}:\mathcal{Z}\goto\mathcal{Y}_{\simplex}$, the target
loss on input space is upper bounded 
\begin{equation}
\begin{aligned}\mathcal{L}\left(\hat{f},\mathbb{D}^{T}\right)\leq\sum_{i=1}^{K}\pi_{i}\mathcal{L}\left(\hat{f},\mathbb{D}^{S,i}\right)+L\max_{i\in[K]}\mathbb{E}_{\mathbb{P}^{S,i}}\left[\|\Delta p^{i}(y|x)\|_{1}\right]+L\sqrt{2}\,d_{1/2}\left(\mathbb{P}_{g}^{T},\mathbb{P}_{g}^{\pi}\right)
\end{aligned}
\label{eq:general_DI_UpperBound}
\end{equation}
where $\Delta p^{i}(y|x):=\left[\left|f^{T}(x,y)-f^{S,i}(x,y)\right|\right]_{y=1}^{C}$
is the absolute of single point label shift on input space between
source domain $\mathbb{D}^{S,i}$, the target domain $\mathbb{D}^{T}$,
and $[K]:=\left\{ 1,2,...,K\right\} $. 
\end{thm}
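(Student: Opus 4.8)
The plan is to route the target loss through the latent space, insert the source mixture $\mathbb{D}^{\pi}$ as an intermediate domain, and then split the resulting error into a ``marginal'' piece (which the Hellinger term controls) and a ``labeling'' piece (which the single-point label shift controls). First I would record the change-of-variables identity for which the induced labeling is designed: since $\hat{f}=\hat{h}\circ g$, the integrand $\ell(\hat{h}(g(x)),f(x))$ depends on $x$ through $g(x)$ only in its first slot, while $\ell(\hat{h}(z),\cdot)=\sum_{y}l(\hat{h}(z),y)(\cdot)_{y}$ is linear in its second slot, so integrating over each fibre $g^{-1}(z)$ against the density gives, for any domain $\mathbb{D}=(\mathbb{P},f)$,
\[
\mathcal{L}(\hat{f},\mathbb{D})=\mathbb{E}_{x\sim\mathbb{P}}\big[\ell(\hat{h}(g(x)),f(x))\big]=\mathbb{E}_{z\sim\mathbb{P}_{g}}\big[\ell(\hat{h}(z),h(z))\big].
\]
Applying this to $\mathbb{D}^{T}$ and to the mixture domain $\mathbb{D}^{\pi}=(\mathbb{P}^{\pi},f^{\pi})$, whose labeling $f^{\pi}(x)=\big(\sum_{i}\pi_{i}p^{S,i}(x)f^{S,i}(x)\big)/\big(\sum_{i}\pi_{i}p^{S,i}(x)\big)$ is a pointwise convex combination of the $f^{S,i}$, together with the elementary identity $\mathcal{L}(\hat{f},\mathbb{D}^{\pi})=\sum_{i}\pi_{i}\mathcal{L}(\hat{f},\mathbb{D}^{S,i})$ (expand $\ell$ and use $f^{\pi}p^{\pi}=\sum_{i}\pi_{i}f^{S,i}p^{S,i}$), reduces the claim to bounding $\mathbb{E}_{z\sim\mathbb{P}_{g}^{T}}[\ell(\hat{h}(z),h^{T}(z))]-\mathbb{E}_{z\sim\mathbb{P}_{g}^{\pi}}[\ell(\hat{h}(z),h^{\pi}(z))]$.

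Next I would insert the hinge $\mathbb{E}_{z\sim\mathbb{P}_{g}^{\pi}}[\ell(\hat{h}(z),h^{T}(z))]$ and estimate the two resulting differences separately. The marginal difference equals $\int\ell(\hat{h}(z),h^{T}(z))(p_{g}^{T}(z)-p_{g}^{\pi}(z))\,dz$, which is at most $L\int|p_{g}^{T}-p_{g}^{\pi}|$ since $0\le\ell\le L$; Cauchy--Schwarz applied to $p_{g}^{T}-p_{g}^{\pi}=(\sqrt{p_{g}^{T}}-\sqrt{p_{g}^{\pi}})(\sqrt{p_{g}^{T}}+\sqrt{p_{g}^{\pi}})$, together with $\int(\sqrt{p_{g}^{T}}+\sqrt{p_{g}^{\pi}})^{2}\le4$ and $\int(\sqrt{p_{g}^{T}}-\sqrt{p_{g}^{\pi}})^{2}=\tfrac{1}{2}D_{1/2}(\mathbb{P}_{g}^{T},\mathbb{P}_{g}^{\pi})$, then yields $L\int|p_{g}^{T}-p_{g}^{\pi}|\le L\sqrt{2}\,d_{1/2}(\mathbb{P}_{g}^{T},\mathbb{P}_{g}^{\pi})$, the third term. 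The labeling difference is $\mathbb{E}_{z\sim\mathbb{P}_{g}^{\pi}}[\ell(\hat{h}(z),h^{T}(z))-\ell(\hat{h}(z),h^{\pi}(z))]$; by linearity of $\ell$ in the second argument and $l\le L$, $\ell(\hat{h}(z),h^{T}(z))-\ell(\hat{h}(z),h^{\pi}(z))\le L\,\|h^{T}(z)-h^{\pi}(z)\|_{1}$, so it remains to bound $\mathbb{E}_{z\sim\mathbb{P}_{g}^{\pi}}\|h^{T}(z)-h^{\pi}(z)\|_{1}$.

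This last estimate is where the real work is. Writing $h^{\pi}(z)=\sum_{i}\tfrac{\pi_{i}p_{g}^{S,i}(z)}{p_{g}^{\pi}(z)}h^{S,i}(z)$ as a $\mathbb{P}_{g}^{\pi}$-convex combination and pulling the weights out of the expectation gives $\mathbb{E}_{z\sim\mathbb{P}_{g}^{\pi}}\|h^{T}(z)-h^{\pi}(z)\|_{1}\le\sum_{i}\pi_{i}\,\mathbb{E}_{z\sim\mathbb{P}_{g}^{S,i}}\|h^{T}(z)-h^{S,i}(z)\|_{1}$; then for each $i$ I would pass from the latent labeling mismatch back to the input-space label shift. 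Since $h^{S,i}(z)$ is the conditional average of $f^{S,i}$ over the fibre $g^{-1}(z)$, Jensen's inequality for $\|\cdot\|_{1}$ gives $\mathbb{E}_{z\sim\mathbb{P}_{g}^{S,i}}\|h^{T}(z)-h^{S,i}(z)\|_{1}\le\mathbb{E}_{x\sim\mathbb{P}^{S,i}}\|f^{T}(x)-f^{S,i}(x)\|_{1}=\mathbb{E}_{\mathbb{P}^{S,i}}\big[\|\Delta p^{i}(y|x)\|_{1}\big]$, and $\sum_{i}\pi_{i}(\cdot)\le\max_{i\in[K]}(\cdot)$ then produces the second term; collecting the three bounds gives \eqref{eq:general_DI_UpperBound}. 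I expect this fibrewise passage to be the delicate step: one must track that the conditional averaging defining $h^{T}$ and $h^{S,i}$ is taken against the appropriate domain density, so that $h^{S,i}$ and $f^{S,i}$ (resp.\ $h^{T}$ and $f^{T}$) are coupled on the same fibre measure before Jensen is applied --- which is exactly where the particular form $h(z)=\int_{g^{-1}(z)}f(x)p(x)\,dx\big/\int_{g^{-1}(z)}p(x)\,dx$ of the induced labeling is used.
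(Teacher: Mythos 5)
Your proposal is correct and follows essentially the same route as the paper's proof: the change-of-variables identity is the paper's Proposition 9/Corollary 10, the hinge at $\big(\mathbb{P}_{g}^{\pi},h^{T}\big)$ is the paper's ``hybrid domain,'' the fibrewise Jensen step reproduces Proposition 9(ii) with $c\equiv 1$, and the Cauchy--Schwarz estimate of $L\int|p_{g}^{T}-p_{g}^{\pi}|$ by $L\sqrt{2}\,d_{1/2}$ is identical. The only cosmetic difference is that you pass through the mixture labeling $h^{\pi}$ and then expand it as a convex combination of the $h^{S,i}$, whereas the paper splits the mixture density first and compares against each $h^{S,i}$ directly; the delicate fibre-measure coupling you flag (which induced labeling of $f^{T}$ is averaged against $p^{S,i}$) is present, and handled in the same implicit way, in the paper's own application of Proposition 9(ii).
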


The bound in Equation \ref{eq:general_DI_UpperBound} implies that
the target loss in the input or latent space depends on three terms:
(i) \emph{representation discrepancy}: $d_{1/2}\left(\mathbb{P}_{g}^{T},\mathbb{P}_{g}^{\pi}\right)$,
(ii) \emph{the label shift}: $\max_{i\in[K]}\mathbb{E}_{\mathbb{P}^{S,i}}\left[\|\Delta p^{i}(y|x)\|_{1}\right]$,
and (iii) \emph{the general source loss}: $\sum_{i=1}^{K}\pi_{i}\mathcal{L}\left(\hat{f},\mathbb{D}^{S,i}\right)$.
To minimize the target loss in the left side, we need to minimize
the three aforementioned terms. First, the \emph{label shift} term
is a natural characteristics of domains, hence almost impossible to
tackle. Secondly, the \emph{representation discrepancy} term can be
explicitly tackled for the MSDA setting, while almost impossible for
the DG setting. Finally, the\emph{ general source loss} term is convenient
to tackle, where its minimization results in a feature extractor $g$
and a classifier $\hat{h}$.

Contrary to previous works in DA and MSDA \citep{DA_ben_david_2010_DAbound,MDA_mansour_nips2009_ens_hypo,MSDA_Zhao_NEURIPS2018_adversarial_msda,DA_Cortes_JMLR_2019_gen_disc}
that consider both losses and data discrepancy on data space, our
bound connects losses on data space to discrepancy on representation
space. Therefore, our theory provides a natural way to analyse representation
learning, especially feature alignment in deep learning practice.
Note that although DANN \citep{DA_ganin_DANN_2016} explains their
feature alignment method using theory developed by Ben-david et al.
\citep{DA_ben_david_2010_DAbound}, it is not rigorous. In particular,
while application of the theory to representation space yield a representation
discrepancy term, the loss terms are also on that feature space, and
hence minimizing these losses is not the learning goal. Finally, our
setting is much more general, which extends to multilabel, stochastic
labeling setting, and any bounded loss function. 

From the upper bound, we turn to first type of DI representations
for the MSDA and DG settings. Here, the objective of $g$ is to map
samples onto representation space in a way that the common classifier
$\hat{h}$ can effectively and correctly classifies them, regardless
of which domain the data comes from.
\begin{defn}
\textit{\label{def:general_DI_rep}Consider a class $\mathcal{G}$
of feature maps and a class $\mathcal{H}$ of hypotheses. Let $\{\mathbb{D}^{S,i}\}_{i=1}^{K}$
be a set of source domains. }

\textit{i) (}\textbf{\textit{DG with unknown target data}}\textit{)
A feature map $g^{*}\in\mathcal{G}$ is said to be a }\textbf{\textit{DG}}\textit{
general domain-invariant (}\textbf{\textit{DI}}\textit{) feature map
if $g^{*}$ is the solution of the optimization problem (OP): $\min_{g\in\mathcal{G}}\min_{\hat{h}\in\mathcal{H}}\sum_{i=1}^{K}\pi_{i}\mathcal{L}\left(\hat{h},\mathbb{D}_{g}^{S,i}\right)$.
Moreover, the latent representations $z=g^{*}\left(x\right)$ induced
by $g^{*}$ is called general DI representations for the DG setting. }

\emph{ii)} \textit{(}\textbf{\textit{MSDA with known target data}}\textit{)
A feature map $g^{*}\in\mathcal{G}$ is said to be a }\textbf{\textit{MSDA}}\textit{
general DI feature map if $g^{*}$ is the solution of the optimization
problem (OP): $\min_{g\in\mathcal{G}}\min_{\hat{h}\in\mathcal{H}}\sum_{i=1}^{K}\pi_{i}\mathcal{L}\left(\hat{h},\mathbb{D}_{g}^{S,i}\right)$
which satisfies $\mathbb{P}_{g^{*}}^{T}=\mathbb{P}_{g^{*}}^{\pi}$
(i.e., $\min_{g\in\mathcal{G}}d_{1/2}\left(\mathbb{P}_{g}^{T},\mathbb{P}_{g}^{\pi}\right)$).
Moreover, the latent representations $z=g^{*}\left(x\right)$ induced
by $g^{*}$ is called general DI representations for the MSDA setting.} 
\end{defn}

The definition of general DI representations for the DG setting in
Definition \ref{def:general_DI_rep} is transparent in light of Theorem
\ref{thm:general_dom_inv}, wherein we aim to find $g$ and $\hat{h}$
to minimize the general source loss $\sum_{i=1}^{K}\pi_{i}\mathcal{L}\left(\hat{f},\mathbb{D}^{S,i}\right)$
due to the unknownness of $\mathbb{P}^{T}$. Meanwhile, regarding
the general DI representations for the MSDA setting, we aim to find
$g^{*}$ satisfying $\mathbb{P}_{g^{*}}^{T}=\mathbb{P}_{g^{*}}^{\pi}$
and $\hat{h}$ to minimize the general source loss $\sum_{i=1}^{K}\pi_{i}\mathcal{L}\left(\hat{f},\mathbb{D}^{S,i}\right)$.
Practically, to find general representations for MSDA, we solve 
\[
\min_{g\in\mathcal{G}}\min_{\hat{h}\in\mathcal{H}}\left\{ \sum_{i=1}^{K}\pi_{i}\mathcal{L}\left(\hat{h},\mathbb{D}_{g}^{S,i}\right)+\lambda D\left(\mathbb{P}_{g}^{T},\mathbb{P}_{g}^{\pi}\right)\right\} ,
\]
where $\lambda>0$ is a trade-off parameter and $D$ can be any divergence
(e.g., Jensen Shannon (JS) divergence \citep{GAN_Goodfellow}, $f$-divergence
\citep{xuanlong_f_div_NIPS2007}, MMD distance \citep{DG_Li_MMD_DANN_2018_CVPR},
or WS distance \citep{MDA_redko_aistat19_ot_for_MDA}).

In addition, the \textit{\emph{general DI representations}} have been
exploited in some works \citep{DG_zhou_eccv2020_generate_domains,DG_Li_2019_episodic_AGG}
from the practical perspective and previously discussed in \citep{OOD_IRM_arjovsky_2020}
from the theoretical perspective. Despite the similar definition to
our work, general DI representation in \citep{OOD_IRM_arjovsky_2020}
is not motivated from minimization of a target loss bound.

As pointed out in the next section, learning general DI representations
increases the span of latent representations $g\left(x\right)$ on
the latent space which might help to reduce $d_{1/2}\left(\mathbb{P}_{g}^{T},\mathbb{P}_{g}^{\pi}\right)$
for a general target domain. On the other hand, many other works \citep{DG_Li_MMD_DANN_2018_CVPR,DG_Li_CDANN_2018_ECCV,DG_ghifary_iccv2015,DG_Haohan_iclr_2019_Projecting_Superficial_Statistics,DG_dou_nips2019_model_agnostic_learning}
have also explored the possibility of enhancing generalization by
finding common representation among source distributions. The following
theorem motivates the latter, which is the second kind of DI representations. 
\begin{thm}
\label{thm:compact_dom_inv}Consider a mixture of source domains $\mathbb{D}^{\pi}=\sum_{i=1}^{K}\pi_{i}\mathbb{D}^{S,i}$
and the target domain $\mathbb{D}^{T}$. Let $\ell$ be any loss function
upper-bounded by a positive constant $L$. For any hypothesis $\hat{f}:\mathcal{X}\mapsto\mathcal{Y}_{\Delta}$
where $\hat{f}=\hat{h}\circ g$ with $g:\mathcal{X}\goto\mathcal{Z}$
and $\hat{h}:\mathcal{Z}\goto\mathcal{Y}_{\simplex}$, the target
loss on input space is upper bounded{\footnotesize{}{} 
\begin{equation}
\begin{aligned}\mathcal{L}\left(\hat{f},\mathbb{D}^{T}\right) & \leq\sum_{i=1}^{K}\pi_{i}\mathcal{L}\left(\hat{f},\mathbb{D}^{S,i}\right)+L\max_{i\in[K]}\mathbb{E}_{\mathbb{P}^{S,i}}\left[\|\Delta p^{i}(y|x)\|_{1}\right]\\
 & +\sum_{i=1}^{K}\sum_{j=1}^{K}\frac{L\sqrt{2\pi_{j}}}{K}d_{1/2}\left(\mathbb{P}_{g}^{T},\mathbb{P}_{g}^{S,i}\right)+\sum_{i=1}^{K}\sum_{j=1}^{K}\,\frac{L\sqrt{2\pi_{j}}}{K}d_{1/2}\left(\mathbb{P}_{g}^{S,i},\mathbb{P}_{g}^{S,j}\right).
\end{aligned}
\label{eq:compact_DI_UpperBound}
\end{equation}
}{\footnotesize\par}

\end{thm}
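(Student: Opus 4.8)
The plan is to derive \eqref{eq:compact_DI_UpperBound} from Theorem~\ref{thm:general_dom_inv} by replacing the single Hellinger term $L\sqrt{2}\,d_{1/2}(\mathbb{P}_{g}^{T},\mathbb{P}_{g}^{\pi})$ in \eqref{eq:general_DI_UpperBound} with the two double sums in \eqref{eq:compact_DI_UpperBound}. Since the source-loss and label-shift terms are identical in the two bounds, and recalling that $\mathbb{P}_{g}^{\pi}=\sum_{j=1}^{K}\pi_{j}\mathbb{P}_{g}^{S,j}$, it suffices to prove the purely metric inequality
\[
d_{1/2}\!\left(\mathbb{P}_{g}^{T},\mathbb{P}_{g}^{\pi}\right)\;\le\;\frac{1}{K}\sum_{i=1}^{K}\sum_{j=1}^{K}\sqrt{\pi_{j}}\,d_{1/2}\!\left(\mathbb{P}_{g}^{T},\mathbb{P}_{g}^{S,i}\right)+\frac{1}{K}\sum_{i=1}^{K}\sum_{j=1}^{K}\sqrt{\pi_{j}}\,d_{1/2}\!\left(\mathbb{P}_{g}^{S,i},\mathbb{P}_{g}^{S,j}\right),
\]
and then multiply through by $L\sqrt{2}$ and substitute into \eqref{eq:general_DI_UpperBound}.

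First I would control the distance from the target representation distribution to the \emph{source mixture}. Writing $D_{1/2}(\mathbb{P},\mathbb{Q})=4-4\int\sqrt{p\,q}\,dx$ for probability densities $p,q$ and using concavity of $t\mapsto\sqrt{t}$ (Jensen with weights $\pi_{j}$), the Bhattacharyya affinity $\int\sqrt{p\,q}\,dx$ is concave in its second argument, so $D_{1/2}(\mathbb{P}_{g}^{T},\mathbb{P}_{g}^{\pi})\le\sum_{j=1}^{K}\pi_{j}\,D_{1/2}(\mathbb{P}_{g}^{T},\mathbb{P}_{g}^{S,j})$. Taking square roots and then applying $\norm{v}_{2}\le\norm{v}_{1}$ to the nonnegative vector $v=\big(\sqrt{\pi_{j}}\,d_{1/2}(\mathbb{P}_{g}^{T},\mathbb{P}_{g}^{S,j})\big)_{j=1}^{K}$ gives
\[
d_{1/2}\!\left(\mathbb{P}_{g}^{T},\mathbb{P}_{g}^{\pi}\right)\le\sqrt{\sum_{j=1}^{K}\pi_{j}\,d_{1/2}\!\left(\mathbb{P}_{g}^{T},\mathbb{P}_{g}^{S,j}\right)^{2}}\le\sum_{j=1}^{K}\sqrt{\pi_{j}}\,d_{1/2}\!\left(\mathbb{P}_{g}^{T},\mathbb{P}_{g}^{S,j}\right).
\]
It is precisely this $\ell_{2}\le\ell_{1}$ passage that turns the mixing weights $\pi_{j}$ into $\sqrt{\pi_{j}}$ and fixes the constants appearing in \eqref{eq:compact_DI_UpperBound}.

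Next I would route each $d_{1/2}(\mathbb{P}_{g}^{T},\mathbb{P}_{g}^{S,j})$ through an arbitrary intermediate source. Because $d_{1/2}$ is a metric (as stated in the Notations), for every $i\in[K]$ we have $d_{1/2}(\mathbb{P}_{g}^{T},\mathbb{P}_{g}^{S,j})\le d_{1/2}(\mathbb{P}_{g}^{T},\mathbb{P}_{g}^{S,i})+d_{1/2}(\mathbb{P}_{g}^{S,i},\mathbb{P}_{g}^{S,j})$; averaging over $i\in[K]$ and inserting the result into the previous display yields exactly the metric inequality above. Multiplying by $L\sqrt{2}$ (which carries the $\sqrt{2}$ into each $\sqrt{2\pi_{j}}$) and plugging into the bound of Theorem~\ref{thm:general_dom_inv} then gives \eqref{eq:compact_DI_UpperBound}.

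The only substantive ingredient is the mixture step: joint convexity of the squared Hellinger divergence, equivalently concavity of the Bhattacharyya affinity, in one argument, together with the $\ell_{2}\le\ell_{1}$ bound on the weight vector. Everything else — the triangle inequality for $d_{1/2}$, the averaging over the $K$ sources, and reindexing the double sums — is bookkeeping, and the bounded-loss hypothesis and the constant $L\sqrt{2}$ are inherited unchanged from Theorem~\ref{thm:general_dom_inv}, so no new assumption on $\ell$ is required.
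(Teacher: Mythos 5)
Your proposal is correct and follows essentially the same route as the paper's proof: start from Theorem \ref{thm:general_dom_inv}, bound $d_{1/2}\left(\mathbb{P}_{g}^{T},\mathbb{P}_{g}^{\pi}\right)\leq\sum_{j}\sqrt{\pi_{j}}\,d_{1/2}\left(\mathbb{P}_{g}^{T},\mathbb{P}_{g}^{S,j}\right)$ (the paper's Lemma \ref{lem:decompose-mixture-distance}), then apply the triangle inequality for $d_{1/2}$ and average over the $K$ sources. The only difference is cosmetic: you justify the mixture-decomposition step via concavity of the Bhattacharyya affinity (Jensen) followed by $\ell_{2}\leq\ell_{1}$, whereas the paper obtains the same pointwise inequality by Cauchy--Schwarz on the densities followed by subadditivity of the square root.
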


Evidently, Theorem \ref{thm:compact_dom_inv} suggests another kind
of representation learning, where source representation distributions
are aligned in order to lower the target loss's bound as concretely
defined as follows. 
\begin{defn}
\label{def:compress_DI_rep}Consider a class $\mathcal{G}$ of feature
maps and a class $\mathcal{H}$ of hypotheses. Let $\left\{ \mathbb{D}^{S,i}\right\} _{i=1}^{K}$
be a set of source domains.

\textit{i) (}\textbf{\textit{DG with unknown target data}}\textit{)
A feature map $g^{*}\in\mathcal{G}$ is a }\textbf{\textit{DG}}\textit{
compressed DI representations for source domains $\{\mathbb{D}^{S,i}\}_{i=1}^{K}$
if $g^{*}$ is the solution of the optimization problem (OP): $\min_{g\in\mathcal{G}}\text{\ensuremath{\min}}_{\hat{h}\in\mathcal{H}}\sum_{i=1}^{K}\pi_{i}\mathcal{L}\left(\hat{h},\mathbb{D}_{g}^{S,i}\right)$
which satisfies $\mathbb{P}_{g^{*}}^{S,1}=\mathbb{P}_{g^{*}}^{S,2}=\ldots=\mathbb{P}_{g^{*}}^{S,K}$
(i.e., the pushed forward distributions of all }\textbf{\textit{source}}\textit{
domains are identical). The latent representations $z=g^{*}\left(x\right)$
is then called compressed DI representations for the DG setting.}

\textit{ii) (}\textbf{\textit{MSDA with known target data}}\textit{)
A feature map $g^{*}\in\mathcal{G}$ is an }\textbf{\textit{MSDA}}\textit{
compressed DI representations for source domains $\{\mathbb{D}^{S,i}\}_{i=1}^{K}$
if $g^{*}$ is the solution of the optimization problem (OP): $\min_{g\in\mathcal{G}}\text{\ensuremath{\min}}_{\hat{h}\in\mathcal{H}}\sum_{i=1}^{K}\pi_{i}\mathcal{L}\left(\hat{h},\mathbb{D}_{g}^{S,i}\right)$
which satisfies $\mathbb{P}_{g^{*}}^{S,1}=\mathbb{P}_{g^{*}}^{S,2}=\ldots=\mathbb{P}_{g^{*}}^{S,K}=\mathbb{P}_{g^{*}}^{T}$
(i.e., the pushed forward distributions of all }\textbf{\textit{source}}\textit{
and }\textbf{\textit{target}}\textit{ domains are identical). The
latent representations $z=g^{*}\left(x\right)$ is then called compressed
DI representations for the MSDA setting.} 
\end{defn}

\subsection{Characteristics of Domain-Invariant Representations\label{subsec:Feature-Characteristics}}

In the previous section, two kinds of DI representations are introduced,
where each of them originates from minimization of different terms
in the upper bound of target loss. In what follows, we examine and
discuss their benefits and drawbacks. We start with the novel development
of \emph{hypothesis-aware divergence for multiple distributions },
which is necessary for our theory developed later.

\subsubsection{Hypothesis-Aware Divergence for Multiple Distributions \label{subsec:multi_div}}

Let consider multiple distributions $\mathbb{Q}_{1},...,\mathbb{Q}_{C}$
on the same space, whose density functions are $q_{1},...,q_{C}$.
We are given a mixture of these distributions $\mathbb{Q}^{\alpha}=\sum_{i=1}^{C}\alpha_{i}\mathbb{Q}_{i}$
with a mixing coefficient $\alpha\in\mathcal{Y}_{\simplex}$ and desire
to measure a divergence between $\mathbb{Q}_{1},...,\mathbb{Q}_{C}$.
One possible solution is employing a hypothesis from an infinite capacity
hypothesis class $\mathcal{H}$ to identify which distribution the
data come from. In particular, given $z\sim\mathbb{Q}^{\alpha}$,
the function $\hat{h}\left(z,i\right),i\in[C]$ outputs the probability
that $x\sim\mathbb{Q}_{i}$. Note that we use the notation $\hat{h}$
in this particular subsection to denote a domain classifier, while
everywhere else in the paper $\hat{h}$ denotes label classifier.
Let $l\left(\hat{h}\left(z\right),i\right)\in\mathbb{R}$ be the loss
when classifying $z$ using $\hat{h}$ provided the ground-truth label
$i\in\left[C\right]$. Our motivation is that if the distributions
$\mathbb{Q}_{1},...,\mathbb{Q}_{C}$ are distant, it is easier to
distinguish samples from them, hence the minimum classification loss
$\min_{\hat{h}\in\mathcal{H}}\mathcal{L}_{\mathbb{Q}_{1:C}}^{\alpha}\left(\hat{h}\right)$
is much lower than in the case of clutching $\mathbb{Q}_{1},...,\mathbb{Q}_{C}$.
Therefore, we develop the following theorem to connect the loss optimization
problem with an f-divergence among $\mathbb{Q}_{1},...,\mathbb{Q}_{C}$.
More discussions about hypothesis-aware divergence can be found in
Appendix B of this paper and in \citep{duchi2017_clf_and_divergence}. 
\begin{thm}
\label{thm:op_distance} Assuming the hypothesis class $\mathcal{H}$
has infinite capacity, we define the hypothesis-aware divergence for
multiple distributions as 
\begin{equation}
D^{\alpha}\left(\mathbb{Q}_{1},...,\mathbb{Q}_{C}\right)=-\min_{\hat{h}\in\mathcal{H}}\mathcal{L}_{\mathbb{Q}_{1:C}}^{\alpha}\left(\hat{h}\right)+\mathcal{C}_{l,\alpha},\label{eq:loss_distance_connection}
\end{equation}
where $\mathcal{C}_{l,\alpha}$ depends only on the form of loss function
$l$ and value of $\alpha$. This divergence is a proper f-divergence
among $\mathbb{Q}_{1},...,\mathbb{Q}_{C}$ in the sense that $D^{\alpha}\left(\mathbb{Q}_{1},...,\mathbb{Q}_{C}\right)\geq0,\forall\mathbb{Q}_{1},...,\mathbb{Q}_{C}$
and $\alpha\in\mathcal{Y}_{\simplex}$, and $D^{\alpha}\left(\mathbb{Q}_{1},...,\mathbb{Q}_{C}\right)=0$
if $\mathbb{Q}_{1}=...=\mathbb{Q}_{C}$. 
\end{thm}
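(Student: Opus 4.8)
The plan is to reduce the minimization over the infinite-capacity domain-classifier class $\mathcal{H}$ to a \emph{pointwise} optimization, recognize the optimized quantity as the $\mathbb{Q}^{\alpha}$-expectation of a concave Bayes-risk functional evaluated at the posterior, choose the additive constant $\mathcal{C}_{l,\alpha}$ so that the expression telescopes, and then read off nonnegativity from Jensen's inequality.

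First I would unfold the objective. Reading the domain-classification risk as $\mathcal{L}^{\alpha}_{\mathbb{Q}_{1:C}}(\hat h)=\sum_{i=1}^{C}\alpha_i\,\mathbb{E}_{z\sim\mathbb{Q}_i}\big[l(\hat h(z),i)\big]=\int\big(\sum_{i=1}^{C}\alpha_i q_i(z)\,l(\hat h(z),i)\big)\,dz$, I introduce the mixture density $q^{\alpha}(z)=\sum_{j=1}^{C}\alpha_j q_j(z)$ and the posterior vector $\beta(z)\in\mathcal{Y}_{\Delta}$ with $\beta_i(z)=\alpha_i q_i(z)/q^{\alpha}(z)$, so that, in the paper's notation $\ell(u,\gamma)=\sum_y l(u,y)\gamma(y)$, the risk equals $\mathbb{E}_{z\sim\mathbb{Q}^{\alpha}}\big[\ell(\hat h(z),\beta(z))\big]$. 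Because $\mathcal{H}$ has infinite capacity, $\hat h(z)$ may be chosen independently at (almost) every $z$, hence $\min_{\hat h\in\mathcal{H}}\mathcal{L}^{\alpha}_{\mathbb{Q}_{1:C}}(\hat h)=\int q^{\alpha}(z)\,H_l(\beta(z))\,dz$, where $H_l(\gamma):=\min_{u\in\mathcal{Y}_{\Delta}}\ell(u,\gamma)$ is the generalized Bayes risk. Since $\gamma\mapsto\ell(u,\gamma)$ is linear, $H_l$ is an infimum of linear functions and therefore concave on the simplex.

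Next I would fix $\mathcal{C}_{l,\alpha}:=H_l(\alpha)$ — which indeed depends only on $l$ and on $\alpha$ — and use $\int q^{\alpha}(z)\,dz=1$ to rewrite $D^{\alpha}(\mathbb{Q}_1,\dots,\mathbb{Q}_C)=\int q^{\alpha}(z)\big[H_l(\alpha)-H_l(\beta(z))\big]\,dz$. The key elementary identity is the barycenter relation $\mathbb{E}_{z\sim\mathbb{Q}^{\alpha}}[\beta(z)]=\alpha$, since $\int q^{\alpha}(z)\beta_i(z)\,dz=\int\alpha_i q_i(z)\,dz=\alpha_i$. Setting $\phi:=H_l(\alpha)-H_l(\cdot)$, which is convex on $\mathcal{Y}_{\Delta}$ with $\phi(\alpha)=0$, Jensen's inequality gives $D^{\alpha}=\mathbb{E}_{z\sim\mathbb{Q}^{\alpha}}[\phi(\beta(z))]\ge\phi(\mathbb{E}_{z\sim\mathbb{Q}^{\alpha}}[\beta(z)])=\phi(\alpha)=0$; and if $\mathbb{Q}_1=\dots=\mathbb{Q}_C$ then $q^{\alpha}\equiv q_i$ for each $i$, so $\beta(z)\equiv\alpha$ and the integrand vanishes, giving $D^{\alpha}=0$. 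The representation $D^{\alpha}=\int q^{\alpha}(z)\,\phi\big(\alpha_1 q_1(z)/q^{\alpha}(z),\dots,\alpha_C q_C(z)/q^{\alpha}(z)\big)\,dz$ with convex $\phi$ anchored at $\alpha$ is exactly the generalized f-divergence template for the family $\mathbb{Q}_1,\dots,\mathbb{Q}_C$ (one can equivalently pass to the positively homogeneous degree-one extension of $H_l$, subtracting a supporting linear functional at $\alpha$, to match the textbook form verbatim).

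The hard part will be making rigorous the step where I swap $\min_{\hat h}$ with $\int$: the inequality $\ge$ is immediate by minimizing under the integral, but the matching $\le$ uses the infinite-capacity hypothesis and requires a measurable-selection argument producing an $\hat h\in\mathcal{H}$ whose value at each $z$ (nearly) attains $H_l(\beta(z))$, together with attainment of the pointwise minimum (compactness of $\mathcal{Y}_{\Delta}$ and lower semicontinuity of $u\mapsto\ell(u,\gamma)$; absent continuity, an $\varepsilon$-approximation of the pointwise minimizer). The concavity of $H_l$, the barycenter identity $\mathbb{E}_{\mathbb{Q}^{\alpha}}[\beta]=\alpha$, and the concluding Jensen step are all routine.
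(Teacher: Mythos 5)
Your proposal is correct and follows essentially the same route as the paper's proof: rewrite the risk as an expectation under the mixture $\mathbb{Q}^{\alpha}$, use infinite capacity to swap the minimization with the integral, recognize the pointwise minimum as a concave Bayes-risk functional (the paper's $\phi$, your $H_l$), take $\mathcal{C}_{l,\alpha}=\inf_{\beta\in\mathcal{Y}_{\Delta}}\sum_{i}l\left(\beta,i\right)\alpha_{i}=H_{l}\left(\alpha\right)$, and obtain nonnegativity from Jensen's inequality, your normalized-posterior/barycenter formulation being just a reparametrization of the paper's use of the ratios $q_{i}/q^{\alpha}$. Your direct verification that $\mathbb{Q}_{1}=\dots=\mathbb{Q}_{C}$ forces $\beta\left(z\right)\equiv\alpha$ and hence $D^{\alpha}=0$ gives exactly the stated direction of the equality claim (the paper instead sketches the converse), and your flagged measurable-selection caveat is a point the paper also passes over implicitly.
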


\subsubsection{General Domain-Invariant Representation \label{subsec:General_features}}

As previously defined, the general DI feature map $g^{*}$ (cf. Definition
\ref{def:general_DI_rep}) is the one that minimizes the total source
loss 
\begin{equation}
\begin{aligned}g^{*}=\arg\min_{g\in\mathcal{G}}\min_{\hat{h}\in\mathcal{H}}\sum_{i=1}^{K}\pi_{i}\mathcal{L}\left(\hat{h},h^{S,i},\mathbb{P}_{g}^{i}\right)=\text{argmin}_{g\in\mathcal{G}}\text{min}_{\hat{h}\in\mathcal{H}}\sum_{i=1}^{K}\pi_{i}\mathcal{L}\left(\hat{h},\mathbb{D}_{g}^{S,i}\right)\end{aligned}
.\label{eq:optimal_g_general_feature}
\end{equation}
From the result of Theorem \ref{thm:op_distance}, we expect that
the minimal loss $\min_{\hat{h}\in\mathcal{H}}\mathcal{L}\left(\hat{h},h^{S,i},\mathbb{P}_{g}^{S,i}\right)$
should be inversely proportional to the divergence between the class-conditionals
$\mathbb{P}_{g}^{S,i,c}$. To generalize this result to the multi-source
setting, we further define $\mathbb{Q}_{g}^{S,c}:=\sum_{i=1}^{K}\frac{\pi_{i}\gamma_{i,c}}{\alpha_{c}}\mathbb{P}_{g}^{S,i,c}$
as the mixture of the class $c$ conditional distributions of the
source domains on the latent space, where $\alpha_{c}=\sum_{j=1}^{K}\pi_{j}\gamma_{j,c}$
are the mixing coefficients, and $\gamma_{i,c}=\mathbb{P}^{S,i}\left(y=c\right)$
are label marginals. Then, the inner loop of the objective function
in Eq. \ref{eq:optimal_g_general_feature} can be viewed as training
the optimal hypothesis $\hat{h}\in\mathcal{H}$ to distinguish the
samples from $\mathbb{Q}_{g}^{S,c},c\in\left[C\right]$ for a given
feature map $g$. Therefore, by linking to the multi-divergence concept
developed in Section \ref{subsec:multi_div}, we achieve the following
theorem. 
\begin{thm}
\label{thm:max_distance_g}Assume that $\mathcal{H}$ has infinite
capacity, we have the following statements.

1. $D^{\alpha}\left(\mathbb{Q}_{g}^{s,1},...,\mathbb{Q}_{g}^{s,C}\right)=-\min_{\hat{h}\in\mathcal{H}}\sum_{i=1}^{K}\pi_{i}\mathcal{L}\left(\hat{h},h^{S,i},\mathbb{P}_{g}^{S,i}\right)+\text{const}$,
where $\alpha=\left[\alpha_{c}\right]_{c\in\left[C\right]}$ is defined
as above.

2. Finding the general DI feature map $g^{*}$ via the OP in Eq. \ref{eq:optimal_g_general_feature}
is equivalent to solving 
\begin{equation}
g^{*}=\argmax{g\in\mathcal{G}}\,D^{\alpha}\left(\mathbb{Q}_{g}^{s,1},...,\mathbb{Q}_{g}^{s,C}\right).\label{eq:max_distance}
\end{equation}
\end{thm}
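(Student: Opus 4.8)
The plan is to reduce Theorem~\ref{thm:max_distance_g} directly to Theorem~\ref{thm:op_distance}, by recognizing the inner minimization in Eq.~\ref{eq:optimal_g_general_feature} as an instance of the hypothesis-aware multi-distribution loss $\mathcal{L}^{\alpha}_{\mathbb{Q}_{1:C}}$ evaluated at the class-conditional mixtures $\mathbb{Q}_{g}^{S,1},\dots,\mathbb{Q}_{g}^{S,C}$ with mixing weights $\alpha=[\alpha_{c}]_{c\in[C]}$. The one non-trivial step, which I would carry out first, is a disintegration of each source representation distribution by its label.

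\emph{Step 1 (rewrite the source loss by class).} Expanding $\ell(\hat{h}(z),h^{S,i}(z))=\sum_{c=1}^{C} l(\hat{h}(z),c)\,h^{S,i}(z,c)$ and using that $h^{S,i}(z,c)=\mathbb{P}_{g}^{S,i}(y=c\mid z)$ is the induced labeling function, so that $p_{g}^{S,i}(z)\,h^{S,i}(z,c)=\gamma_{i,c}\,p_{g}^{S,i,c}(z)$ is the joint density of $(z,y=c)$ under the $i$-th source, I would write
\begin{align*}
\sum_{i=1}^{K}\pi_{i}\,\mathcal{L}\!\left(\hat{h},h^{S,i},\mathbb{P}_{g}^{S,i}\right)
&=\sum_{c=1}^{C}\int l(\hat{h}(z),c)\Bigl(\sum_{i=1}^{K}\pi_{i}\gamma_{i,c}\,p_{g}^{S,i,c}(z)\Bigr)dz
=\sum_{c=1}^{C}\alpha_{c}\,\mathbb{E}_{z\sim\mathbb{Q}_{g}^{S,c}}\!\left[l(\hat{h}(z),c)\right],
\end{align*}
where the last equality is exactly the definitions $\alpha_{c}=\sum_{j}\pi_{j}\gamma_{j,c}$ and $\mathbb{Q}_{g}^{S,c}=\sum_{i}\tfrac{\pi_{i}\gamma_{i,c}}{\alpha_{c}}\mathbb{P}_{g}^{S,i,c}$. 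The right-hand side is precisely $\mathcal{L}^{\alpha}_{\mathbb{Q}_{g}^{S,1:C}}(\hat{h})$, the expected loss of a domain classifier $\hat{h}$ trying to tell apart $\mathbb{Q}_{g}^{S,1},\dots,\mathbb{Q}_{g}^{S,C}$ under prior $\alpha$.

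\emph{Step 2 (apply Theorem~\ref{thm:op_distance} and settle statement 1).} Taking $\min_{\hat{h}\in\mathcal{H}}$ of both sides of the identity from Step~1 and invoking Theorem~\ref{thm:op_distance} with $\mathbb{Q}_{c}:=\mathbb{Q}_{g}^{S,c}$ yields $D^{\alpha}(\mathbb{Q}_{g}^{S,1},\dots,\mathbb{Q}_{g}^{S,C})=-\min_{\hat{h}\in\mathcal{H}}\sum_{i}\pi_{i}\mathcal{L}(\hat{h},h^{S,i},\mathbb{P}_{g}^{S,i})+\mathcal{C}_{l,\alpha}$, which is statement~1 with $\mathrm{const}=\mathcal{C}_{l,\alpha}$. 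The point to emphasize is that $\alpha_{c}=\sum_{j}\pi_{j}\gamma_{j,c}$ is built only from the fixed mixing weights $\pi$ and the label marginals $\gamma_{i,c}=\mathbb{P}^{S,i}(y=c)$, and since $g$ acts only on $\mathcal{X}$ it leaves these marginals untouched; hence $\alpha$, and therefore $\mathcal{C}_{l,\alpha}$, does not vary with $g$. Then \emph{Step 3 (statement 2)} is immediate: rearranging gives $\min_{\hat{h}\in\mathcal{H}}\sum_{i}\pi_{i}\mathcal{L}(\hat{h},h^{S,i},\mathbb{P}_{g}^{S,i})=\mathcal{C}_{l,\alpha}-D^{\alpha}(\mathbb{Q}_{g}^{S,1},\dots,\mathbb{Q}_{g}^{S,C})$, and substituting into Eq.~\ref{eq:optimal_g_general_feature}, then discarding the $g$-independent constant, turns the outer minimization over $g$ into $g^{*}=\argmax{g\in\mathcal{G}}D^{\alpha}(\mathbb{Q}_{g}^{S,1},\dots,\mathbb{Q}_{g}^{S,C})$, i.e.\ Eq.~\ref{eq:max_distance}.

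I expect the only real friction to be the measure-theoretic bookkeeping in Step~1: correctly identifying $h^{S,i}$ with the conditional $\mathbb{P}_{g}^{S,i}(y=c\mid z)$ via the induced-labeling-function formula $h(z)=\int_{g^{-1}(z)}f(x)p(x)dx/\int_{g^{-1}(z)}p(x)dx$, and justifying the interchange of the finite sum over $i$ with the integral over $z$ (which follows from Tonelli, the integrands being nonnegative up to the bounded factor $l$). Once Step~1 is in place, the rest is a direct appeal to Theorem~\ref{thm:op_distance} plus the observation that $\alpha$ is frozen under the optimization over $g$.
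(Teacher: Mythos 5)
Your proposal is correct and follows essentially the same route as the paper's own proof: disintegrate each source loss by class using $p_{g}^{S,i}(z)\,h^{S,i}(z,c)=\gamma_{i,c}\,p_{g}^{S,i,c}(z)$, regroup the double sum to recognize $\sum_{c}\alpha_{c}\,\mathcal{L}(\hat{h},c,\mathbb{Q}_{g}^{S,c})$, and then invoke Theorem~\ref{thm:op_distance} with $\mathbb{Q}_{c}=\mathbb{Q}_{g}^{S,c}$, after which statement~2 follows by dropping the $g$-independent constant. Your explicit remark that $\alpha$ (hence $\mathcal{C}_{l,\alpha}$) is untouched by $g$ is a small but welcome clarification that the paper leaves implicit.
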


Theorem \ref{thm:max_distance_g}, especially its second claim, discloses
that learning general DI representations maximally expands the coverage
of latent representations of the source domains by maximizing $D^{\alpha}\left(\mathbb{Q}_{g}^{s,1},...,\mathbb{Q}_{g}^{s,C}\right)$.
Hence, the span of source mixture $\mathbb{P}_{g}^{\pi}=\sum_{c=1}^{C}\alpha_{c}\mathbb{Q}_{g}^{s,c}$
is also increased. We believe this demonstrates one of the benefits
of general DI representations because it implicitly enhance the chance
to match a general \emph{unseen target domains} in the DG setting
by possibly reducing the source-target representation discrepancy
term $d_{1/2}\left(\mathbb{P}_{g}^{T},\mathbb{P}_{g}^{\pi}\right)$
(cf. Theorem \ref{thm:general_dom_inv}). Additionally, in MSDA where
we wish to minimize $d_{1/2}\left(\mathbb{P}_{g}^{T},\mathbb{P}_{g}^{\pi}\right)$
explicitly, expanding source representation's coverage is also useful.

\subsubsection{Compressed Domain-Invariant Representations}

It is well-known that generalization gap between the true loss and
its empirical estimation affects generalization capability of model
\citep{vapnik1998statistical,shalev2014understanding}. One way to
close this gap is increasing sampling density. However, as hinted
by our analysis, enforcing general DI representation learning tends
to maximize the cross-domain class divergence, hence implicitly increasing
the diversity of latent representations from different source domains.
This renders learning a classifier $\hat{h}$ on top of those source
representations harder due to scattering samples on representation
space, leading to higher generalization gap between empirical and
general losses. In contrast, compressed DI representations help making
the task of learning $\hat{h}$ easier with a lower generalization
gap by decreasing the diversity of latent representations from the
different source domains, via enforcing $\mathbb{P}_{g^{*}}^{S,1}=\mathbb{P}_{g^{*}}^{S,2}=\ldots=\mathbb{P}_{g^{*}}^{S,K}$.
We now develop rigorous theory to examine this observation.

Let the training set be $S=\{(z_{i},y_{i})\}_{i=1}^{N}$, consisting
of independent pairs $(z_{i},y_{i})$ drawn from the source mixture,
i.e., the sampling process starts with sampling domain index $k\sim Cat\left(\boldsymbol{\pi}\right)$,
then sampling $z\sim\mathbb{P}_{g}^{S,k}$ (i.e., $x\sim\mathbb{P}^{S,k}$
and $z=g\left(x\right)$), and finally assigning a label with $y\sim h^{S,k}\left(z\right)$
(i.e., $h^{S,k}\left(z\right)$ is a distribution over $\left[C\right]$).
The empirical loss is defined as 
\begin{equation}
\begin{aligned}\mathcal{L}\left(\hat{h},S\right)=\frac{1}{\sum_{k=1}^{K}N_{k}}\sum_{k=1}^{K}\sum_{i=1}^{N_{k}}\ell\left(\hat{h}\left(z_{i}\right),h^{S,k}\left(z_{i}\right)\right),\end{aligned}
\label{eq:empirical_loss}
\end{equation}
where $N_{k},\,k\in\left[C\right]$ is the number of samples drawn
from the $k$-th source domain and $N=\sum_{k=1}^{K}N_{k}$.

Here, $\mathcal{L}\left(\hat{h},S\right)$ is an unbiased estimation
of the general loss $\mathcal{L}\left(\hat{h},\mathbb{D}_{g}^{\pi}\right)=\sum_{k=1}^{K}\pi_{k}\mathcal{L}\left(\hat{h},\mathbb{D}_{g}^{S,k}\right)$.
To quantify the quality of the estimation, we investigate the upper
bound of the \emph{generalization gap} $\left|\mathcal{L}\left(\hat{h},S\right)-\mathcal{L}\left(\hat{h},\mathbb{D}_{g}^{\pi}\right)\right|$
with the confidence level $1-\delta$. 
\begin{thm}
\label{thm:gap}For any confident level $\delta\in[0,1]$ over the
choice of $S$, the estimation of loss is in the $\epsilon$-range
of the true loss 
\[
\text{Pr}\left(\left|\mathcal{L}\left(\hat{h},S\right)-\mathcal{L}\left(\hat{h},\mathbb{D}_{g}^{\pi}\right)\right|\leq\epsilon\right)\geq1-\delta,
\]
where $\epsilon=\epsilon\left(\delta\right)=\left(\frac{A}{\delta}\right)^{1/2}$
is a function of $\delta$ for which $A$ is proportional to{\footnotesize{}
\begin{align*}
\begin{aligned}\frac{1}{N}\left(\sum_{i=1}^{K}\sum_{j=1}^{K}\frac{\sqrt{\pi_{i}}}{K}\mathcal{L}\left(\hat{f},\mathbb{D}^{S,j}\right)+L\sum_{i=1}^{K}\sqrt{\pi_{i}}\max_{k\in\left[K\right]}\mathbb{E}_{\mathbb{P}^{S,k}}\left[\left\Vert \Delta p^{k,i}\left(y|x\right)\right\Vert _{1}\right]+\frac{L}{K}\sum_{i=1}^{K}\sum_{j=1}^{K}\sqrt{2\pi_{i}}~d_{1/2}\left(\mathbb{P}_{g}^{S,i},\mathbb{P}_{g}^{S,j}\right)\right)^{2}\end{aligned}
\end{align*}
}{\footnotesize\par}

\end{thm}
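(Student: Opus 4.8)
The plan is to treat $\mathcal L(\hat h,S)$ as the empirical mean of $N$ independent, $[0,L]$-valued random variables and to apply a second-moment (Chebyshev) inequality; the fact that the advertised $\epsilon$ scales like $\delta^{-1/2}$ rather than $\sqrt{\log(1/\delta)}$ already signals that this, and not a Hoeffding/McDiarmid-type bound, is the right instrument. Concretely, adopt the i.i.d.-from-the-mixture view of the sample: each $(z_i,y_i)$ is drawn independently by first picking $k_i\sim\mathrm{Cat}(\boldsymbol{\pi})$, then $z_i\sim\mathbb P_g^{S,k_i}$, and $\xi_i:=\ell(\hat h(z_i),h^{S,k_i}(z_i))$ satisfies $\mathbb E[\xi_i]=\mathcal L(\hat h,\mathbb D_g^{\pi})$, so that $\mathcal L(\hat h,S)=\frac1N\sum_{i=1}^N\xi_i$ is unbiased and, by independence, $\mathrm{Var}\big(\mathcal L(\hat h,S)\big)=\tfrac1N\,\mathrm{Var}(\xi_1)$. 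Chebyshev then gives $\mathrm{Pr}\big(|\mathcal L(\hat h,S)-\mathcal L(\hat h,\mathbb D_g^{\pi})|>\epsilon\big)\le \mathrm{Var}(\xi_1)/(N\epsilon^2)$; equating the right-hand side to $\delta$ yields precisely $\epsilon=(A/\delta)^{1/2}$ with $A$ equal to $\mathrm{Var}(\xi_1)/N$. The whole content of the theorem is therefore the estimate $\mathrm{Var}(\xi_1)\lesssim\big(\text{displayed sum}\big)^2$.

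For that variance bound I would proceed in two stages. First, boundedness of the loss gives $\mathrm{Var}(\xi_1)\le\mathbb E[\xi_1^2]\le L\,\mathbb E[\xi_1]=L\sum_{k=1}^K\pi_k\,\mathcal L(\hat h,\mathbb D_g^{S,k})$, and the defining property of the induced labeling function ($h^{S,k}$ is the fiberwise $\mathbb P^{S,k}$-average of $f^{S,k}$ and $\ell$ is affine in its second argument, so the tower rule applies) turns each latent per-source loss into the corresponding input-space loss $\mathcal L(\hat f,\mathbb D^{S,k})$ (only ``$\le$'' is needed, and already follows from convexity). Second, I would bound each $\mathcal L(\hat f,\mathbb D^{S,k})$ by rerunning the chain of estimates behind Theorem~\ref{thm:compact_dom_inv} with the source $\mathbb D^{S,k}$ in the role of the ``target'' domain: the pointwise split $\ell(\hat f(x),f^{S,k}(x))\le\ell(\hat f(x),f^{S,j}(x))+L\|f^{S,k}(x)-f^{S,j}(x)\|_1$, a latent-space change of measure absorbing $\|\mathbb P_g^{S,k}-\mathbb P_g^{S,j}\|_{\mathrm{TV}}$ into $d_{1/2}(\mathbb P_g^{S,k},\mathbb P_g^{S,j})$, joint convexity of the squared Hellinger divergence to handle the mixture, and the triangle inequality for the metric $d_{1/2}$. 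These are exactly the ingredients that produce the three families of terms appearing in $A$: the source losses $\mathcal L(\hat f,\mathbb D^{S,j})$, the pairwise label shifts $\max_{m}\mathbb E_{\mathbb P^{S,m}}[\|\Delta p^{m,k}(y|x)\|_1]$, and the pairwise representation divergences $d_{1/2}(\mathbb P_g^{S,i},\mathbb P_g^{S,j})$.

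It then remains to collapse the sum $\sum_k\pi_k(\cdots)_k$ over source indices into a single squared expression, and this is where the $\sqrt{\pi_i}$ (rather than $\pi_i$) weights originate: writing the per-source bound as $c_k\ge0$, one uses $\sum_k\pi_k c_k=\sum_k(\sqrt{\pi_k}\sqrt{c_k})^2\le\big(\sum_k\sqrt{\pi_k}\sqrt{c_k}\big)^2$ (equivalently $\sum_k a_k^2\le(\sum_k a_k)^2$ for $a_k\ge0$), and distributes the three families of terms across the $k$-sum to match each to its counterpart inside the parentheses of $A$; the factor $L$ gathered along the way is what ``proportional to'' absorbs. Two bookkeeping points: the rearrangement that removes $\mathcal L(\hat f,\mathbb D^{S,k})$ from its own upper bound uses $\pi_k<1$, i.e. that there are genuinely several sources; and if one insists on the literal ``sample a domain index, then a point'' description with random block sizes $N_k$, one should either condition on $(N_1,\dots,N_K)$ or simply keep the i.i.d.-from-$\mathbb D_g^{\pi}$ description above, which keeps the variance computation a one-liner.

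I expect the genuine work to be the second stage: reproducing the Theorem~\ref{thm:compact_dom_inv} estimates with a source occupying the target's slot while keeping every coefficient in a form that survives the final Cauchy--Schwarz, so that the divergence terms emerge with exactly the coefficient $L\sqrt{2\pi_i}/K$, the label-shift terms with $L\sqrt{\pi_i}$, and the loss terms with $\sqrt{\pi_i}/K$. The Chebyshev step, the boundedness step, and the $\ell^2\le L\ell$ estimate are all routine; the delicate part is purely the accounting through the divergence manipulations.
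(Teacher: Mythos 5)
Your overall skeleton (i.i.d.-from-the-mixture view, unbiasedness, Chebyshev with $\epsilon=(A/\delta)^{1/2}$, and re-running the Theorem~\ref{thm:general_dom_inv}/\ref{thm:compact_dom_inv} machinery with one source domain occupying the target slot and another the mixture slot, then averaging over $j$) is exactly the paper's route. The genuine gap is in your variance estimate. You bound $\mathrm{Var}(\xi_1)\leq\mathbb{E}[\xi_1^2]\leq L\,\mathbb{E}[\xi_1]=L\sum_k\pi_k\mathcal{L}\left(\hat{h},\mathbb{D}_g^{S,k}\right)$, which is \emph{linear} in the per-source losses, and then try to recover the advertised squared sum via $\sum_k\pi_k c_k\leq\left(\sum_k\sqrt{\pi_k}\sqrt{c_k}\right)^2$. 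But this leaves $\sqrt{c_k}$ inside the square, and $\sqrt{c_k}$ does not "distribute across the three families of terms": at best $\sqrt{a+b+c}\leq\sqrt{a}+\sqrt{b}+\sqrt{c}$ gives square roots of the loss, label-shift and divergence terms, not the terms themselves with the coefficients $\sqrt{\pi_i}/K$, $L\sqrt{\pi_i}$, $L\sqrt{2\pi_i}/K$ appearing in $A$. Worse, no such repair is possible from your starting point: when all per-source losses and pairwise divergences tend to $0$, your bound $L\sum_k\pi_k c_k$ is of first order in these quantities while the advertised $A$ is of second order, so the linear bound cannot be dominated by (a constant times) the advertised quadratic expression. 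The step $\ell^2\leq L\ell$ is precisely what destroys the structure the theorem asserts.

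The paper keeps the quadratic structure by expanding the second moment over the mixture instead: $\mathrm{Var}_{\mathbb{D}_g^{\pi}}[\ell]=\sum_i\pi_i\left(\mathrm{Var}_{\mathbb{P}_g^{S,i}}[\ell^i]+\mathcal{L}\left(\hat{h},\mathbb{D}_g^{S,i}\right)^2\right)-\mathcal{L}\left(\hat{h},\mathbb{D}_g^{\pi}\right)^2$, drops the negative term, and applies your $\sum_k a_k^2\leq\left(\sum_k a_k\right)^2$ inequality with $a_k=\sqrt{\pi_k}\,\mathcal{L}\left(\hat{h},\mathbb{D}_g^{S,k}\right)$, so the per-source losses enter the square at first power with weights $\sqrt{\pi_k}$; only then is each $\mathcal{L}\left(\hat{h},\mathbb{D}_g^{S,i}\right)$ replaced by the Theorem~\ref{thm:general_dom_inv}-type bound averaged over $j$, which produces exactly the advertised coefficients. (Note also that the paper's $A$ retains an extra additive term $\frac{1}{N}\sum_i\pi_i\mathrm{Var}_{\mathbb{P}_g^{S,i}}[\ell^i]$, consistent with the loose wording "proportional to".) Your second-stage plan is otherwise fine, and your worry about "removing $\mathcal{L}(\hat{f},\mathbb{D}^{S,k})$ from its own upper bound" is unnecessary: the bound with $j=k$ is trivially valid (the shift and divergence terms vanish), so one simply averages over all $j\in[K]$ without any rearrangement or use of $\pi_k<1$.
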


Theorem \ref{thm:gap} reveals one benefit of learning compressed
DI representations, that is, when enforcing compressed DI representation
learning, we minimize $\frac{L}{K}\sum_{i=1}^{K}\sum_{j=1}^{K}\sqrt{2\pi_{i}}~d_{1/2}\left(\mathbb{P}_{g}^{S,i},\mathbb{P}_{g}^{S,j}\right)$,
which tends to reduce the generalization gap $\epsilon=\epsilon\left(\delta\right)$
for a given confidence level $1-\delta$. Therefore, compressed DI
representations allow us to minimize population loss $\mathcal{L}\left(\hat{h},\mathbb{D}_{g}^{\pi}\right)$
more efficiently via minimizing empirical loss.

\subsection{Trade-off of Learning Domain Invariant Representation\label{subsec:Trade-off}}

Similar to the theoretical finding in Zhao et al. \citep{DA_theo_zhao19a_on_learning_tradeoff}
developed for DA, we theoretically find that compression does come
with a cost for MSDA and DG. We investigate the representation trade-off,
typically how compressed DI representation affects classification
loss. Specifically, we consider a data processing chain $\mathcal{X}\stackrel{g}{\longmapsto}\mathcal{Z}\stackrel{\hat{h}}{\longmapsto}\mathcal{Y}_{\Delta}$,
where $\mathcal{X}$ is the common data space, $\mathcal{Z}$ is the
latent space induced by a feature extractor $g$, and $\hat{h}$ is
a hypothesis on top of the latent space. We define $\mathbb{P}_{\mathcal{Y}}^{\pi}$
and $\mathbb{P}_{\mathcal{Y}}^{T}$ as two distribution over $\mathcal{Y}$
in which to draw $y\sim\mathbb{P}_{\mathcal{Y}}^{\pi}$, we sample
$k\sim Cat\left(\pi\right)$, $x\sim\mathbb{P}^{S,k}$, and $y\sim f^{S,k}\left(x\right)$,
while similar to draw $y\sim\mathbb{P}_{\mathcal{Y}}^{T}$. Our theoretical
bounds developed regarding the trade-off of learning DI representations
are relevant to $d_{1/2}\left(\mathbb{P}_{\mathcal{Y}}^{\pi},\mathbb{P}_{\mathcal{Y}}^{T}\right)$. 
\begin{thm}
\label{thm:trade_off_LowerBound}Consider a feature extractor $g$
and a hypothesis $\hat{h}$, the Hellinger distance between two label
marginal distributions $\mathbb{P}_{\mathcal{Y}}^{\pi}$ and $\mathbb{P}_{\mathcal{Y}}^{T}$
can be upper-bounded as:

1. $d_{1/2}\left(\mathbb{P}_{\mathcal{Y}}^{\pi},\mathbb{P}_{\mathcal{Y}}^{T}\right)\leq\left[\sum_{k=1}^{K}\pi_{k}\mathcal{L}\left(\hat{h}\circ g,f^{S,k},\mathbb{P}^{S,k}\right)\right]^{1/2}+d_{1/2}\left(\mathbb{P}_{g}^{T},\mathbb{P}_{g}^{\pi}\right)+\mathcal{L}\left(\hat{h}\circ g,f^{T},\mathbb{P}^{T}\right)^{1/2}$

2. $d_{1/2}\left(\mathbb{P}_{\mathcal{Y}}^{\pi},\mathbb{P}_{\mathcal{Y}}^{T}\right)\leq\left[\sum_{i=1}^{K}\pi_{i}\mathcal{L}\left(\hat{h}\circ g,f^{S,i},\mathbb{P}^{S,i}\right)\right]^{1/2}+\sum_{i=1}^{K}\sum_{j=1}^{K}\frac{\sqrt{\pi_{j}}}{K}d_{1/2}\left(\mathbb{P}_{g}^{S,i},\mathbb{P}_{g}^{S,j}\right)+\sum_{i=1}^{K}\sum_{j=1}^{K}\frac{\sqrt{\pi_{j}}}{K}d_{1/2}\left(\mathbb{P}_{g}^{T},\mathbb{P}_{g}^{S,i}\right)+\mathcal{L}\left(\hat{h}\circ g,f^{T},\mathbb{P}^{T}\right)^{1/2}.$

Here we note that the general loss $\mathcal{L}$ is defined based
on the Hellinger loss $\ell$ which is define as

$\ell(\hat{f}(x),f(x))=D_{1/2}(\hat{f}(x),f(x))=2\sum_{i=1}^{C}\left(\sqrt{\hat{f}\left(x,i\right)}-\sqrt{f\left(x,i\right)}\right)^{2}$
(more discussion can be found in Appendix C). 
\end{thm}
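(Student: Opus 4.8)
\emph{Proof proposal.} The plan is to interpolate between the two label marginals through the shared classifier $\hat h\circ g$. Introduce two auxiliary distributions over $\mathcal{Y}$: let $\mathbb{P}_{\mathcal{Y}}^{\hat h\circ g,\pi}$ be the law of $y$ generated by $k\sim Cat(\pi)$, $x\sim\mathbb{P}^{S,k}$, $z=g(x)$, $y\sim\hat h(z)$, and let $\mathbb{P}_{\mathcal{Y}}^{\hat h\circ g,T}$ be its analogue with $x\sim\mathbb{P}^{T}$. Since $d_{1/2}=\sqrt{D_{1/2}}$ is a genuine metric, the triangle inequality gives
\[
d_{1/2}(\mathbb{P}_{\mathcal{Y}}^{\pi},\mathbb{P}_{\mathcal{Y}}^{T})\le d_{1/2}(\mathbb{P}_{\mathcal{Y}}^{\pi},\mathbb{P}_{\mathcal{Y}}^{\hat h\circ g,\pi})+d_{1/2}(\mathbb{P}_{\mathcal{Y}}^{\hat h\circ g,\pi},\mathbb{P}_{\mathcal{Y}}^{\hat h\circ g,T})+d_{1/2}(\mathbb{P}_{\mathcal{Y}}^{\hat h\circ g,T},\mathbb{P}_{\mathcal{Y}}^{T}),
\]
so it suffices to control the three summands.

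For the two outer summands I would observe that $\mathbb{P}_{\mathcal{Y}}^{\pi}=\sum_{k}\pi_{k}\,\mathbb{E}_{x\sim\mathbb{P}^{S,k}}[f^{S,k}(x)]$ and $\mathbb{P}_{\mathcal{Y}}^{\hat h\circ g,\pi}=\sum_{k}\pi_{k}\,\mathbb{E}_{x\sim\mathbb{P}^{S,k}}[\hat h(g(x))]$ are mixtures governed by one and the same mixing law over the index $(k,x)$. Joint convexity of the $f$-divergence $D_{1/2}$ then yields $D_{1/2}(\mathbb{P}_{\mathcal{Y}}^{\pi},\mathbb{P}_{\mathcal{Y}}^{\hat h\circ g,\pi})\le\sum_{k}\pi_{k}\,\mathbb{E}_{x\sim\mathbb{P}^{S,k}}[D_{1/2}(\hat h(g(x)),f^{S,k}(x))]$, and because $\mathcal{L}$ here is defined through the Hellinger loss $\ell(\cdot,\cdot)=D_{1/2}(\cdot,\cdot)$ (cf.\ the note after the statement) the right-hand side equals $\sum_{k}\pi_{k}\mathcal{L}(\hat h\circ g,f^{S,k},\mathbb{P}^{S,k})$; taking square roots recovers the first summand of Claim~1, and the identical computation on the target gives the last summand $\mathcal{L}(\hat h\circ g,f^{T},\mathbb{P}^{T})^{1/2}$. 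For the middle summand, note that $\hat h$ is a Markov kernel from $\mathcal{Z}$ to $\mathcal{Y}$ and that $\mathbb{P}_{\mathcal{Y}}^{\hat h\circ g,\pi}$ and $\mathbb{P}_{\mathcal{Y}}^{\hat h\circ g,T}$ are precisely its pushforwards of $\mathbb{P}_{g}^{\pi}$ and $\mathbb{P}_{g}^{T}$; the data-processing inequality for $f$-divergences then gives $D_{1/2}(\mathbb{P}_{\mathcal{Y}}^{\hat h\circ g,\pi},\mathbb{P}_{\mathcal{Y}}^{\hat h\circ g,T})\le D_{1/2}(\mathbb{P}_{g}^{\pi},\mathbb{P}_{g}^{T})$, i.e.\ $d_{1/2}(\mathbb{P}_{\mathcal{Y}}^{\hat h\circ g,\pi},\mathbb{P}_{\mathcal{Y}}^{\hat h\circ g,T})\le d_{1/2}(\mathbb{P}_{g}^{T},\mathbb{P}_{g}^{\pi})$. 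Assembling the three estimates proves Claim~1.

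Claim~2 then follows from Claim~1 by splitting $d_{1/2}(\mathbb{P}_{g}^{T},\mathbb{P}_{g}^{\pi})$ precisely as in the passage from Theorem~\ref{thm:general_dom_inv} to Theorem~\ref{thm:compact_dom_inv}: joint convexity of $D_{1/2}$ with $\mathbb{P}_{g}^{\pi}=\sum_{j}\pi_{j}\mathbb{P}_{g}^{S,j}$ gives $d_{1/2}(\mathbb{P}_{g}^{T},\mathbb{P}_{g}^{\pi})\le(\sum_{j}\pi_{j}\,d_{1/2}(\mathbb{P}_{g}^{T},\mathbb{P}_{g}^{S,j})^{2})^{1/2}\le\sum_{j}\sqrt{\pi_{j}}\,d_{1/2}(\mathbb{P}_{g}^{T},\mathbb{P}_{g}^{S,j})$, using $\sqrt{\sum_{j}b_{j}}\le\sum_{j}\sqrt{b_{j}}$ for $b_{j}\ge0$; then, for each $j$, inserting an intermediate source distribution $\mathbb{P}_{g}^{S,i}$ and averaging the triangle inequality over $i\in[K]$ gives $d_{1/2}(\mathbb{P}_{g}^{T},\mathbb{P}_{g}^{S,j})\le\frac{1}{K}\sum_{i}[d_{1/2}(\mathbb{P}_{g}^{T},\mathbb{P}_{g}^{S,i})+d_{1/2}(\mathbb{P}_{g}^{S,i},\mathbb{P}_{g}^{S,j})]$. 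Substituting this into the previous display reproduces the two double sums of Claim~2.

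I expect the only delicate point to be the middle term of the triangle inequality: one must verify that the two induced label marginals genuinely are pushforwards of $\mathbb{P}_{g}^{\pi}$ and $\mathbb{P}_{g}^{T}$ under a single stochastic map $\hat h$, so that the data-processing inequality applies and produces the representation-discrepancy term $d_{1/2}(\mathbb{P}_{g}^{T},\mathbb{P}_{g}^{\pi})$ in Claim~1 rather than something weaker. A secondary bookkeeping point is to invoke joint convexity against the common mixing measure over $(k,x)$, so that the bound collapses exactly onto the source loss $\sum_{k}\pi_{k}\mathcal{L}(\hat h\circ g,f^{S,k},\mathbb{P}^{S,k})$; everything else is elementary manipulation of the metric $d_{1/2}$.
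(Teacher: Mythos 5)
Your proposal is correct and follows essentially the same route as the paper's proof: the paper likewise inserts the two predicted label marginals $\hat{\mathbb{P}}_{\mathcal{Y}}^{\pi},\hat{\mathbb{P}}_{\mathcal{Y}}^{T}$, applies the triangle inequality for $d_{1/2}$, bounds the outer terms by the Hellinger losses (Lemmas \ref{lem:Hellinger-loss} and \ref{lem:key_tradeoff}), bounds the middle term by monotonicity of $D_{1/2}$ under the kernel $\hat{h}$, and obtains Claim~2 via Lemma \ref{lem:decompose-mixture-distance} plus the averaged triangle inequality. The only difference is cosmetic: where you invoke joint convexity of the $f$-divergence against the mixing law over $(k,x)$ and convexity in the second argument, the paper establishes the same inequalities by explicit Cauchy--Schwarz computations.
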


\textit{Remark.} Compared to the trade-off bound in the work of Zhao
et al. \citep{DA_theo_zhao19a_on_learning_tradeoff}, our context
is more general, concerning MSDA and DG problems with multiple source
domains and multi-class probabilistic labeling functions, rather than
single source DA with binary-class and deterministic setting. Moreover,
the Hellinger distance is more universal, in the sense that it does
not depend on the choice of classifier family $\mathcal{H}$ and loss
function $\ell$ as in the case of $\mathcal{H}$-divergence in \citep{DA_theo_zhao19a_on_learning_tradeoff}.

We base on the first inequality of Theorem \ref{thm:trade_off_LowerBound}
to analyze the trade-off of learning general DI representations. The
first term on the left hand side is the source mixture's loss, which
is controllable and tends to be small when enforcing learning general
DI representations. With that in mind, if \textit{two label marginal
distributions }$\mathbb{P}_{\mathcal{Y}}^{\pi}$ and $\mathbb{P}_{\mathcal{Y}}^{T}$
are distant (i.e., $d_{1/2}\left(\mathbb{P}_{\mathcal{Y}}^{\pi},\mathbb{P}_{\mathcal{Y}}^{T}\right)$
is high), the sum $d_{1/2}\left(\mathbb{P}_{g}^{T},\mathbb{P}_{g}^{\pi}\right)+\mathcal{L}\left(\hat{h}\circ g,f^{T},\mathbb{P}^{T}\right)^{1/2}$
tends to be high. This leads to 2 possibilities. The first scenario
is when the representation discrepancy $d_{1/2}\left(\mathbb{P}_{g}^{T},\mathbb{P}_{g}^{\pi}\right)$
has small value, e.g., it is minimized in MSDA setting, or it happens
to be small by pure chance in DG setting. In this case, the lower
bound of target loss $\mathcal{L}\left(\hat{h}\circ g,f^{T},\mathbb{P}^{T}\right)$
is high, possibly hurting model's generalization ability. On the other
hand, if the discrepancy $d_{1/2}\left(\mathbb{P}_{g}^{T},\mathbb{P}_{g}^{\pi}\right)$
is large for some reasons, the lower bound of target loss will be
small, but its upper-bound is higher, as indicated Theorem \ref{thm:general_dom_inv}.

Based on the second inequality of Theorem \ref{thm:trade_off_LowerBound},
we observe that if \textit{two label marginal distributions }$\mathbb{P}_{\mathcal{Y}}^{\pi}$
and $\mathbb{P}_{\mathcal{Y}}^{T}$ are distant while enforcing learning
compressed DI representations (i.e., both source loss and source-source
feature discrepancy $\left[\sum_{i=1}^{K}\pi_{i}\mathcal{L}\left(\hat{h}\circ g,f^{S,i},\mathbb{P}^{S,i}\right)\right]^{1/2}+\sum_{i=1}^{K}\sum_{j=1}^{K}\frac{\sqrt{\pi_{j}}}{K}d_{1/2}\left(\mathbb{P}_{g}^{S,i},\mathbb{P}_{g}^{S,j}\right)$
are low), the sum $\sum_{i=1}^{K}\sum_{j=1}^{K}\frac{\sqrt{\pi_{j}}}{K}d_{1/2}\left(\mathbb{P}_{g}^{T},\mathbb{P}_{g}^{S,i}\right)+\mathcal{L}\left(h\circ g,f^{T},\mathbb{P}^{T}\right)^{1/2}$
is high. For the\emph{ MSDA setting}, the discrepancy $\sum_{i=1}^{K}\sum_{j=1}^{K}\frac{\sqrt{\pi}_{j}}{K}d_{1/2}\left(\mathbb{P}_{g}^{T},\mathbb{P}_{g}^{S,i}\right)$
is trained to get smaller, meaning that the lower bound of target
loss $\mathcal{L}\left(\hat{h}\circ g,f^{T},\mathbb{P}^{T}\right)$
is high, hurting the target performance. Similarly, for the \emph{DG
setting}, if the trained feature extractor $g$ occasionally reduces
$\sum_{i=1}^{K}\sum_{j=1}^{K}\frac{\sqrt{\pi}_{j}}{K}d_{1/2}\left(\mathbb{P}_{g}^{T},\mathbb{P}_{g}^{S,i}\right)$
for some unseen target domain, it certainly increases the target loss
$\mathcal{L}\left(h\circ g,f^{T},\mathbb{P}^{T}\right)$. In contrast,
if for some target domains, the discrepancy $\sum_{i=1}^{K}\sum_{j=1}^{K}\frac{\sqrt{\pi}_{j}}{K}d_{1/2}\left(\mathbb{P}_{g}^{T},\mathbb{P}_{g}^{S,i}\right)$
is high by some reasons, by linking to the upper bound in Theorem
\ref{thm:compact_dom_inv}, the target general loss has a high upper-bound,
hence is possibly high.

This trade-off between representation discrepancy and target loss
suggests a sweet spot for just-right feature alignment. In that case,
the target loss is most likely to be small.

\section{Experiment}

To investigate the trade-off of learning DI representations, we conduct
domain generalization experiments on the \emph{colored MNIST} dataset
(CC BY-SA 3.0) \citep{OOD_IRM_arjovsky_2020,lecun2010mnist}. In particular,
the task is to predict binary label $Y$ of colored input images $X$
generated from binary digit feature $Z_{d}$ and color feature $Z_{c}$.
We refer readers to Appendix D for more information of this dataset.

We conduct $7$ source domains by setting color-label correlation
$\mathbb{P}(Z_{c}=1|Y=1)=\mathbb{P}(Z_{c}=0|Y=0)=\theta^{S,i}$ where
$\theta^{s,i}\sim Uni\left(\left[0.6,1\right]\right)$ for $i=1,...,12$,
while two target domains are created with $\theta^{T,i}\in\text{\ensuremath{\left\{ 0.05,0.7\right\} } }$
for $i=1,2$. Here we note that colored images in the target domain
with $\theta^{T,2}=0.7$ are \emph{more similar }to those in the source
domains, while colored images in the target domain with $\theta^{T,1}=0.05$
are \emph{less similar}.

We wish to study the characteristics and trade-off between two kinds
of DI representations when predicting on various target domains. Specifically,
we apply adversarial learning \citep{GAN_Goodfellow} similar to \citep{DA_ganin_DANN_2016},
in which a min-max game is played between domain discriminator $\hat{h}^{d}$
trying to distinguish the source domain given representation, while
the feature extractor (generator) $g$ tries to fool the domain discriminator.
Simultaneously, a classifier is used to classify label based on the
representation. Let $\mathcal{L}_{gen}$ and $\mathcal{L}_{disc}$
be the label classification and domain discrimination losses, the
training objective becomes: 
\[
\min_{g}\left(\min_{\hat{h}}\mathcal{L}_{gen}+\lambda\max_{\hat{h}^{d}}\mathcal{L}_{disc}\right),
\]
where the source compression strength $\lambda>0$ controls the compression
extent of learned representation. More specifically, general DI representation
is obtained with $\lambda=0$, while larger $\lambda$ leads to more
compressed DI representation. Finally, our implementation is based
on DomainBed \citep{domainbed} repository, and all training details
as well as further MSDA experiment and DG experiment on real datasets
are included in Appendix D.\footnote{Our code could be found at: https://github.com/VinAIResearch/DIRep}

\subsection{Trade-off of Two Kinds of Domain-Invariant Representations}
\begin{center}
\begin{figure}[H]
 \centering{}\includegraphics[height=0.12\paperwidth]{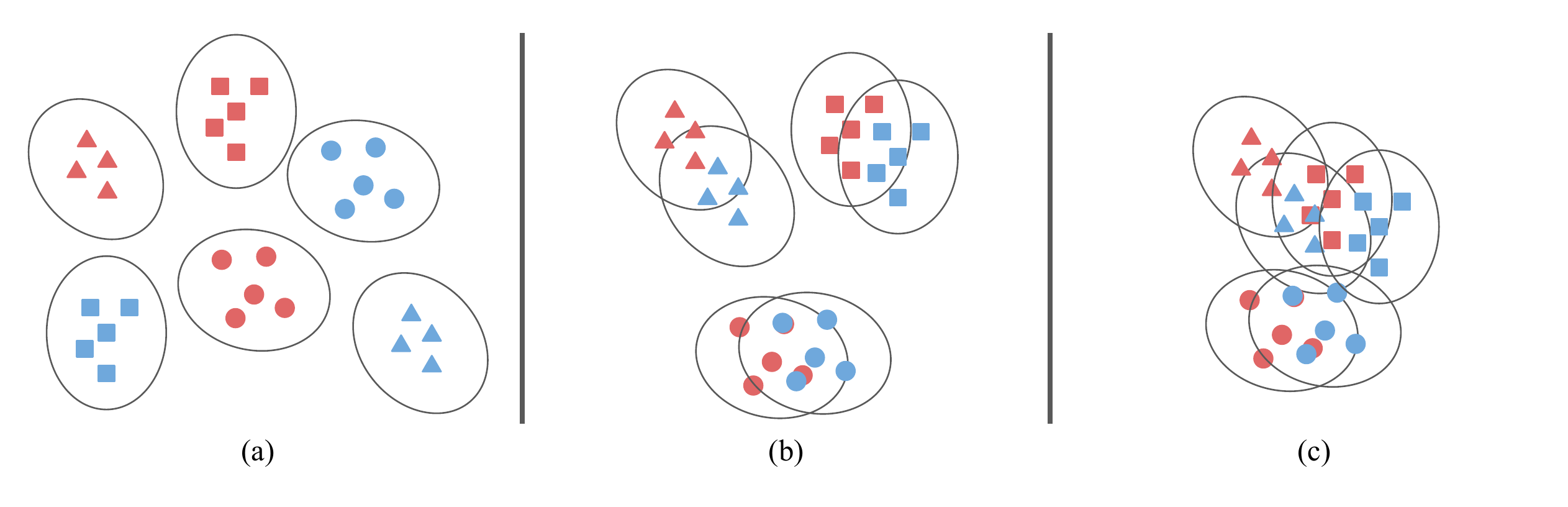}
\caption{\label{fig:feature_illustration}Phase transition of representations
with increased $\lambda$. (a) General DI representations. (b) Just-right
compressed DI representations. (c) Overly-compressed DI representations.}
 
\end{figure}
\par\end{center}

We consider target domain which is similar to source domain in this
experiment. To govern the trade-off between two kinds of DI representations,
we gradually increase the source compression strength $\lambda$ with
noting that $\lambda=0$ means only focusing on learning general DI
representations. According to our theory and as shown in Figure \ref{fig:feature_illustration},
\emph{learning only general DI representations} ($\lambda=0$) maximizes
the cross-domain class divergence by encouraging the classes of source
domains more separate arbitrarily, while by increasing $\lambda$,
we enforce compressing the latent representations of source domains
together. Therefore, for an appropriate $\lambda$, the class separation
from general DI representations and the source compression from compressed
DI representations (i.e.,\emph{ just-right compressed DI representations}),
are balanced as in the case (b) of Figure \ref{fig:feature_illustration},
while for overly high $\lambda$, source compression from compressed
DI representations dominates and compromises the class separation
from general DI representations (i.e.,\emph{ overly-compressed DI
representations}).

Figure \ref{fig:val_vs_lambda} shows the source validation and target
accuracies when increasing $\lambda$ (i.e., encouraging the source
compression). It can be observed that both source validation accuracy
and target accuracy have the same pattern: increasing when setting
appropriate $\lambda$ for \emph{just-right compressed DI representations
}and compromising when setting overly high values $\lambda$ for \emph{overly-compressed
DI representations}. Figure \ref{fig:val_vs_step} shows in detail
the variation of the source validation accuracy for each specific
$\lambda$. In practice, we should encourage learning two kinds of
DI representations simultaneously by finding an appropriate trade-off
to balance them for working out just-right compressed DI representations\emph{.}

\subsection{Generalization Capacity on Various Target Domains\label{subsec:Generalization-Capacity}}
\begin{center}
\begin{figure}[H]
\begin{centering}
\subfloat[\label{fig:val_vs_lambda} Accuracy vs $\lambda$]{\begin{centering}
\includegraphics[width=0.29\textwidth]{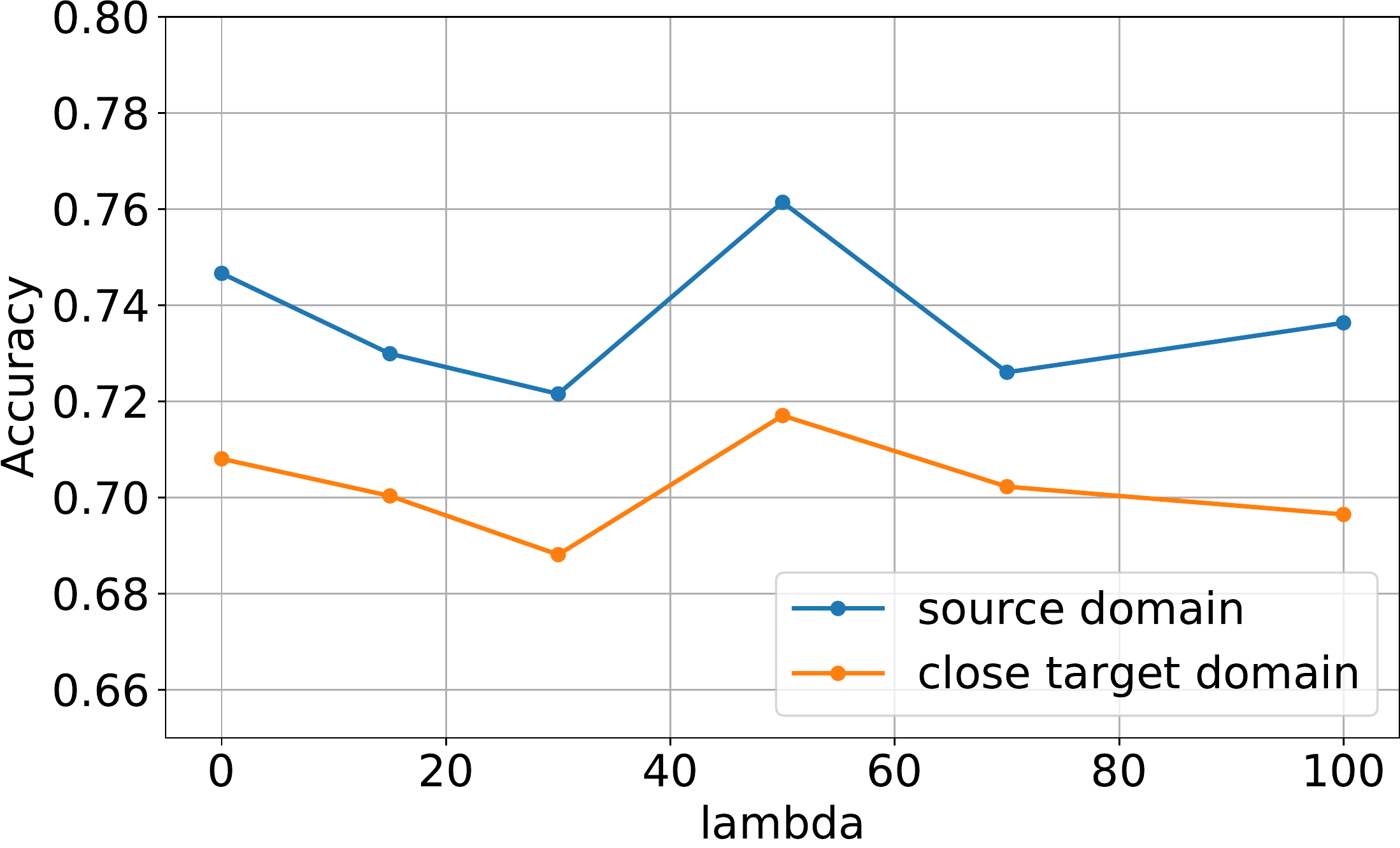} 
\par\end{centering}
}\hspace{50pt}\subfloat[\label{fig:val_vs_step} Accuracy vs iteration]{\begin{centering}
\includegraphics[width=0.31\textwidth]{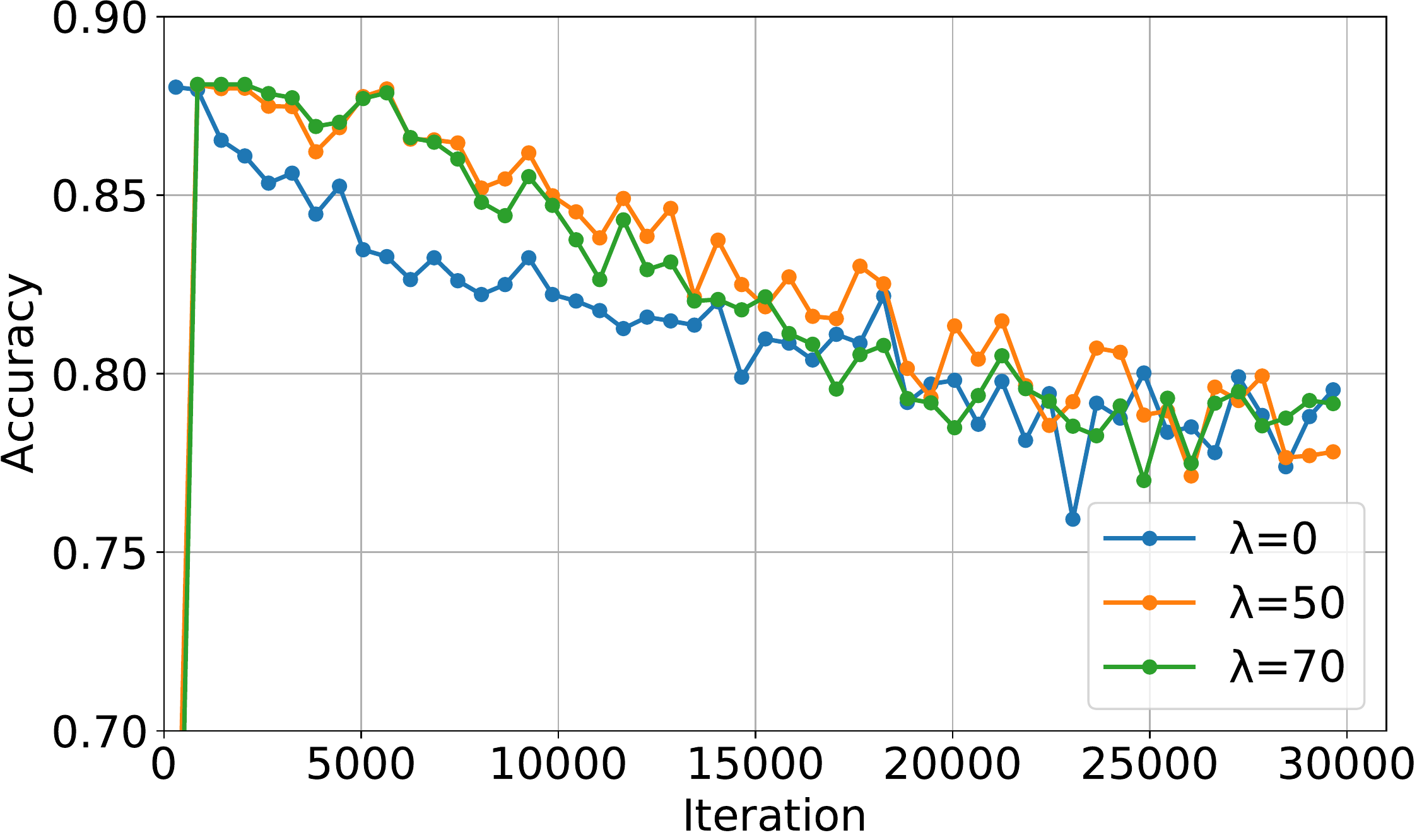} 
\par\end{centering}
}
\par\end{centering}
\caption{\label{fig:train_val_acc_vs_lambda_step} (\textbf{\ref{fig:val_vs_lambda}})
Source validation accuracy and target accuracy for close target domain
(see Section \ref{subsec:Generalization-Capacity}) over compression
strength. (\textbf{\ref{fig:val_vs_step})} Validation accuracy over
training step for different values of $\lambda$.}

\vspace{-30pt}
 
\end{figure}
\par\end{center}

\begin{center}
\begin{figure}[H]
\vspace{-20pt}

\begin{centering}
\subfloat[\label{fig:test_acc_vs_step_0}Far target domain with $\theta=0.05$.]{\centering{}\includegraphics[width=0.31\textwidth]{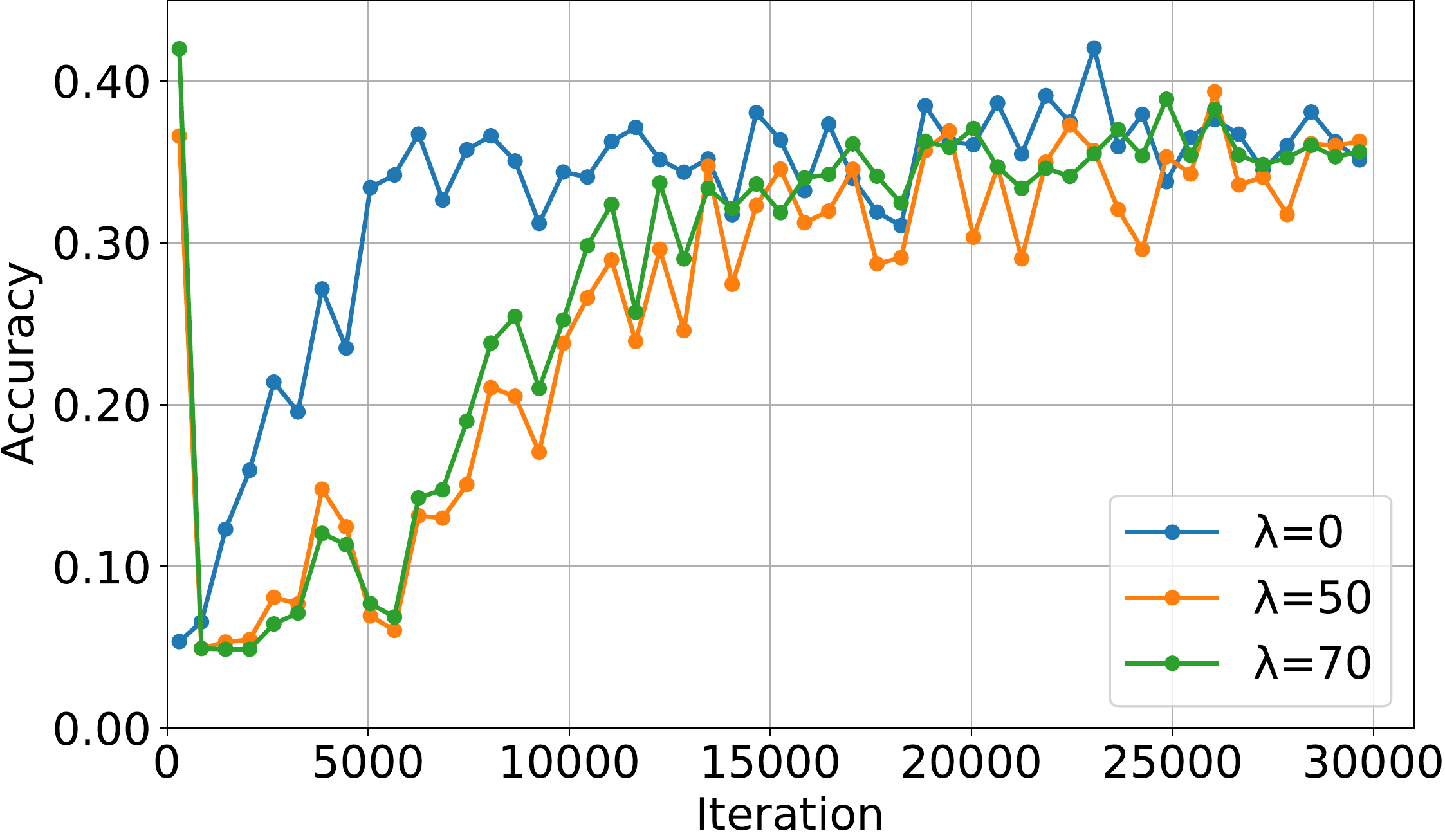}

}\hspace{50pt}\subfloat[\label{fig:test_acc_vs_step_2}Close target domain with $\theta=0.7$.]{\begin{centering}
\includegraphics[width=0.31\textwidth]{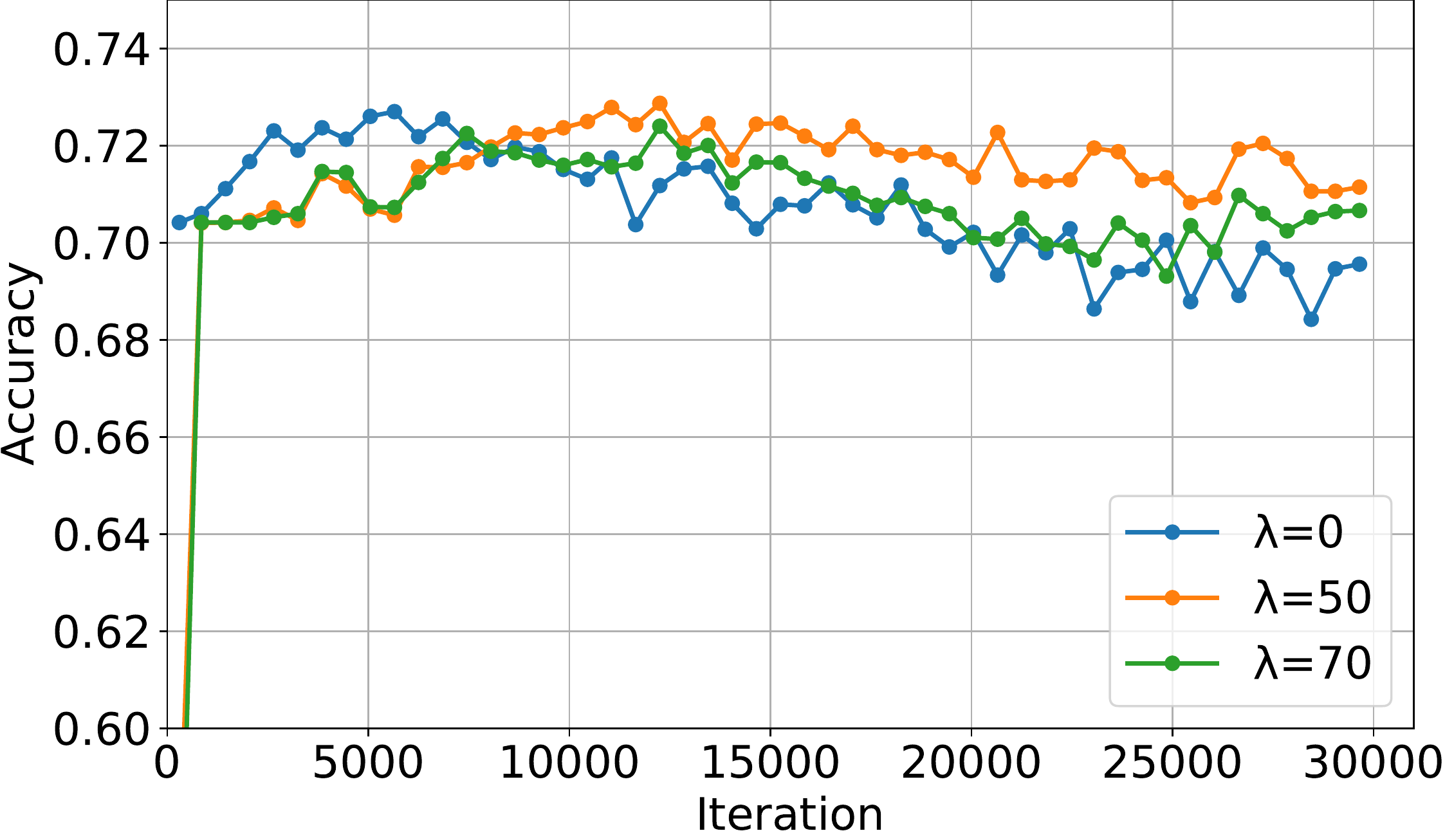} 
\par\end{centering}
}
\par\end{centering}
\caption{\label{fig:test_acc_vs_step} Accuracy of models trained with different
$\lambda$ on two test domains. For the far domain with $\theta=0.05$,
the model corresponding with general DI representations performs better
than that with more compact DI representations. For close domain with
$\theta=0.7$, the models with compression generalize significantly
better than those with general DI representations.}

\vspace{-15mm}
 
\end{figure}
\par\end{center}

In this experiment, we inspect the generalization capacity of the
models trained on the source domains with different compression strengths
$\lambda$ to different target domains (i.e., close and far ones).
As shown in Figure \ref{fig:test_acc_vs_step}, the target accuracy
is lower for the far target domain than for the close one regardless
of the source compression strengths $\lambda$. This observation aligns
with our upper-bounds developed in Theorems \ref{thm:general_dom_inv}
and \ref{thm:compact_dom_inv} in which larger representation discrepancy
increases the upper-bounds of the general target loss, hence more
likely hurting the target performance.

Another interesting observation is that for the far target domain,
the target accuracy for $\lambda=0$ peaks. This can be partly explained
as the general DI representations help to increase the span of source
latent representations for more chance to match a far target domain
(cf. Section \ref{subsec:General_features}). Specifically, the source-target
discrepancy $d_{1/2}\left(\mathbb{P}_{g}^{T},\mathbb{P}_{g}^{\pi}\right)$
could be small and the upper bound in Theorem \ref{thm:general_dom_inv}
is lower. In contrast, for the close target domain, the compressed
DI representations help to really improve the target performance,
while over-compression degrades the target performance, similar to
result on source domains. This is because the target domain is naturally
close to the source domains, the source and target latent representations
are already mixed up, but by encouraging the compressed DI representations,
we aim to learning more elegant representations for improving target
performance.

\section{Related Works}

\emph{Domain adaptation (DA)} has been intensively studied from both
theoretical \citep{DA_ben_david_2010_DAbound,DA_ben_david_impossibility_pmlr_2010,DA_Cortes_ALT2011_regression,DA_theo_zhao19a_on_learning_tradeoff,DA_mansour_2014_roburst_da,DA_redko_2017_theory_ot_for_da}
and empirical \citep{DA_Saito_NEURIPS2020,DA_pmlr-zhang19_Jordan_bridging,DA_ganin_DANN_2016,DA_Cortes_JMLR_2019_gen_disc,DA_Yan_2020_mixup_train}
perspectives. Notably, the pioneering work \citep{DA_ben_david_2010_DAbound}
and other theoretical works \citep{DA_redko_2017_theory_ot_for_da,DA_pmlr-zhang19_Jordan_bridging,DA_theo_zhao19a_on_learning_tradeoff,DA_johansson19a_support_invertibility,DA_mansour_colt09_DAbound_genloss,DA_Cortes_ALT2011_regression}
have investigated DA in various aspects which lay foundation for the
success of practical works \citep{DA_ganin_DANN_2016,DA_tzeng_cvpr2017_DANN_variants,DA_Saito_NEURIPS2020,DA_pmlr-zhang19_Jordan_bridging,DA_ganin_DANN_2016,DA_Cortes_JMLR_2019_gen_disc,DA_Yan_2020_mixup_train}.
\emph{Multiple-source domain adaptation (MSDA)} extends DA by gathering
training set from multiple source domains \citep{,MSDA_JMLR_crammer08a,MDA_mansour_MDA_limited_target_pmlr-21,MDA_Redko_mach_learn_2019_adaptability_analysis,MDA_redko_aistat19_ot_for_MDA,DA_redko_2017_theory_ot_for_da,MSDA_Duan_exploiting_web,MSDA_Zhao_NEURIPS2018_adversarial_msda,MSDA_Xu_2018_CVPR_deep_cooktail,MSDA_Peng_2019_ICCV_moment_matching}.
Different from DA with abundant theoretical works, theoretical study
in MSDA is significantly limited. Notably, the works in \citep{MDA_mansour_nips2009_ens_hypo,MDA_Mansour_renyi_div_UAI2009}
relies on assumptions about the existence of the same labeling function,
which can be used for all source domains; and target domain is an
unknown mixture of the source domains. They then show that there exists
a distributional weighted combination of these experts performing
well on target domain with loss at most $\epsilon$, given a set of
source expert hypotheses (with loss is at most on respective domain).
Hoffman et al. \citep{MDA_hoffman_nips2018_learn_ens} further develop
this idea to a more general case where different labeling functions
corresponding to different source domains, and the target domain is
arbitrary. Under this setting, there exists a distributional weighted
combination of source experts such that the loss on target domain
is bounded by a source loss term and a discrepancy term between target
domain and source mixture. On the other hand, Zhao et al. \citep{MSDA_Zhao_NEURIPS2018_adversarial_msda}
directly extend Ben-david et al.'s work with an improvement on the
sample complexity term. 

\emph{Domain generalization (DG)} \citep{domainbed,DG_li2017_iccv_pacs,DG_Blanchard_NIPS2011,DG_huang2_eccv20_self_challenging}
is the most challenging setting among the three due to the unavailability
of target data. The studies in \citep{DG_Blanchard_NIPS2011,DG_deshmukh_2019}
use kernel method to find feature map which minimizes expected target
loss. Moreover, those in \citep{DG_Li_MMD_DANN_2018_CVPR,DG_Li_CDANN_2018_ECCV,DG_muandet_icml13_kernel,DG_ghifary_iccv2015,DG_Haohan_iclr_2019_Projecting_Superficial_Statistics,DG_dou_nips2019_model_agnostic_learning}
learn compressed DI representations by minimizing different types
of domain dissimilarity. Other notions of invariance are also proposed,
for example, \citep{OOD_IRM_arjovsky_2020} learns a latent space
on which representation distributions do not need to be aligned, but
share a common optimal hypothesis. Another work in \citep{DG_piratla_icml20a_pmlr_low_rank_decom}
uncovers domain-invariant hypothesis from low-rank decomposition of
domain-specific hypotheses. Moreover, there are other works which
try to strike a balance between two types of representation spaces
\citep{DG_chattopadhyay2020_learning_balance}.

\section{Conclusion}

In this paper, we derive theoretical bounds for target loss in MSDA
and DG settings which characterize two types of representation: general
DI representation for learning invariant classifier which works on
all source domains and compressed DI representation motivated from
reducing inter-domain representation discrepancy. We further characterize
the properties of these two representations, and develop a lower bound
on the target loss which governs the trade-off between learning them.
Finally, we conduct experiments on Colored MNIST dataset and real
dataset to illustrate our theoretical claims.

\clearpage{}

\bibliographystyle{acm}
\bibliography{2021-NeurIPS-On_DI_Rep_for_MSDA_DG_main}

\clearpage{}

\appendixThis supplementary material provides proofs for theoretical results
stated in the main paper, as well as detailed experiment settings
and further experimental results. It is organized as follows
\begin{itemize}
\item Appendix A contains the proofs for the upper bounds of target loss
introduced in Section 2.2 of our main paper.
\item Appendix B contains the proofs for characteristics of two representations
as mentioned in Section 2.3 of our main paper.
\item Appendix C contains the proof for trade-off theorem discussed in Section
2.4 of our main paper.
\item Finally, in Appendix D, we present the generative detail of the Colored
MNIST dataset, the experimental setting together with additional results
for MSDA on Colored MNIST and DG on the real-world PACS dataset.
\end{itemize}

\section{Appendix A: Target Loss's Upper Bounds}

We begin with a crucial proposition for our theory development, which
further allows us to connect loss on feature space to loss on data
space.
\begin{prop}
\label{prop:latent_input} Let $\hat{f}:\mathcal{X}\mapsto\mathcal{Y}_{\Delta}$
where $\hat{f}=\hat{h}\circ g$ with $g:\mathcal{X}\mapsto\mathcal{Z}$
and $\hat{h}:\mathcal{Z}\mapsto\mathcal{Y}_{\Delta}$. Let $c:\mathcal{Z}\times\left[C\right]\mapsto\mathbb{R}$
be a positive function. 

i) For any $i\in\left[C\right]$, we have

\[
\int h\left(z,i\right)c\left(\hat{h}\left(z\right),i\right)p_{g}\left(z\right)dz=\int f\left(x,i\right)c\left(\hat{f}\left(x\right),i\right)p\left(x\right)dx,
\]

ii) For any $i\in\left[C\right]$, we have
\[
\int\left|h\left(z,i\right)-h'\left(z,i\right)\right|c\left(\hat{h}\left(z\right),i\right)p_{g}\left(z\right)dz\leq\int\left|f\left(x,i\right)-f'\left(x,i\right)\right|c\left(\hat{f}\left(x\right),i\right)p\left(x\right)dx,
\]

where $p$ is the density of a distribution $\mathbb{P}$ on $\mathcal{X}$,
$p_{g}$ is the density of the distribution $\mathbb{P}_{g}=g_{\#}\mathbb{P}$,
$f:\mathcal{X}\mapsto\mathcal{Y}_{\Delta}$ and $f':\mathcal{X}\mapsto\mathcal{Y}_{\Delta}$
are labeling functions, and $h:\mathcal{Z}\mapsto\mathcal{Y}_{\Delta}$
and $h':\mathcal{Z}\mapsto\mathcal{Y}_{\Delta}$ are labeling functions
on the latent space induced from $f$, i.e., $h(z)=\frac{\int_{g^{-1}\left(z\right)}f\left(x,i\right)p\left(x\right)dx}{\int_{g^{-1}\left(z\right)}p\left(x\right)dx}$
and $h'(z)=\frac{\int_{g^{-1}\left(z\right)}f'\left(x,i\right)p\left(x\right)dx}{\int_{g^{-1}\left(z\right)}p\left(x\right)dx}$.
\end{prop}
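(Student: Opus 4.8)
The plan is to reduce both identities to a single disintegration (change-of-variables) formula for the feature map $g$, with part (ii) requiring in addition one application of Jensen's inequality. The structural point that makes everything work is that since $\hat f=\hat h\circ g$, the factor $c(\hat f(x),i)=c(\hat h(g(x)),i)$ is \emph{constant on each fibre} $g^{-1}(z)$; only the labeling terms $f(x,i),f'(x,i)$ vary within a fibre. This is precisely what lets us integrate out the fibre and be left with an expression living on $\mathcal{Z}$.

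First I would invoke the disintegration of $\mathbb{P}$ along $g$: there is a family of conditional probability measures $\{\mu_z\}_{z\in\mathcal{Z}}$, each supported on $g^{-1}(z)$, such that for every integrable $\phi:\mathcal{X}\mapsto\mathbb{R}$,
\[
\int_{\mathcal{X}}\phi(x)\,p(x)\,dx=\int_{\mathcal{Z}}\left(\int_{g^{-1}(z)}\phi(x)\,\mu_z(dx)\right)p_g(z)\,dz .
\]
By its very definition $h(z,i)=\int_{g^{-1}(z)}f(x,i)\,\mu_z(dx)$, and likewise $h'(z,i)=\int_{g^{-1}(z)}f'(x,i)\,\mu_z(dx)$, because the common denominator $\int_{g^{-1}(z)}p(x)\,dx$ is exactly the normalization built into $\mu_z$; this common normalization (depending on $p$ only, not on $f$ or $f'$) is what will make the Jensen step in (ii) legitimate.

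For part (i), apply the displayed formula with $\phi(x)=f(x,i)\,c(\hat h(g(x)),i)$. Since $c(\hat h(\cdot),i)$ depends only on $z=g(x)$, it pulls out of the inner integral, leaving $\int_{g^{-1}(z)}f(x,i)\,\mu_z(dx)=h(z,i)$ inside; what remains is $\int_{\mathcal{Z}}h(z,i)\,c(\hat h(z),i)\,p_g(z)\,dz$, which is the claimed identity. For part (ii), apply the same formula with $\phi(x)=|f(x,i)-f'(x,i)|\,c(\hat h(g(x)),i)$, so that the data-space integral equals $\int_{\mathcal{Z}}\big(\int_{g^{-1}(z)}|f(x,i)-f'(x,i)|\,\mu_z(dx)\big)c(\hat h(z),i)\,p_g(z)\,dz$. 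Then bound the inner integral below by the triangle inequality for integrals (Jensen applied to $|\cdot|$):
\[
\int_{g^{-1}(z)}|f(x,i)-f'(x,i)|\,\mu_z(dx)\ \ge\ \left|\int_{g^{-1}(z)}\big(f(x,i)-f'(x,i)\big)\,\mu_z(dx)\right|=|h(z,i)-h'(z,i)| .
\]
Because $c$ is positive and $p_g\ge 0$, multiplying by $c(\hat h(z),i)\,p_g(z)$ and integrating over $\mathcal{Z}$ preserves this inequality, which gives exactly the assertion.

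The main point requiring care is the rigor of the disintegration step: the informal notation $\int_{g^{-1}(z)}$ in the statement must be read as conditional expectation given $g(X)=z$, and one should check measurability of $z\mapsto h(z,i)$ and enough integrability for Fubini/disintegration to apply. Under the standing assumption that all densities exist this is routine, but it is the one place where the argument is not a one-line computation.
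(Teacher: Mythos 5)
Your proposal is correct and follows essentially the same route as the paper's proof: the paper carries out the same fibre-wise computation concretely (push-forward density written as $\int_{g^{-1}(z)}p(x)\,dx$, the factor $c(\hat h(z),i)$ pulled through because it is constant on each fibre, indicator functions plus Fubini to swap integrals, and the triangle inequality for the inner integral in part (ii)), which is exactly the disintegration argument you package abstractly via the conditional measures $\mu_z$. The only cosmetic difference is that you invoke disintegration as a named tool while the paper verifies the same identity by hand.
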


\begin{proof}
i) Using the definition of $h$, we manipulate the integral on feature
space as

\begin{align*}
\int_{\mathcal{Z}}h\left(z,i\right)c\left(\hat{h}\left(z\right),i\right)p_{g}\left(z\right)dz & \stackrel{\left(1\right)}{=}\int_{\mathcal{\mathcal{Z}}}c\left(\hat{h}\left(z\right),i\right)\frac{\int_{g^{-1}\left(z\right)}f\left(x,i\right)p\left(x\right)dx}{\int_{g^{-1}\left(z\right)}p\left(x\right)dx}\int_{g^{-1}\left(z\right)}p\left(x'\right)dx'dz\\
 & =\int_{\mathcal{\mathcal{Z}}}c\left(\hat{h}\left(z\right),i\right)\int_{g^{-1}\left(z\right)}f\left(x,i\right)p\left(x\right)dxdz\\
 & \stackrel{\left(2\right)}{=}\int_{\mathcal{\mathcal{Z}}}\int_{g^{-1}\left(z\right)}c\left(\hat{h}\left(g\left(x\right)\right),i\right)f\left(x,i\right)p\left(x\right)dxdz\\
 & \stackrel{\left(3\right)}{=}\int_{\mathcal{\mathcal{Z}}}\int_{\mathcal{X}}\mathbb{I}_{x\in g^{-1}\left(z\right)}c\left(\hat{h}\left(z\right),i\right)f\left(x,i\right)p\left(x\right)dxdz\\
 & \stackrel{\left(4\right)}{=}\int_{\mathcal{X}}\int_{\mathcal{\mathcal{Z}}}\mathbb{I}_{z=g\left(x\right)}c\left(\hat{h}\left(z\right),i\right)f\left(x,i\right)p\left(x\right)dxdz\\
 & =\int_{\mathcal{X}}c\left(\hat{h}\left(g\left(x\right)\right),i\right)f\left(x,i\right)p\left(x\right)dx\\
 & =\int_{\mathcal{X}}c\left(\hat{f}\left(x\right),i\right)f\left(x,i\right)p\left(x\right)dx.
\end{align*}

In $\left(1\right)$, we use the definition of push-forward distribution
$\int_{B}p_{g}\left(z\right)dz=\int_{B}dz\int_{g^{-1}\left(z\right)}p\left(x\right)dx$.
In $\left(2\right)$, $c\left(\hat{h}\left(z\right),i\right)$ can
be put inside the integral because $z=g\left(x\right)$ for any $x\in g^{-1}\left(z\right)$.
In $\left(3\right)$, the integral over restricted region is expanded
to all space with the help of $\mathbb{I}_{x\in g^{-1}\left(z\right)}$,
whose value is $1$ if $x\in g^{-1}\left(z\right)$ and $0$ otherwise.
In $\left(4\right)$, Fubini theorem is invoked to swap the integral
signs.

ii) Using the same technique, we have

\begin{align*}
 & \int\left|h\left(z,i\right)-h'\left(z,i\right)\right|c\left(\hat{h}\left(z\right),i\right)p_{g}\left(z\right)dz\\
 & =\int_{\mathcal{\mathcal{Z}}}c\left(\hat{h}\left(z\right),i\right)\frac{\left|\int_{g^{-1}\left(z\right)}f\left(x,i\right)p\left(x\right)dx-\int_{g^{-1}\left(z\right)}f'\left(x,i\right)p\left(x\right)dx\right|}{\int_{g^{-1}\left(z\right)}p\left(x'\right)dx'}\int_{g^{-1}\left(z\right)}p\left(x\right)dxdz\\
 & =\int_{\mathcal{\mathcal{Z}}}c\left(\hat{h}\left(z\right),i\right)\left|\int_{g^{-1}\left(z\right)}\left(f\left(x,i\right)-f'\left(x,i\right)\right)p\left(x\right)dx\right|dz\\
 & \leq\int_{\mathcal{\mathcal{Z}}}c\left(\hat{h}\left(z\right),i\right)\int_{g^{-1}\left(z\right)}\left|f\left(x,i\right)-f'\left(x,i\right)\right|p\left(x\right)dxdz\\
 & =\int_{\mathcal{\mathcal{Z}}}\int_{\mathcal{X}}\mathbb{I}_{x\in g^{-1}\left(z\right)}c\left(\hat{h}\left(z\right),i\right)\left|f\left(x,i\right)-f'\left(x,i\right)\right|p\left(x\right)dxdz\\
 & =\int_{\mathcal{X}}\int_{\mathcal{\mathcal{Z}}}\mathbb{I}_{z=g\left(x\right)}c\left(\hat{h}\left(g\left(x\right)\right),i\right)\left|f\left(x,i\right)-f'\left(x,i\right)\right|p\left(x\right)dxdz\\
 & =\int_{\mathcal{X}}\left|f\left(x,i\right)-f'\left(x,i\right)\right|c\left(\hat{f}\left(x\right),i\right)p\left(x\right)dx.
\end{align*}
\end{proof}
Proposition \ref{prop:latent_input} allows us to connect expected
loss on feature space and input space as in the following corollary.
\begin{cor}
\label{cor:feature-loss-input-loss}Consider a domain $\mathbb{D}=\left(\mathbb{P},f\right)$
with data distribution $\mathbb{P}$ and ground-truth labeling function
$f$. A hypothesis is $\hat{f}:\mathcal{X}\mapsto\mathcal{Y}_{\Delta}$,
where $\hat{f}=\hat{h}\circ g$ with $g:\mathcal{X}\mapsto\mathcal{Z}$
and $\hat{h}:\mathcal{Z}\mapsto\mathcal{Y}_{\Delta}$. Then, the labeling
function $f$ on input space induces a ground-truth labeling function
on feature space $h(z)=\frac{\int_{g^{-1}\left(z\right)}f\left(x,i\right)p\left(x\right)dx}{\int_{g^{-1}\left(z\right)}p\left(x\right)dx}$.
Let $\ell:\mathcal{Y}_{\Delta}\times\mathcal{Y}_{\Delta}\mapsto\mathbb{R}$
be the loss function, then expected loss can be calculated either
w.r.t. input space $\mathcal{L}\left(\hat{f},f,\mathbb{P}\right)=\int\ell\left(\hat{f}\left(x\right),f\left(x\right)\right)p\left(x\right)dx$
or w.r.t. feature space $\mathcal{L}\left(\hat{h},h,g_{\#}\mathbb{P}\right)=\int\ell\left(\hat{h}\left(z\right),h\left(z\right)\right)p_{g}\left(z\right)dz$.
If we assume the loss $\ell\left(\cdot,\cdot\right)$ has the formed
mentioned in the main paper, that is,

\[
\ell\left(u,v\right)=\sum_{i=1}^{C}l\left(u,i\right)v_{i},
\]

for any two simplexes $u,v\in\mathcal{Y}_{\Delta}$, where $u=\left[u_{i}\right]_{i=1}^{C}$
and $v=\left[v_{i}\right]_{i=1}^{C}$. Then the losses w.r.t. input
space and feature space are the same, i.e.,

\[
\mathcal{L}\left(\hat{h},h,g_{\#}\mathbb{P}\right)=\mathcal{L}\left(\hat{f},f,\mathbb{P}\right).
\]
\end{cor}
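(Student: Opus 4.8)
The plan is to reduce the claim to a term-by-term application of Proposition~\ref{prop:latent_input}(i). Starting from the feature-space loss, I would first unfold the definitions: by the hypothesis of the corollary the pointwise loss decomposes as $\ell\left(\hat h(z),h(z)\right) = \sum_{i=1}^{C} l\left(\hat h(z),i\right) h(z,i)$, so
\[
\mathcal{L}\left(\hat h,h,g_{\#}\mathbb{P}\right) = \int \sum_{i=1}^{C} l\left(\hat h(z),i\right) h(z,i)\, p_g(z)\, dz = \sum_{i=1}^{C} \int l\left(\hat h(z),i\right) h(z,i)\, p_g(z)\, dz,
\]
where pulling the finite sum outside the integral is harmless: the sum has only $C$ terms, so plain linearity of the integral suffices and no limiting argument is needed.

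Next, for each fixed $i\in[C]$ I would invoke Proposition~\ref{prop:latent_input}(i) with the positive function $c(\cdot,i) := l(\cdot,i)$. This is exactly the setting of that proposition (using $\hat f = \hat h\circ g$ and the definition of the induced labeling function $h(z)=\frac{\int_{g^{-1}(z)}f(x,i)p(x)dx}{\int_{g^{-1}(z)}p(x)dx}$), and it yields $\int l\left(\hat h(z),i\right) h(z,i)\, p_g(z)\, dz = \int l\left(\hat f(x),i\right) f(x,i)\, p(x)\, dx$. Summing these $C$ identities and folding the sum back into the integrand gives
\[
\mathcal{L}\left(\hat h,h,g_{\#}\mathbb{P}\right) = \sum_{i=1}^{C} \int l\left(\hat f(x),i\right) f(x,i)\, p(x)\, dx = \int \sum_{i=1}^{C} l\left(\hat f(x),i\right) f(x,i)\, p(x)\, dx = \int \ell\left(\hat f(x),f(x)\right) p(x)\, dx = \mathcal{L}\left(\hat f,f,\mathbb{P}\right),
\]
which is the desired equality.

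I do not expect a real obstacle here — the entire content is carried by Proposition~\ref{prop:latent_input}(i), and the corollary merely packages $C$ instances of it together with the assumed form $\ell(u,v)=\sum_i l(u,i)v_i$ that is linear in its second (ground-truth) argument. The one point worth a sentence of care is checking that the hypotheses of the proposition are genuinely met, namely that $l(\cdot,i)$ is nonnegative so that it legitimately plays the role of $c$; for the standard losses in the paper (cross-entropy, Hinge, $L^1$, $L^2$) this is immediate. If one wanted to avoid even this mild assumption, one could note that part (i) is an equality, so it extends to signed $c$ by splitting $l(\cdot,i)$ into its positive and negative parts and applying the proposition to each; hence the result in fact holds for any integrable per-class loss $l$.
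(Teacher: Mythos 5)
Your argument is correct and coincides with the paper's own proof: both expand $\ell$ via its assumed form $\ell(u,v)=\sum_{i=1}^{C}l(u,i)v_i$, swap the finite sum with the integral, and apply Proposition~\ref{prop:latent_input}(i) with $c(\beta,i)=l(\beta,i)$ for each class $i$, then reassemble. Your extra remark about nonnegativity of $l(\cdot,i)$ (and the positive/negative-part extension) is a sensible point of care the paper glosses over, but it does not change the route.
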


\begin{proof}
We derive as

\textit{
\[
\begin{aligned}\mathcal{L}\left(\hat{h},h,g_{\#}\mathbb{P}\right) & =\int_{\mathcal{Z}}\ell\left(\hat{h}\left(z\right),h\left(z\right)\right)p\left(z\right)dz=\sum_{i=1}^{C}\int_{\mathcal{Z}}l\left(\hat{h}\left(z\right),i\right)h\left(z,i\right)p\left(z\right)dz\\
 & \stackrel{(1)}{=}\sum_{i=1}^{C}\int_{\mathcal{X}}l\left(\hat{f}\left(x\right),i\right)f\left(x,i\right)p\left(x\right)dx=\int_{\mathcal{X}}\ell\left(\hat{f}\left(x\right),f\left(x\right)\right)p\left(x\right)dx\\
 & =\mathcal{L}\left(\hat{f},f,\mathbb{P}\right).
\end{aligned}
\]
}

Here we have $\stackrel{(1)}{=}$ by using (i) in Proposition \ref{prop:latent_input}
with $c\left(\beta,i\right)=l\left(\beta,i\right)$ for any $\beta\in\mathcal{Y}_{\Delta}$
and $i\in\left[C\right]$.
\end{proof}
Normally, target loss $\mathcal{L}\left(\hat{f},f^{T},\mathbb{P}^{T}\right)$
is bounded by source loss $\mathcal{L}\left(\hat{f},f^{S},\mathbb{P}^{S}\right)$,
a label shift term $LS\left(f^{T},f^{S}\right)$, and a data shift
term $DS\left(\mathbb{P}^{T},\mathbb{P}^{S}\right)$ \citep{DA_ben_david_2010_DAbound}.
Here, this kind of bound is developed using data distribution $\mathbb{P}$
on input space and labeling function $f$ from input to label space,
which are not convenient in understanding representation learning,
since $\mathbb{P}^{T},\mathbb{P}^{S}$ are data nature and therefore
fixed. In order to relate target loss to properties of learned representations,
another bound in which $\mathcal{L}\left(\hat{h},h,\mathbb{P}_{g}\right)$
is the loss w.r.t. feature space and $DS\left(\mathbb{P}_{g}^{T},\mathbb{P}_{g}^{S}\right)$
is the data shift on feature space is more favorable. However, this
naive approach presents a pitfall, since the loss $\mathcal{L}\left(\hat{h},h,\mathbb{P}_{g}\right)$
is not identical to the loss w.r.t. input space $\mathcal{L}\left(\hat{f},f,\mathbb{P}\right)$,
which is of ultimate interest, e.g., to-be-bounded target loss $\mathcal{L}\left(\hat{f},f^{T},\mathbb{P}^{T}\right)$,
or to-be-minimized source loss $\mathcal{L}\left(\hat{f},f^{S},\mathbb{P}^{S}\right)$.
Using the previous proposition and corollary, we could bridge this
gap and develop a target bound connecting both data space and feature
space.
\begin{thm}
\label{apx_thm:general_dom_inv-1}(\textbf{Theorem 1 in the main paper})
Consider a mixture of source domains $\mathbb{D}^{\pi}=\sum_{i=1}^{K}\pi_{i}\mathbb{D}^{S,i}$
and the target domain $\mathbb{D}^{T}$. Let $\ell$ be any loss function
upper-bounded by a positive constant $L$. For any hypothesis $\hat{f}:\mathcal{X}\mapsto\mathcal{Y}_{\Delta}$
where $\hat{f}=\hat{h}\circ g$ with $g:\mathcal{X}\mapsto\mathcal{Z}$
and $\hat{h}:\mathcal{Z}\mapsto\mathcal{Y}_{\Delta}$, the target
loss on input space is upper bounded 
\begin{equation}
\begin{aligned}\mathcal{L}\left(\hat{f},\mathbb{D}^{T}\right)\leq\sum_{i=1}^{K}\pi_{i}\mathcal{L}\left(\hat{f},\mathbb{D}^{S,i}\right)+L\max_{i\in[K]}\mathbb{E}_{\mathbb{P}^{S,i}}\left[\|\Delta p^{i}(y|x)\|_{1}\right]+L\sqrt{2}\,d_{1/2}\left(\mathbb{P}_{g}^{T},\mathbb{P}_{g}^{\pi}\right)\end{aligned}
,\label{apx_eq:input_bound_1}
\end{equation}
where $\Delta p^{i}(y|x):=\left[\left|f^{T}(x,y)-f^{S,i}(x,y)\right|\right]_{y=1}^{C}$
is the absolute of single point label shift on input space between
source domain $\mathbb{D}^{S,i}$, the target domain $\mathbb{D}^{T}$,
$[K]:=\left\{ 1,2,...,K\right\} $, and the \textbf{feature distribution
of the source mixture} $\mathbb{P}_{g}^{\pi}:=\sum_{i=1}^{K}\pi_{i}\mathbb{P}_{g}^{S,i}$.
\end{thm}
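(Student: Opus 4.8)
The plan is to bound the target loss by inserting, as intermediate quantities, the loss of $\hat f$ on each source domain and the loss on the source mixture, and then to convert the mismatch between the target labeling function and the source labeling functions into the label-shift term and the mismatch between $\mathbb{P}_g^T$ and $\mathbb{P}_g^\pi$ into the Hellinger term. Concretely, I would write $\mathcal{L}(\hat f,\mathbb{D}^T)=\mathbb{E}_{x\sim\mathbb{P}^T}[\ell(\hat f(x),f^T(x))]$ and first replace $\mathbb{P}^T$ by the source mixture at the cost of a distribution-discrepancy term. The natural route is to pass to the latent space: by Corollary \ref{cor:feature-loss-input-loss}, $\mathcal{L}(\hat f,f^T,\mathbb{P}^T)=\mathcal{L}(\hat h,h^T,\mathbb{P}_g^T)=\mathbb{E}_{z\sim\mathbb{P}_g^T}[\ell(\hat h(z),h^T(z))]$, and similarly for the source quantities, so it suffices to work entirely with $\hat h$, the induced labeling functions $h^{S,i},h^T$, and the pushed-forward distributions on $\mathcal{Z}$.

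On the latent space I would decompose
\[
\mathbb{E}_{\mathbb{P}_g^T}[\ell(\hat h(z),h^T(z))]
=\mathbb{E}_{\mathbb{P}_g^\pi}[\ell(\hat h(z),h^T(z))]
+\Big(\mathbb{E}_{\mathbb{P}_g^T}-\mathbb{E}_{\mathbb{P}_g^\pi}\Big)[\ell(\hat h(z),h^T(z))].
\]
For the last bracket, since $0\le \ell\le L$, the difference of expectations of a function bounded by $L$ under two distributions is at most $L$ times the total variation distance; and total variation is controlled by the Hellinger metric via $\mathrm{TV}(\mathbb{P},\mathbb{Q})\le \sqrt{2}\,d_{1/2}(\mathbb{P},\mathbb{Q})$ (with the paper's normalization of $D_{1/2}$), giving exactly the term $L\sqrt2\, d_{1/2}(\mathbb{P}_g^T,\mathbb{P}_g^\pi)$. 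For the first term, I expand $\mathbb{P}_g^\pi=\sum_i\pi_i\mathbb{P}_g^{S,i}$ and, on each source, compare $\ell(\hat h(z),h^T(z))$ with $\ell(\hat h(z),h^{S,i}(z))$. Because $\ell(u,v)=\sum_{y}l(u,y)v_y$ is linear in its second argument and $l(u,y)\le L$ (I would argue $l(\cdot,\cdot)$ inherits the bound $L$ from $\ell$, or simply assume it as the paper does for $\ell$), we get the pointwise estimate $|\ell(\hat h(z),h^T(z))-\ell(\hat h(z),h^{S,i}(z))|\le L\sum_y|h^T(z,y)-h^{S,i}(z,y)| = L\,\|h^T(z)-h^{S,i}(z)\|_1$. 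Taking $\mathbb{E}_{\mathbb{P}_g^{S,i}}$, summing with weights $\pi_i$, and bounding $\sum_i\pi_i(\cdots)\le \max_i(\cdots)$ yields $\sum_i\pi_i\mathcal{L}(\hat h,h^{S,i},\mathbb{P}_g^{S,i}) + L\max_i \mathbb{E}_{\mathbb{P}_g^{S,i}}\big[\|h^T(z)-h^{S,i}(z)\|_1\big]$. The first summand is $\sum_i\pi_i\mathcal{L}(\hat f,\mathbb{D}^{S,i})$ by Corollary \ref{cor:feature-loss-input-loss} again.

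The remaining step, which I expect to be the main obstacle, is to push the latent-space label-shift term $\mathbb{E}_{\mathbb{P}_g^{S,i}}[\|h^T(z)-h^{S,i}(z)\|_1]$ down to the input-space quantity $\mathbb{E}_{\mathbb{P}^{S,i}}[\|\Delta p^i(y|x)\|_1]=\mathbb{E}_{\mathbb{P}^{S,i}}[\sum_y|f^T(x,y)-f^{S,i}(x,y)|]$. This is exactly what part (ii) of Proposition \ref{prop:latent_input} is for: applying it coordinatewise with $c\equiv 1$, the labeling functions $f=f^{S,i}$ (inducing $h=h^{S,i}$) and $f'=f^T$ — noting that the induced function $\frac{\int_{g^{-1}(z)}f^T(x,i)p^{S,i}(x)dx}{\int_{g^{-1}(z)}p^{S,i}(x)dx}$ is $h^T$ computed with the source-$i$ density, which is the relevant object since the outer integral is against $\mathbb{P}_g^{S,i}$ — gives $\int |h^{S,i}(z,y)-h^T(z,y)|\,p_g^{S,i}(z)\,dz \le \int |f^{S,i}(x,y)-f^T(x,y)|\,p^{S,i}(x)\,dx$ for each $y$; summing over $y$ and then over the max finishes it. One subtlety I would be careful about: the induced $h^T$ appearing here is defined via the source-$i$ measure, not the target measure, so I must either state Proposition \ref{prop:latent_input}(ii) in a way that allows an arbitrary base measure $\mathbb{P}$ (here $\mathbb{P}^{S,i}$) with two labeling functions $f,f'$ on top — which is in fact how it is stated — or absorb this coincidence into the definition; the cleanest exposition is to apply Corollary \ref{cor:feature-loss-input-loss} with domain $(\mathbb{P}^{S,i},f^{S,i})$ and with the ``wrong'' labeling function $f^T$ simultaneously, so that both $\mathcal{L}(\hat h,h^{S,i},\mathbb{P}_g^{S,i})$ and the cross term with $h^T$ live over the same $\mathbb{P}_g^{S,i}$. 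Assembling the three contributions — the weighted source losses, the max label-shift term, and the Hellinger term — gives inequality \eqref{apx_eq:input_bound_1}.
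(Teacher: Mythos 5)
Your proposal is correct at the same level of rigor as the paper and takes essentially the same route as its own proof: pass to the latent space via Corollary \ref{cor:feature-loss-input-loss}, insert the intermediate quantity $\int \ell\left(\hat{h}(z),h^{T}(z)\right)p_{g}^{\pi}(z)\,dz$ (the paper's ``hybrid domain''), bound the change of marginal by $L\sqrt{2}\,d_{1/2}\left(\mathbb{P}_{g}^{T},\mathbb{P}_{g}^{\pi}\right)$ via the same Cauchy--Schwarz/total-variation estimate, and convert the pointwise mismatch $\left\Vert h^{T}(z)-h^{S,i}(z)\right\Vert _{1}$ into the input-space label-shift term using Proposition \ref{prop:latent_input}(ii) with $c\equiv 1$, merely carrying out the two main steps in the opposite order. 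The subtlety you flag---that Proposition \ref{prop:latent_input}(ii) applies to $f^{T}$ induced through the source-$i$ density, whereas the $h^{T}$ arising from the decomposition is the target-induced one---is present and silently elided in the paper's own proof as well, so your treatment matches the paper on that point.
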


\begin{proof}
First, consider the hybrid domain $\mathbb{D}_{g}^{h,i}=\left(\mathbb{P}_{g}^{\pi},h^{T}\right)$,
with $\mathbb{P}_{g}^{\pi}:=\sum_{i=1}^{K}\pi_{i}\mathbb{P}_{g}^{S,i}$
be the feature distribution of the source mixture, and $h^{T}$ is
the induced ground-truth labeling function of target domain. The loss
on the hybrid domain is then upper bounded by the loss on source mixture
and a label shift term. We derive as follows:
\[
\begin{aligned}\mathcal{L}\left(\hat{h},\mathbb{D}_{g}^{hi}\right) & =\int\ell\left(\hat{h}\left(z\right),h^{T}\left(z\right)\right)p_{g}^{\pi}\left(z\right)dz\\
 & =\int\ell\left(\hat{h}\left(z\right),h^{T}\left(z\right)\right)\sum_{i=1}^{K}\pi_{i}p_{g}^{S,i}\left(z\right)dz=\sum_{i=1}^{K}\pi_{i}\int\ell\left(\hat{h}\left(z\right),h^{T}\left(z\right)\right)p_{g}^{S,i}\left(z\right)dz\\
 & \leq\sum_{i=1}^{K}\pi_{i}\int\ell\left(\hat{h}\left(z\right),h^{S,i}\left(z\right)\right)p_{g}^{S,i}\left(z\right)dz\\
 & +\sum_{i=1}^{K}\pi_{i}\int\left|\ell\left(\hat{h}\left(z\right),h^{T}\left(z\right)\right)-\ell\left(\hat{h}\left(z\right),h^{S,i}\left(z\right)\right)\right|p_{g}^{S,i}\left(z\right)dz.
\end{aligned}
\]

Firstly, using Corollary \ref{cor:feature-loss-input-loss}, the loss
terms on feature space and input space are equal

\[
\sum_{i=1}^{K}\pi_{i}\int\ell\left(\hat{h}\left(z\right),h^{S,i}\left(z\right)\right)p_{g}^{S,i}\left(z\right)dz=\sum_{i=1}^{K}\pi_{i}\mathcal{L}\left(\hat{f},f^{S,i},\mathbb{P}^{S,i}\right)
\]

Secondly, the difference term, can be transformed into label shift
on input space using Proposition \ref{prop:latent_input}

\[
\begin{aligned} & \sum_{i=1}^{K}\pi_{i}\int_{\mathcal{Z}}\left|\ell\left(\hat{h}\left(z\right),h^{T}\left(z\right)\right)-\ell\left(\hat{h}\left(z\right),h^{S,i}\left(z\right)\right)\right|p_{g}^{S,i}\left(z\right)dz\\
 & \leq\sum_{i=1}^{K}\pi_{i}\sum_{j=1}^{C}\int_{\mathcal{Z}}\ell\left(\hat{h}\left(z\right),j\right)\left|h^{T}\left(z,j\right)-h^{S,i}\left(z,j\right)\right|p_{g}^{S,i}\left(z\right)dz\\
 & =L\sum_{i=1}^{K}\pi_{i}\sum_{j=1}^{C}\int_{\mathcal{Z}}\left|h^{T}\left(z,j\right)-h^{S,i}\left(z,j\right)\right|p_{g}^{S,i}\left(z\right)dz\\
 & \stackrel{(1)}{\leq}L\sum_{i=1}^{K}\pi_{i}\sum_{j=1}^{C}\int_{\mathcal{X}}\left|f^{T}\left(z,j\right)-f^{S,i}\left(z,j\right)\right|p^{S,i}\left(x\right)dx\\
 & \leq L\max_{i\in[K]}\mathbb{E}_{\mathbb{P}^{S,i}}\left[\|\Delta p^{i}\left(y|x\right)\|_{1}\right].
\end{aligned}
\]

Here we note that $\stackrel{(1)}{\leq}$ results from (ii) in Proposition
\ref{prop:latent_input} with $c\left(\beta,i\right)=1$ for any $\beta\in\mathcal{Y}_{\Delta}$
and $i\in\left[C\right]$. Furthermore, $\Delta p^{i}(y|x):=\left[\left|f^{T}(x,y)-f^{S,i}(x,y)\right|\right]_{y=1}^{C}$
is the absolute single point label shift on input space between the
source domain $\mathbb{D}^{S,i}$ and the target domain $\mathbb{D}^{T}$.

With these two terms, we have the upper bound for hybrid domain as

\[
\mathcal{L}\left(\hat{h},\mathbb{D}_{g}^{hy}\right)\leq\sum_{i=1}^{K}\pi_{i}\mathcal{L}\left(\hat{f},\mathbb{D}^{S,i}\right)+L\max_{i\in[K]}\mathbb{E}_{\mathbb{P}^{S,i}}\left[\|\Delta p^{i}(y|x)\|_{1}\right].
\]

Next, we relate the loss on target $\mathbb{D}_{g}^{T}$ to hybrid
domain $\mathbb{D}_{g}^{hy}$, which differs only at the feature marginals.

\begin{gather*}
\begin{aligned}\left|\mathcal{L}\left(\hat{h},\mathbb{D}_{g}^{T}\right)-\mathcal{L}\left(\hat{h},\mathbb{D}_{g}^{hy}\right)\right| & =\left|\int\ell\left(\hat{h}(z),h^{T}(z)\right)\left(p_{g}^{T}(z)-p_{g}^{\pi}(z)\right)dz\right|\\
 & \leq\int L\left|p_{g}^{T}(z)-p_{g}^{\pi}(z)\right|dz\\
 & \leq L\int\left|\sqrt{p_{g}^{T}(z)}+\sqrt{p_{g}^{\pi}(z)}\right|\left|\sqrt{p_{g}^{T}(z)}-\sqrt{p_{g}^{\pi}(z)}\right|dz\\
 & \leq L\left[\int\left(\sqrt{p_{g}^{T}(z)}+\sqrt{p_{g}^{\pi}(z)}\right)^{2}dz\right]^{1/2}\left[\int\left(\sqrt{p_{g}^{T}(z)}-\sqrt{p_{g}^{\pi}(z)}\right)^{2}dz\right]^{1/2}\\
 & \leq\frac{L}{\sqrt{2}}\left[\int\left(p_{g}^{T}(z)+p_{g}^{\pi}(z)+2\sqrt{p_{g}^{T}(z)p_{g}^{\pi}(z)}\right)dz\right]^{1/2}\\
 & \times\left[2\int\left(\sqrt{p_{g}^{T}(z)}-\sqrt{p_{g}^{\pi}(z)}\right)^{2}dz\right]^{1/2}\\
 & \leq\frac{L}{\sqrt{2}}\left[2+2\left(\int p_{g}^{T}(z)dz\int p_{g}^{\pi}(z)dz\right)^{1/2}\right]^{1/2}d_{1/2}\left(\mathbb{P}_{g}^{T},\mathbb{P}_{g}^{\pi}\right)\\
 & \leq L\sqrt{2}d_{1/2}\left(\mathbb{P}_{g}^{T},\mathbb{P}_{g}^{\pi}\right).
\end{aligned}
\end{gather*}
In the above proof, we repeatedly invoke Cauchy-Schwartz inequality
$\left|\int f(z)g(z)dz\right|^{2}\leq\int\left|f(z)\right|^{2}dz\int\left|g(z)\right|^{2}dz$.
Moreover, for the sake of completeness, we reintroduce the definition
of the square root Hellinger distance 
\[
d_{1/2}\left(\mathbb{P}_{g}^{T},\mathbb{P}_{g}^{\pi}\right)=\left[2\int\left(\sqrt{p_{g}^{T}(z)}-\sqrt{p_{g}^{\pi}(z)}\right)^{2}dz\right]^{1/2}.
\]

To this end, we obtain the upper bound for target loss related to
loss on souce mixture, a label shift term on input space, and a data
shift term between target domain and source mixture on feature space.

\begin{equation}
\mathcal{L}\left(\hat{h},\mathbb{D}_{g}^{T}\right)\leq\sum_{i=1}^{K}\pi_{i}\mathcal{L}\left(\hat{f},\mathbb{D}^{S,i}\right)+L\max_{i\in[K]}\mathbb{E}_{\mathbb{P}^{S,i}}\left[\|\Delta p^{i}(y|x)\|_{1}\right]+L\sqrt{2}d_{1/2}\left(\mathbb{P}_{g}^{T},\mathbb{P}_{g}^{\pi}\right).\label{eq:latent_bound}
\end{equation}

Finally, using the fact that loss on feature space equal loss on input
space (Corollary \ref{cor:feature-loss-input-loss}), we have

\[
\mathcal{L}\left(\hat{f},\mathbb{D}^{T}\right)\leq\sum_{i=1}^{K}\pi_{i}\mathcal{L}\left(\hat{f},\mathbb{D}^{S,i}\right)+L\max_{i\in[K]}\mathbb{E}_{\mathbb{P}^{S,i}}\left[\|\Delta p^{i}(y|x)\|_{1}\right]+L\sqrt{2}d_{1/2}\left(\mathbb{P}_{g}^{T},\mathbb{P}_{g}^{\pi}\right).
\]

That concludes our proof.
\end{proof}
This bound is novel since it relates loss on input space and data
shift on feature space. This allows us to further investigate how
source-source compression and source-target compression affect learning.
First, we prove a lemma showing decomposition of data shift between
target domain and source mixture $d_{1/2}\left(\mathbb{P}_{g}^{T},\mathbb{P}_{g}^{\pi}\right)$
to a sum of data shifts between target domain and source domains.
\begin{lem}
\label{lem:decompose-mixture-distance}Given a source mixture and
a target domain, we have the following

\[
d_{1/2}\left(\mathbb{P}_{g}^{T},\mathbb{P}_{g}^{\pi}\right)\leq\sum_{j=1}^{K}\sqrt{\pi_{j}}d_{1/2}\left(\mathbb{P}_{g}^{T},\mathbb{P}_{g}^{S,j}\right)
\]
\end{lem}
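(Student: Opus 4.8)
The plan is to reduce the claim to the triangle inequality for the metric $d_{1/2}$ together with a convexity/square-root argument. First I would recall that, as stated in the excerpt, $d_{1/2} = \sqrt{D_{1/2}}$ is a proper metric on distributions, so in particular it satisfies the triangle inequality. The source mixture $\mathbb{P}_g^\pi = \sum_{j=1}^K \pi_j \mathbb{P}_g^{S,j}$ is a convex combination, so the natural first step is to control the Hellinger distance from $\mathbb{P}_g^T$ to a mixture by the Hellinger distances to the mixture components.

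The key step is the inequality $d_{1/2}\!\left(\mathbb{P}_g^T, \sum_j \pi_j \mathbb{P}_g^{S,j}\right) \leq \sum_j \sqrt{\pi_j}\, d_{1/2}\!\left(\mathbb{P}_g^T, \mathbb{P}_g^{S,j}\right)$. I would prove this directly from the integral definition. Writing $p_g^T$ and $p_g^{S,j}$ for the densities and using $\sqrt{p_g^T} = \sqrt{\sum_j \pi_j p_g^T}$ (since $\sum_j \pi_j = 1$), I would estimate
\[
d_{1/2}\left(\mathbb{P}_g^T,\mathbb{P}_g^\pi\right) = \left[2\int\left(\sqrt{\textstyle\sum_j \pi_j p_g^T(z)} - \sqrt{\textstyle\sum_j \pi_j p_g^{S,j}(z)}\right)^2 dz\right]^{1/2}.
\]
The pointwise bound I need is $\left|\sqrt{\sum_j \pi_j a_j} - \sqrt{\sum_j \pi_j b_j}\right| \leq \sum_j \sqrt{\pi_j}\,\left|\sqrt{a_j} - \sqrt{b_j}\right|$ for nonnegative $a_j, b_j$. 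This follows because $u \mapsto \sqrt{u}$ is subadditive, giving $\sqrt{\sum_j \pi_j a_j} \leq \sum_j \sqrt{\pi_j a_j}$, hence $\sqrt{\sum_j \pi_j a_j} - \sqrt{\sum_j \pi_j b_j} \leq \sum_j \sqrt{\pi_j}(\sqrt{a_j} - \sqrt{b_j})$ whenever this is used together with the symmetric inequality; more cleanly, one applies the triangle inequality in $L^2(dz)$ after writing $\sqrt{p_g^\pi} $ in terms of the $\sqrt{\pi_j p_g^{S,j}}$. Concretely, I would argue via the triangle inequality in $L^2$: treat $z \mapsto \sqrt{p_g^T(z)}$ and $z \mapsto \sqrt{p_g^{S,j}(z)}$ as $L^2$ functions, note that $\left\|\sqrt{p_g^T} - \sqrt{p_g^\pi}\right\|_{L^2} = \tfrac{1}{\sqrt 2} d_{1/2}(\mathbb{P}_g^T,\mathbb{P}_g^\pi)$, and bound this using subadditivity of the square root applied to $p_g^\pi$.

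The one genuinely delicate point — the main obstacle — is handling $\sqrt{p_g^\pi} = \sqrt{\sum_j \pi_j p_g^{S,j}}$, which is \emph{not} equal to $\sum_j \sqrt{\pi_j}\sqrt{p_g^{S,j}}$; only the inequality $\sqrt{\sum_j \pi_j p_g^{S,j}} \le \sum_j \sqrt{\pi_j p_g^{S,j}} = \sum_j \sqrt{\pi_j}\sqrt{p_g^{S,j}}$ holds pointwise. I would combine this with the reverse-direction bound so that the absolute value $\left|\sqrt{p_g^T} - \sqrt{p_g^\pi}\right|$ is controlled by $\sum_j \sqrt{\pi_j}\left|\sqrt{p_g^T} - \sqrt{p_g^{S,j}}\right|$ pointwise; a clean way is to observe that for each $z$, picking the mixture as a convex combination, $\sqrt{p_g^\pi(z)}$ lies between $\min_j \sqrt{p_g^{S,j}(z)}$-type bounds and use the elementary inequality $\big|\sqrt{\sum \pi_j a_j}-\sqrt{\sum\pi_j b_j}\big|\le\sum\sqrt{\pi_j}\,|\sqrt{a_j}-\sqrt{b_j}|$, which itself can be verified by squaring or by the integral Minkowski inequality treating $j$ as a measure with weights $\pi_j$. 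Once this pointwise inequality is in hand, Minkowski's inequality in $L^2(dz)$ over the index $j$ immediately yields
\[
\left[\int\left(\sqrt{p_g^T} - \sqrt{p_g^\pi}\right)^2 dz\right]^{1/2} \leq \sum_{j=1}^K \sqrt{\pi_j}\left[\int\left(\sqrt{p_g^T} - \sqrt{p_g^{S,j}}\right)^2 dz\right]^{1/2},
\]
and multiplying through by $\sqrt 2$ gives exactly the claimed bound. I expect the write-up to be short, with all the content in the pointwise square-root inequality and one application of Minkowski.
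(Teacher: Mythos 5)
Your proposal is correct and in substance matches the paper's proof: your pointwise inequality $\bigl|\sqrt{\sum_j\pi_j a_j}-\sqrt{\sum_j\pi_j b_j}\bigr|\le\bigl[\sum_j\pi_j\bigl(\sqrt{a_j}-\sqrt{b_j}\bigr)^2\bigr]^{1/2}\le\sum_j\sqrt{\pi_j}\,\bigl|\sqrt{a_j}-\sqrt{b_j}\bigr|$ is exactly the paper's Cauchy--Schwarz step $\sqrt{p_g^{T}(z)p_g^{\pi}(z)}\ge\sum_j\pi_j\sqrt{p_g^{T}(z)p_g^{S,j}(z)}$ combined with subadditivity of the square root, and your concluding Minkowski inequality in $L^2(dz)$ plays the same role as the paper's $\sqrt{\sum_j c_j}\le\sum_j\sqrt{c_j}$. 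One caveat: your first heuristic (subadditivity alone yielding the signed inequality) fails because the bound on $\sqrt{\sum_j\pi_j b_j}$ goes the wrong way, but you flag this yourself, and the fix you propose --- the reverse triangle (Minkowski) inequality over $j$ with weights $\pi_j$ --- is valid, so the argument as finally stated goes through.
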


\begin{proof}
Firstly, we observe that

\[
\begin{aligned}d_{1/2}\left(\mathbb{P}_{g}^{T},\mathbb{P}_{g}^{\pi}\right) & =\left[2\int\left(\sqrt{p_{g}^{T}(z)}-\sqrt{p_{g}^{\pi}(z)}\right)^{2}dz\right]^{1/2}\\
 & =\left[2\int\left(p_{g}^{T}(z)+p_{g}^{\pi}(z)-2\sqrt{p_{g}^{T}(z)p_{g}^{\pi}(z)}\right)dz\right]^{1/2}.
\end{aligned}
\]

Secondly, we use Cauchy-Schwartz inequality to obtain 

\[
\begin{aligned}p_{g}^{T}(z)p_{g}^{\pi}(z) & =\left(\sum_{j=1}^{K}\pi_{j}p_{g}^{T}(z)\right)\left(\sum_{j=1}^{K}\pi_{j}p_{g}^{S,j}(z)\right)\\
 & \geq\left(\sum_{j=1}^{K}\pi_{j}\sqrt{p_{g}^{T}(z)p_{g}^{S,j}(z)}\right)^{2}.
\end{aligned}
\]

Therefore,we arrive at

\[
\begin{aligned}d_{1/2}\left(\mathbb{P}_{g}^{T},\mathbb{P}_{g}^{\pi}\right) & \leq\left[2\int\left(\sum_{j=1}^{K}\pi_{j}p_{g}^{T}(z)+\sum_{j=1}^{K}\pi_{j}p_{g}^{S,j}(z)-2\sum_{j=1}^{K}\pi_{j}\sqrt{p_{g}^{T}(z)p_{g}^{S,j}(z)}\right)dz\right]^{1/2}\\
 & =\left[\sum_{j=1}^{K}\pi_{j}2\int\left(p_{g}^{T}(z)+p_{g}^{S,j}(z)-2\sqrt{p_{g}^{T}(z)p_{g}^{S,j}(z)}\right)dz\right]^{1/2}\\
 & \leq\sum_{j=1}^{K}\left[\pi_{j}2\int\left(p_{g}^{T}(z)+p_{g}^{S,j}(z)-2\sqrt{p_{g}^{T}(z)p_{g}^{S,j}(z)}\right)dz\right]^{1/2}\\
 & =\sum_{j=1}^{K}\sqrt{\pi_{j}}d_{1/2}\left(\mathbb{P}_{g}^{T},\mathbb{P}_{g}^{S,j}\right).
\end{aligned}
\]
\end{proof}
Now we are ready to prove the bound which motivate compressed DI representation.
\begin{thm}
\label{apx_thm:compact_dom_inv}(\textbf{Theorem 3 in the main paper})
Consider mixture of source domains $\mathbb{D}^{\pi}=\sum_{i=1}^{K}\pi_{i}\mathbb{D}^{S,i}$
and target domain $\mathbb{D}^{T}$. Let $\ell$ be any loss function
upper-bounded by a positive constant $L$. For any hypothesis $\hat{f}:\mathcal{X}\mapsto\mathcal{Y}_{\Delta}$
where $\hat{f}=\hat{h}\circ g$ with $g:\mathcal{X}\mapsto\mathcal{Z}$
and $\hat{h}:\mathcal{Z}\mapsto\mathcal{Y}_{\Delta}$, the target
loss on input space is upper bounded 
\end{thm}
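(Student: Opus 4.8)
The plan is to obtain this bound as a refinement of Theorem~\ref{apx_thm:general_dom_inv-1}: the first two terms on the right-hand side of that theorem---the $\pi$-weighted source loss $\sum_{i=1}^{K}\pi_{i}\mathcal{L}(\hat f,\mathbb{D}^{S,i})$ and the label-shift term $L\max_{i\in[K]}\mathbb{E}_{\mathbb{P}^{S,i}}[\|\Delta p^{i}(y|x)\|_{1}]$---are already exactly the first two terms of the claimed inequality, so the only real work is to replace the single source--target discrepancy term $L\sqrt{2}\,d_{1/2}\left(\mathbb{P}_{g}^{T},\mathbb{P}_{g}^{\pi}\right)$ by the two double sums
\[
\sum_{i=1}^{K}\sum_{j=1}^{K}\frac{L\sqrt{2\pi_{j}}}{K}\,d_{1/2}\left(\mathbb{P}_{g}^{T},\mathbb{P}_{g}^{S,i}\right)+\sum_{i=1}^{K}\sum_{j=1}^{K}\frac{L\sqrt{2\pi_{j}}}{K}\,d_{1/2}\left(\mathbb{P}_{g}^{S,i},\mathbb{P}_{g}^{S,j}\right).
\]
So the reduced goal is the purely metric statement $L\sqrt{2}\,d_{1/2}\left(\mathbb{P}_{g}^{T},\mathbb{P}_{g}^{\pi}\right)$ is at most the displayed quantity, after which Corollary~\ref{cor:feature-loss-input-loss} is used once more to phrase everything in terms of the input-space loss $\mathcal{L}(\hat f,\mathbb{D}^{T})$.

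First I would apply Lemma~\ref{lem:decompose-mixture-distance} to split off the mixture: $d_{1/2}\left(\mathbb{P}_{g}^{T},\mathbb{P}_{g}^{\pi}\right)\le\sum_{j=1}^{K}\sqrt{\pi_{j}}\,d_{1/2}\left(\mathbb{P}_{g}^{T},\mathbb{P}_{g}^{S,j}\right)$. Each $d_{1/2}\left(\mathbb{P}_{g}^{T},\mathbb{P}_{g}^{S,j}\right)$ still references the target directly; to inject the source--source distances I would route through an arbitrary source index $i$ using the triangle inequality, valid because (per the notation section) $d_{1/2}=\sqrt{D_{1/2}}$ is a proper metric:
\[
d_{1/2}\left(\mathbb{P}_{g}^{T},\mathbb{P}_{g}^{S,j}\right)\le d_{1/2}\left(\mathbb{P}_{g}^{T},\mathbb{P}_{g}^{S,i}\right)+d_{1/2}\left(\mathbb{P}_{g}^{S,i},\mathbb{P}_{g}^{S,j}\right)\qquad\text{for every }i\in[K].
\]
Averaging the right-hand side over $i=1,\dots,K$ gives $d_{1/2}\left(\mathbb{P}_{g}^{T},\mathbb{P}_{g}^{S,j}\right)\le\frac{1}{K}\sum_{i=1}^{K}\bigl[d_{1/2}(\mathbb{P}_{g}^{T},\mathbb{P}_{g}^{S,i})+d_{1/2}(\mathbb{P}_{g}^{S,i},\mathbb{P}_{g}^{S,j})\bigr]$. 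Substituting this into the lemma's bound, multiplying through by $L\sqrt{2}$, and distributing $\sqrt{\pi_{j}}$ into the constant produces exactly the two double sums above; adding back the two already-matching terms of Theorem~\ref{apx_thm:general_dom_inv-1} finishes the argument.

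I do not expect a genuine obstacle: the whole proof is a short concatenation of Theorem~\ref{apx_thm:general_dom_inv-1}, Lemma~\ref{lem:decompose-mixture-distance}, and the metric triangle inequality, with no new inequality beyond the Cauchy--Schwarz steps already carried out inside those cited results. The one mildly non-obvious move is the averaging-over-$i$ device: one could route through any single fixed source, but averaging the triangle inequality over all $i$ is precisely what yields the symmetric $\frac1K\sum_i\sum_j$ form containing both the source--target terms $d_{1/2}(\mathbb{P}_{g}^{T},\mathbb{P}_{g}^{S,i})$ and the source--source terms $d_{1/2}(\mathbb{P}_{g}^{S,i},\mathbb{P}_{g}^{S,j})$---the latter being the quantity that the later ``compressed DI representation'' analysis drives to zero. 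The remaining care is purely bookkeeping of the $\sqrt{2\pi_{j}}/K$ constants and an application of Corollary~\ref{cor:feature-loss-input-loss} to pass between latent-space and input-space losses, exactly as at the end of the proof of Theorem~\ref{apx_thm:general_dom_inv-1}.
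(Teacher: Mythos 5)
Your proposal is correct and follows essentially the same route as the paper's proof: start from Theorem~\ref{apx_thm:general_dom_inv-1}, decompose $d_{1/2}\left(\mathbb{P}_{g}^{T},\mathbb{P}_{g}^{\pi}\right)$ via Lemma~\ref{lem:decompose-mixture-distance}, apply the triangle inequality through an intermediate source index, and average over $i\in[K]$ to get the symmetric double sums. The only cosmetic difference is your extra invocation of Corollary~\ref{cor:feature-loss-input-loss}, which is unnecessary since the bound from Theorem~\ref{apx_thm:general_dom_inv-1} is already stated for the input-space loss, but this is harmless.
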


\textit{
\begin{equation}
\begin{aligned}\mathcal{L}\left(\hat{f},\mathbb{D}^{T}\right) & \leq\sum_{i=1}^{K}\pi_{i}\mathcal{L}\left(\hat{f},\mathbb{D}^{S,i}\right)+L\max_{i\in[K]}\mathbb{E}_{\mathbb{P}^{S,i}}\left[\|\Delta p^{i}(y|x)\|_{1}\right]\\
 & +\sum_{i=1}^{K}\sum_{j=1}^{K}\frac{L\sqrt{2\pi_{j}}}{K}\left(d_{1/2}\left(\mathbb{P}_{g}^{T},\mathbb{P}_{g}^{S,i}\right)+d_{1/2}\left(\mathbb{P}_{g}^{S,i},\mathbb{P}_{g}^{S,j}\right)\right)
\end{aligned}
\label{apx_eq:compact_feature_inq}
\end{equation}
}
\begin{proof}
In the previous Theorem \ref{apx_thm:general_dom_inv-1}, the upper
bound for target loss is

\[
\mathcal{L}\left(\hat{f},\mathbb{D}^{T}\right)\leq\sum_{i=1}^{K}\pi_{i}\mathcal{L}\left(\hat{f},\mathbb{D}^{S,i}\right)+L\max_{i\in[K]}\mathbb{E}_{\mathbb{P}^{S,i}}\left[\|\Delta p^{i}(y|x)\|_{1}\right]+L\sqrt{2}d_{1/2}\left(\mathbb{P}_{g}^{T},\mathbb{P}_{g}^{\pi}\right).
\]

Using Lemma \ref{lem:decompose-mixture-distance}, we have

\[
\begin{aligned}d_{1/2}\left(\mathbb{P}_{g}^{T},\mathbb{P}_{g}^{\pi}\right) & \leq\sum_{j=1}^{K}\sqrt{\pi_{j}}d_{1/2}\left(\mathbb{P}_{g}^{T},\mathbb{P}_{g}^{S,j}\right)\end{aligned}
\]

Next, we use the triangle inequality for square root Hellinger distance

\[
\begin{aligned}d_{1/2}\left(\mathbb{P}_{g}^{T},\mathbb{P}_{g}^{\pi}\right) & \leq\sum_{j=1}^{K}\sqrt{\pi_{j}}d_{1/2}\left(\mathbb{P}_{g}^{T},\mathbb{P}_{g}^{S,j}\right)\\
 & \leq\sum_{j=1}^{K}\sqrt{\pi_{j}}\left(d_{1/2}\left(\mathbb{P}_{g}^{T},\mathbb{P}_{g}^{S,i}\right)+d_{1/2}\left(\mathbb{P}_{g}^{S,i},\mathbb{P}_{g}^{S,j}\right)\right)
\end{aligned}
\]

Therefore, by average over all $\mathbb{P}_{g}^{T},\mathbb{P}_{g}^{S,i}$
pairs,

\[
d_{1/2}\left(\mathbb{P}_{g}^{T},\mathbb{P}_{g}^{\pi}\right)=\sum_{i=1}^{K}\frac{1}{K}d_{1/2}\left(\mathbb{P}_{g}^{T},\mathbb{P}_{g}^{\pi}\right)\leq\sum_{i=1}^{K}\sum_{j=1}^{K}\frac{\sqrt{\pi_{j}}}{K}\left(d_{1/2}\left(\mathbb{P}_{g}^{T},\mathbb{P}_{g}^{S,i}\right)+d_{1/2}\left(\mathbb{P}_{g}^{S,i},\mathbb{P}_{g}^{S,j}\right)\right)
\]

We obtain the conclusion of our proof

\[
\begin{aligned}\mathcal{L}\left(\hat{f},\mathbb{D}^{T}\right) & \leq\sum_{i=1}^{K}\pi_{i}\mathcal{L}\left(\hat{f},\mathbb{D}^{S,i}\right)+L\max_{i\in[K]}\mathbb{E}_{\mathbb{P}^{S,i}}\left[\|\Delta p^{i}(y|x)\|_{1}\right]\\
 & +\sum_{i=1}^{K}\sum_{j=1}^{K}\frac{L\sqrt{2\pi_{j}}}{K}\left(d_{1/2}\left(\mathbb{P}_{g}^{T},\mathbb{P}_{g}^{S,i}\right)+d_{1/2}\left(\mathbb{P}_{g}^{S,i},\mathbb{P}_{g}^{S,j}\right)\right)
\end{aligned}
\]
\end{proof}

\section{Appendix B: DI Representation's Characteristics}

\subsection{General Domain-Invariant Representations}

In the main paper, we defined general DI representation via minimization
of source loss $\min_{g\in\mathcal{G}}\min_{\hat{h}\in\mathcal{H}}\sum_{i=1}^{K}\pi_{i}\mathcal{L}\left(\hat{h},h^{S,i},\mathbb{P}_{g}^{S,i}\right)$.
We then proposed to view the optimization problem $\min_{\hat{h}\in\mathcal{H}}\sum_{i=1}^{K}\pi_{i}\mathcal{L}\left(\hat{h},h^{S,i},\mathbb{P}_{g}^{S,i}\right)$
as calculating a type of divergence, i.e., hypothesis-aware divergence.
To understand the connection between the two, we first consider the
classification problem where samples are drawn from a mixture $z\sim\mathbb{Q}^{\alpha}=\sum_{i=1}^{C}\alpha_{i}\mathbb{Q}_{i}$,
with $\mathbb{Q}_{i}$ defined on $\mathcal{Z}$ and density being
$q_{i}\left(z\right)$, and the task is to predict which distributions
$\mathbb{Q}_{1},...,\mathbb{Q}_{C}$ the samples originate from, i.e.,
labels being $1,\ldots,C$. Here, the hypothesis class $\mathcal{H}$
is assumed to have infinite capacity, and the objective is to minimize
$\min_{\hat{h}\in\mathcal{H}}\mathcal{L}_{\mathbb{Q}_{1:C}}^{\alpha}\left(\hat{h}\right)=\min_{\hat{h}\in\mathcal{H}}\sum_{i=1}^{C}\alpha_{i}\mathcal{L}\left(\hat{h},\mathbb{Q}_{i}\right)$.

\subsubsection{Hypothesis-Aware Divergence}
\begin{thm}
\label{apx_thm:op_distance} (\textbf{Theorem 5 in the main paper})
Assuming the hypothesis class $\mathcal{H}$ has infinite capacity,
we define the hypothesis-aware divergence for multiple distributions
as

\begin{equation}
D^{\alpha}\left(\mathbb{Q}_{1},...,\mathbb{Q}_{C}\right)=-\min_{\hat{h}\in\mathcal{H}}\mathcal{L}_{\mathbb{Q}_{1:C}}^{\alpha}\left(\hat{h}\right)+\inf_{\beta\in\mathcal{Y}_{\Delta}}\left(\sum_{i=1}^{C}l\left(\beta,i\right)\alpha_{i}\right).\label{apx_eq:loss_distance_connection}
\end{equation}

This divergence is a proper divergence among $\mathbb{Q}_{1},...,\mathbb{Q}_{C}$
in the sense that $D^{\alpha}\left(\mathbb{Q}_{1},...,\mathbb{Q}_{C}\right)\geq0$
for all $\mathbb{Q}_{1},...,\mathbb{Q}_{C}$ and $\alpha\in\mathcal{Y}_{\simplex}$,
and $D^{\alpha}\left(\mathbb{Q}_{1},...,\mathbb{Q}_{C}\right)=0$
if $\mathbb{Q}_{1}=...=\mathbb{Q}_{C}$.
\end{thm}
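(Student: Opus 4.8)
The plan is to use the infinite capacity of $\mathcal{H}$ to collapse the minimization over hypotheses into a pointwise minimization, and then to read off the resulting quantity as a Jensen-type gap of a concave ``Bayes risk'' functional. Concretely, I would first expand the mixed loss as $\mathcal{L}^{\alpha}_{\mathbb{Q}_{1:C}}(\hat h)=\sum_{i=1}^{C}\alpha_{i}\int l(\hat h(z),i)\,q_{i}(z)\,dz=\int q^{\alpha}(z)\sum_{i=1}^{C}w_{i}(z)\,l(\hat h(z),i)\,dz$, where $q^{\alpha}=\sum_{i}\alpha_{i}q_{i}$ is the density of $\mathbb{Q}^{\alpha}$ and $w_{i}(z):=\alpha_{i}q_{i}(z)/q^{\alpha}(z)$ is the posterior probability that a sample located at $z$ was drawn from $\mathbb{Q}_{i}$. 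Note $w(z)\in\mathcal{Y}_{\Delta}$ for $\mathbb{Q}^{\alpha}$-a.e.\ $z$, and $\mathbb{E}_{z\sim\mathbb{Q}^{\alpha}}[w(z)]=\alpha$ since $\int q^{\alpha}(z)w_{i}(z)\,dz=\alpha_{i}$. Define the conditional Bayes risk $L^{\ast}(u):=\inf_{\beta\in\mathcal{Y}_{\Delta}}\sum_{i=1}^{C}u_{i}\,l(\beta,i)$ for $u\in\mathcal{Y}_{\Delta}$. Because $\mathcal{H}$ has infinite capacity, the value $\hat h(z)$ may be chosen independently at each $z$, so the minimization passes inside the integral: $\min_{\hat h\in\mathcal{H}}\mathcal{L}^{\alpha}_{\mathbb{Q}_{1:C}}(\hat h)=\int q^{\alpha}(z)\,L^{\ast}(w(z))\,dz=\mathbb{E}_{z\sim\mathbb{Q}^{\alpha}}[L^{\ast}(w(z))]$. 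Since the additive constant in the theorem is exactly $\inf_{\beta}\sum_{i}l(\beta,i)\alpha_{i}=L^{\ast}(\alpha)$, this yields the clean identity $D^{\alpha}(\mathbb{Q}_{1},\dots,\mathbb{Q}_{C})=L^{\ast}(\alpha)-\mathbb{E}_{z\sim\mathbb{Q}^{\alpha}}[L^{\ast}(w(z))]$.

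Next I would observe that $L^{\ast}$ is concave on $\mathcal{Y}_{\Delta}$, being an infimum of the linear maps $u\mapsto\sum_{i}u_{i}l(\beta,i)$. Applying Jensen's inequality to the concave $L^{\ast}$ and the $\mathcal{Y}_{\Delta}$-valued random vector $w(z)$ with mean $\alpha$ gives $\mathbb{E}_{z\sim\mathbb{Q}^{\alpha}}[L^{\ast}(w(z))]\le L^{\ast}(\alpha)$, hence $D^{\alpha}\ge 0$ for all $\mathbb{Q}_{1},\dots,\mathbb{Q}_{C}$ and all $\alpha\in\mathcal{Y}_{\Delta}$. For the identifiability clause, if $\mathbb{Q}_{1}=\cdots=\mathbb{Q}_{C}$ then $q_{i}=q^{\alpha}$ for every $i$, so $w_{i}(z)=\alpha_{i}$ for $\mathbb{Q}^{\alpha}$-a.e.\ $z$, whence $L^{\ast}(w(z))=L^{\ast}(\alpha)$ a.e.\ and $D^{\alpha}=0$. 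One can additionally remark that, written in this form, $D^{\alpha}$ is precisely the $f$-divergence (a generalized Jensen--Shannon divergence) generated by the concave function $-L^{\ast}$, which justifies the ``$f$-divergence'' phrasing in the main-paper version of the statement.

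The step I expect to be the main obstacle is the interchange of $\min_{\hat h\in\mathcal{H}}$ with the integral: one must argue that the pointwise minimizing assignment $z\mapsto\beta^{\ast}(z)$ of $\sum_{i}w_{i}(z)\,l(\beta,i)$ can be chosen measurably and realized (or approximated arbitrarily well) by some $\hat h\in\mathcal{H}$. This is exactly what the ``infinite capacity'' hypothesis is meant to license, but it deserves a careful statement --- a measurable-selection argument together with attainment (or uniform approximation) of the infimum defining $L^{\ast}$, using continuity of $l(\cdot,i)$ on the compact simplex $\mathcal{Y}_{\Delta}$. Once that reduction is granted, the remaining ingredients --- the posterior-mean identity $\mathbb{E}_{\mathbb{Q}^{\alpha}}[w(z)]=\alpha$, concavity of $L^{\ast}$, and Jensen's inequality --- are routine.
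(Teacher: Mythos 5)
Your proposal is correct and follows essentially the same route as the paper's proof: rewrite the mixed loss over the mixture $\mathbb{Q}^{\alpha}$, use infinite capacity to push the minimization inside the integral, recognize the pointwise Bayes risk as a concave function, and apply Jensen's inequality to get nonnegativity, with the equality case when $\mathbb{Q}_{1}=\cdots=\mathbb{Q}_{C}$ following because the pointwise weights become constant. The only differences are cosmetic (you parametrize by the posterior vector $w(z)$ with mean $\alpha$, while the paper keeps $\alpha_{i}$ inside its concave $\phi$ and uses the likelihood-ratio vector) plus your welcome extra care about the measurable-selection step that the ``infinite capacity'' assumption is implicitly licensing.
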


\begin{proof}
Data is sampled from the mixture $\mathbb{Q}^{\alpha}$ by firstly
sampling domain index $i\sim Cat\left(\alpha\right)$, then sampling
data $z\sim\mathbb{Q}_{i}$ and label with $i$. We examine the the
total expected loss for any hypothesis $\hat{h}\in\mathcal{H}$, which
is

\[
\begin{aligned}\mathcal{L}_{\mathbb{Q}_{1:C}}^{\alpha}\left(\hat{h}\right) & :=\sum_{i=1}^{C}\alpha_{i}\mathcal{L}\left(\hat{h},\mathbb{Q}_{i}\right)\\
 & =\sum_{i=1}^{C}\alpha_{i}\int l\left(\hat{h}\left(z\right),i\right)q_{i}\left(z\right)dz
\end{aligned}
\]

We would like to minimize this loss, which leads to

\begin{equation}
\begin{aligned}\min_{\hat{h}\in\mathcal{H}}\mathcal{L}_{\mathbb{Q}_{1:C}}^{\alpha}\left(\hat{h}\right) & =\min_{\hat{h}\in\mathcal{H}}\sum_{i=1}^{C}\alpha_{i}\int l\left(\hat{h}\left(z\right),i\right)q_{i}\left(z\right)dz\\
 & \stackrel{\left(1\right)}{=}\min_{\hat{h}\in\mathcal{H}}\int\left(\sum_{i=1}^{C}\alpha_{i}l\left(\hat{h}\left(z\right),i\right)\frac{q_{i}\left(z\right)}{q^{\alpha}\left(z\right)}\right)q^{\alpha}\left(z\right)dz\\
 & \stackrel{\left(2\right)}{=}\int\min_{\hat{h}\in\mathcal{H}}\left(\sum_{i=1}^{C}\alpha_{i}l\left(\hat{h}\left(z\right),i\right)\frac{q_{i}\left(z\right)}{q^{\alpha}\left(z\right)}\right)q^{\alpha}\left(z\right)dz\\
 & =\int\min_{\beta\in\mathcal{Y}_{\Delta}}\left(\sum_{i=1}^{C}\alpha_{i}l\left(\beta,i\right)\frac{q_{i}\left(z\right)}{q^{\alpha}\left(z\right)}\right)q^{\alpha}\left(z\right)dz\\
 & \stackrel{\left(3\right)}{\leq}\min_{\beta\in\mathcal{Y}_{\Delta}}\left(\int\sum_{i=1}^{C}\alpha_{i}l\left(\beta,i\right)q_{i}\left(z\right)dz\right)\\
 & =\min_{\beta\in\mathcal{Y}_{\Delta}}\left(\sum_{i=1}^{C}\alpha_{i}l\left(\beta,i\right)\right).
\end{aligned}
\label{eq:convex_inequality}
\end{equation}

For $\left(1\right)$, $q^{\alpha}\left(z\right)=\sum_{i=1}^{C}\alpha_{i}q_{i}\left(z\right)$
is introduced as the density of the mixture, whereas for $\left(2\right)$,
we use the fact that $\mathcal{H}$ has infinite capacity, leading
to the equality $\min_{\hat{h}\in\mathcal{H}}\int f\left(\hat{h}\left(x\right)\right)q\left(x\right)dx=\int\min_{\hat{h}\in\mathcal{H}}f\left(\hat{h}\left(x\right)\right)q\left(x\right)dx$.
Moreover, for $\left(3\right)$, the property of concave function
$\phi\left(t\right)=\min_{\beta\in\mathcal{Y}_{\Delta}}\left(\sum_{i=1}^{C}\alpha_{i}l\left(\beta,i\right)t_{i}\right)$
with $t\in\mathcal{Y}_{\Delta}$ is invoked, i.e., $\mathbb{E}_{z\sim Q}\left[\phi\left(t\left(z\right)\right)\right]\leq\phi\left(\mathbb{E}_{z\sim Q}\left[t\left(z\right)\right]\right)$.

This hints us to define a non-zero divergence $D^{\alpha}$ between
multiple distributions $\mathbb{Q}_{1},...,\mathbb{Q}_{C}$ as

\[
\begin{aligned}D^{\alpha}\left(\mathbb{Q}_{1},...,\mathbb{Q}_{C}\right) & =-\min_{\hat{h}\in\mathcal{H}}\mathcal{L}_{\mathbb{Q}_{1:C}}^{\alpha}\left(\hat{h}\right)+\inf_{\beta\in\mathcal{Y}_{\Delta}}\left(\sum_{i=1}^{C}l\left(\beta,i\right)\alpha_{i}\right),\\
 & =\int-\phi\left(\left[\frac{q_{i}\left(z\right)}{q^{\alpha}\left(z\right)}\right]_{i=1}^{C}\right)q^{\alpha}\left(z\right)dz+\inf_{\beta\in\mathcal{Y}_{\Delta}}\left(\sum_{i=1}^{C}l\left(\beta,i\right)\alpha_{i}\right)
\end{aligned}
\]

which is a proper f-divergence, since $-\phi\left(t\right)$ is a
convex function, and $\inf_{\beta\in\mathcal{Y}_{\Delta}}\left(\sum_{i=1}^{C}l\left(\beta,i\right)\alpha_{i}\right)$
is just a constant. Moreover, $D^{\alpha}\left(\mathbb{Q}_{1},...,\mathbb{Q}_{C}\right)\geq0$
for all $\mathbb{Q}_{1},...,\mathbb{Q}_{C}$ and $\alpha\in\mathcal{Y}_{\Delta}$
due to the previous inequality \ref{eq:convex_inequality}. The equality
happens if there is some $\beta_{0}\in\mathcal{Y}_{\Delta}$ such
that, for all $z\in\mathcal{Z}$

\[
\beta_{0}=\argmin{\beta\in\mathcal{Y}_{\Delta}}\sum_{i=1}^{C}\alpha_{i}l\left(\beta,i\right)\frac{q_{i}\left(z\right)}{q^{\alpha}\left(z\right)}.
\]

This means $\frac{q_{i}\left(z\right)}{q^{\alpha}\left(z\right)}=A_{i},\forall i\in\left[C\right]$,
where $A_{i}$ is a constant dependent on index $i$. However, this
leads to

\[
\begin{aligned}\int q_{i}\left(z\right)dz & =A_{i}\int q^{\alpha}\left(z\right)dz\\
1 & =A_{i}
\end{aligned}
\]

i.e., $q_{i}\left(z\right)=q^{\alpha}\left(z\right),\forall i\in\left[C\right]$.
In other words, the equality happens when all distributions are the
same $\mathbb{Q}_{1}=...=\mathbb{Q}_{C}$.
\end{proof}

\subsubsection{General Domain-Invariant Representations}

For a fixed feature map $g$, the induced representation distributions
of source domains are $\mathbb{P}_{g}^{S,i}$. We then find the optimal
hypothesis $\hat{h}_{g}^{*}$ on the induced representation distributions
$\mathbb{P}_{g}^{S,i}$ by minimizing the loss 
\begin{equation}
\min_{\hat{h}\in\mathcal{H}}\sum_{i=1}^{K}\pi_{i}\mathcal{L}\left(\hat{h},h^{S,i},\mathbb{P}_{g}^{S,i}\right)=\text{min}_{\hat{h}\in\mathcal{H}}\sum_{i=1}^{K}\pi_{i}\mathcal{L}\left(\hat{h},\mathbb{D}_{g}^{S,i}\right).\label{eq:multi_loss}
\end{equation}

The general domain-invariant feature map $g^{*}$ is defined as the
one that offers the minimal optimal loss as 
\begin{equation}
\begin{aligned}g^{*}=\arg\min_{g\in\mathcal{G}}\min_{\hat{h}\in\mathcal{H}}\sum_{i=1}^{K}\pi_{i}\mathcal{L}\left(\hat{h},h^{S,i},\mathbb{P}_{g}^{i}\right)=\text{argmin}_{g\in\mathcal{G}}\text{min}_{\hat{h}\in\mathcal{H}}\sum_{i=1}^{K}\pi_{i}\mathcal{L}\left(\hat{h},\mathbb{D}_{g}^{S,i}\right)\end{aligned}
.\label{apx_eq:optimal_g_general_feature}
\end{equation}

\textit{\emph{We denote $\mathbb{P}_{g}^{s,i,c}$ as the class $c$
conditional distribution of the source domain $i$ on the latent space
and $p_{g}^{s,i,c}$ as its density function. The }}induced representation
distribution $\mathbb{P}_{g}^{S,i}$ of source domain $i$ is a mixture
of $\mathbb{P}_{g}^{s,i,c}$ as $\mathbb{P}_{g}^{S,i}=\sum_{c=1}^{C}\gamma_{i,c}\mathbb{P}_{g}^{s,i,c},$where
$\gamma_{i,c}=\mathbb{P}^{s,i}\left(y=c\right)$.

We further define $\mathbb{Q}_{g}^{s,c}:=\sum_{i=1}^{K}\frac{\pi_{i}\gamma_{i,c}}{\alpha_{c}}\mathbb{P}_{g}^{s,i,c}$
where $\alpha_{c}=\sum_{j=1}^{K}\pi_{j}\gamma_{j,c}$. Obviously,
we can interpret $\mathbb{Q}_{g}^{s,c}$ as the mixture of the \textit{\emph{class
$c$ conditional distributions of the source domains on the latent
space. The objective function in Eq. (}}\ref{eq:multi_loss}\textit{\emph{)
can be viewed as training the optimal hypothesis $\hat{h}\in\mathcal{H}$
to distinguish the samples from $\mathbb{Q}_{g}^{s,c},c\in\left[C\right]$
for a given feature map $g$. Therefore, by linking to the multi-divergence
concept developed in Theorem \ref{apx_thm:op_distance}, we achieve
the following theorem.}}
\begin{thm}
\label{apx_thm:max_distance_g}(\textbf{Theorem 6 in the main paper})
Assume that $\mathcal{H}$ has infinite capacity, we have the following
statements.

1. $D^{\alpha}\left(\mathbb{Q}_{g}^{s,1},...,\mathbb{Q}_{g}^{s,C}\right)=-\min_{\hat{h}\in\mathcal{H}}\sum_{i=1}^{K}\pi_{i}\mathcal{L}\left(\hat{h},h^{S,i},\mathbb{P}_{g}^{S,i}\right)+\text{const}$,
where $\alpha=\left[\alpha_{c}\right]_{c\in\left[C\right]}$ is defined
as above.

2. Finding the general domain-invariant feature map $g^{*}$ via the
OP in (\ref{apx_eq:optimal_g_general_feature}) is equivalent to solving
\begin{equation}
g^{*}=\argmax{g\in\mathcal{G}}\,D^{\alpha}\left(\mathbb{Q}_{g}^{s,1},...,\mathbb{Q}_{g}^{s,C}\right).\label{apx_eq:max_distance}
\end{equation}
\end{thm}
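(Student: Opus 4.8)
The plan is to identify the inner minimum in the objective (\ref{apx_eq:optimal_g_general_feature}), up to an additive constant independent of $g$, with the negative of the hypothesis-aware divergence of Theorem~\ref{apx_thm:op_distance} evaluated at the class-conditional source mixtures $\mathbb{Q}_g^{s,1},\dots,\mathbb{Q}_g^{s,C}$ with the mixing vector $\alpha=[\alpha_c]_{c\in[C]}$ defined above.

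First I would recast the source objective into ``mixture-of-class-conditionals'' form. Using the loss structure $\ell(u,v)=\sum_{c=1}^{C}l(u,c)v_c$,
\[
\sum_{i=1}^{K}\pi_i\,\mathcal{L}\!\left(\hat h,h^{S,i},\mathbb{P}_g^{S,i}\right)=\sum_{c=1}^{C}\int l\!\left(\hat h(z),c\right)\Bigl(\sum_{i=1}^{K}\pi_i\,h^{S,i}(z,c)\,p_g^{S,i}(z)\Bigr)dz .
\]
The elementary identity driving everything is that $h^{S,i}(z,c)\,p_g^{S,i}(z)$ is the joint density of $(z,y=c)$ under source domain $i$, so Bayes' rule on the latent space gives $h^{S,i}(z,c)\,p_g^{S,i}(z)=\gamma_{i,c}\,p_g^{s,i,c}(z)$ with $\gamma_{i,c}=\mathbb{P}^{S,i}(y=c)$ and $p_g^{s,i,c}$ the density of $\mathbb{P}_g^{s,i,c}$. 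Summing over $i$ and using the definitions $\mathbb{Q}_g^{s,c}=\sum_i\tfrac{\pi_i\gamma_{i,c}}{\alpha_c}\mathbb{P}_g^{s,i,c}$ and $\alpha_c=\sum_j\pi_j\gamma_{j,c}$ yields $\sum_i\pi_i h^{S,i}(z,c)p_g^{S,i}(z)=\alpha_c\,q_g^{s,c}(z)$, hence $\sum_i\pi_i\mathcal{L}(\hat h,\mathbb{D}_g^{S,i})=\sum_c\alpha_c\int l(\hat h(z),c)\,q_g^{s,c}(z)\,dz=\mathcal{L}^{\alpha}_{\mathbb{Q}_{g}^{s,1:C}}(\hat h)$, which is precisely the mixture-classification loss appearing in the definition of $D^{\alpha}$.

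Second, I would minimize over $\hat h\in\mathcal{H}$ and apply Theorem~\ref{apx_thm:op_distance} (this is where the infinite-capacity assumption on $\mathcal{H}$ enters) to $\mathbb{Q}_g^{s,1},\dots,\mathbb{Q}_g^{s,C}$ with weights $\alpha$, which indeed lies in $\mathcal{Y}_\Delta$ because $\sum_c\alpha_c=\sum_j\pi_j(\sum_c\gamma_{j,c})=\sum_j\pi_j=1$. This gives $D^{\alpha}(\mathbb{Q}_g^{s,1},\dots,\mathbb{Q}_g^{s,C})=-\min_{\hat h\in\mathcal{H}}\sum_i\pi_i\mathcal{L}(\hat h,h^{S,i},\mathbb{P}_g^{S,i})+\mathcal{C}_{l,\alpha}$ with $\mathcal{C}_{l,\alpha}=\inf_{\beta\in\mathcal{Y}_\Delta}\sum_c l(\beta,c)\alpha_c$, establishing claim~1. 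For claim~2 I would note that $\mathcal{C}_{l,\alpha}$, and indeed the whole vector $\alpha$, is independent of $g$: each $\gamma_{i,c}=\mathbb{P}^{S,i}(y=c)$ is a label marginal on the input space and satisfies $\int h^{S,i}(z,c)p_g^{S,i}(z)\,dz=\gamma_{i,c}$ by Proposition~\ref{prop:latent_input}(i) with $c(\beta,i)\equiv1$. Consequently $\text{argmin}_{g\in\mathcal{G}}\min_{\hat h\in\mathcal{H}}\sum_i\pi_i\mathcal{L}(\hat h,\mathbb{D}_g^{S,i})=\text{argmax}_{g\in\mathcal{G}}\bigl(D^{\alpha}(\mathbb{Q}_g^{s,1},\dots,\mathbb{Q}_g^{s,C})-\mathcal{C}_{l,\alpha}\bigr)=\text{argmax}_{g\in\mathcal{G}}D^{\alpha}(\mathbb{Q}_g^{s,1},\dots,\mathbb{Q}_g^{s,C})$.

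The hard part will be the careful bookkeeping in the first step: verifying at the level of densities that $h^{S,i}(z,c)p_g^{S,i}(z)=\gamma_{i,c}p_g^{s,i,c}(z)$ — equivalently, that the induced latent class-conditional $\mathbb{P}_g^{s,i,c}$ is well defined and that $\mathbb{P}_g^{S,i}=\sum_c\gamma_{i,c}\mathbb{P}_g^{s,i,c}$ — and then being certain that the additive term $\mathcal{C}_{l,\alpha}$ and the mixing weights $\alpha$ are genuinely $g$-invariant. It is exactly this invariance that promotes the value identity of claim~1 to the argmin/argmax equivalence of claim~2.
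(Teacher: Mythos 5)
Your proposal is correct and follows essentially the same route as the paper's proof: rewrite $\sum_{i}\pi_{i}\mathcal{L}(\hat h,h^{S,i},\mathbb{P}_{g}^{S,i})$ as the class-weighted loss $\sum_{c}\alpha_{c}\mathcal{L}(\hat h,c,\mathbb{Q}_{g}^{s,c})$ via the identity $h^{S,i}(z,c)\,p_{g}^{S,i}(z)=\gamma_{i,c}\,p_{g}^{s,i,c}(z)$, then apply Theorem~\ref{apx_thm:op_distance} to get claim~1, with claim~2 following since $\alpha$ and the additive constant are independent of $g$. Your explicit verification that $\gamma_{i,c}$ (hence $\alpha$ and $\mathcal{C}_{l,\alpha}$) is $g$-invariant via Proposition~\ref{prop:latent_input}(i) is a welcome bit of bookkeeping that the paper leaves implicit, but it is not a different argument.
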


\begin{proof}
We investigate the loss on mixture

\[
\begin{aligned}\sum_{i=1}^{K}\pi_{i}\mathcal{L}\left(\hat{h},h^{S,i},\mathbb{P}_{g}^{S,i}\right) & =\sum_{i=1}^{K}\pi_{i}\sum_{c=1}^{C}\gamma_{i,c}\mathcal{L}\left(\hat{h},c,\mathbb{P}_{g}^{S,i,c}\right)\\
 & =\sum_{c=1}^{C}\alpha_{c}\mathcal{L}\left(\hat{h},c,\sum_{i=1}^{K}\frac{\pi_{i}\gamma_{i,c}}{\alpha_{c}}\mathbb{P}_{g}^{S,i,c}\right)\\
 & =\sum_{c=1}^{C}\alpha_{c}\mathcal{L}\left(\hat{h},c,\mathbb{Q}_{g}^{S,c}\right)
\end{aligned}
\]

Therefore, the loss on mixture is actually a loss on joint class-conditional
distributions $\mathbb{Q}_{g}^{S,c}=\sum_{i=1}^{K}\frac{\pi_{i}\gamma_{i,c}}{\alpha_{c}}\mathbb{P}_{g}^{S,i,c}$.
Using result from Theorem \ref{apx_thm:op_distance}, we can define
a divergence between these class-conditionals

\[
\begin{aligned}D^{\alpha}\left(\mathbb{Q}_{g}^{S,1},...,\mathbb{Q}_{g}^{S,C}\right) & =-\min_{\hat{h}\in\mathcal{H}}\sum_{c=1}^{C}\alpha_{c}\mathcal{L}\left(\hat{h},\mathbb{Q}_{g}^{S,c}\right)+\min_{\beta\in\mathcal{Y}_{\Delta}}\left(\sum_{c=1}^{C}\ell\left(\beta,i\right)\alpha_{c}\right)\\
 & =-\sum_{i=1}^{K}\pi_{i}\mathcal{L}\left(\hat{h},h^{S,i},\mathbb{P}_{g}^{S,i}\right)+\text{\text{const}}
\end{aligned}
\]
\end{proof}

\subsection{Compressed Domain-Invariant Representations}
\begin{thm}
\label{apx_thm:gap} (\textbf{Theorem 7 in the main paper}) For any
confident level $\delta\in[0,1]$ over the choice of $S$, the estimation
of loss is in the $\epsilon$-range of the true loss 
\[
\text{Pr}\left(\left|\mathcal{L}\left(\hat{h},S\right)-\mathcal{L}\left(\hat{h},\mathbb{D}_{g}^{\pi}\right)\right|\leq\epsilon\right)\geq1-\delta,
\]
where $\epsilon=\epsilon\left(\delta\right)=\left(\frac{A}{\delta}\right)^{1/2}$
is a function of $\delta$ for which $A$ is proportional to{\footnotesize{}
\begin{align*}
\frac{1}{N}\left(\sum_{i=1}^{K}\sum_{j=1}^{K}\frac{\sqrt{\pi_{i}}}{K}\mathcal{L}\left(\hat{f},\mathbb{D}^{S,j}\right)+L\sum_{i=1}^{K}\sqrt{\pi_{i}}\max_{k\in\left[K\right]}\mathbb{E}_{\mathbb{P}^{S,k}}\left[\left\Vert \Delta p^{k,i}\left(y|x\right)\right\Vert _{1}\right]+\frac{L}{K}\sum_{i=1}^{K}\sum_{j=1}^{K}\sqrt{2\pi_{i}}~d_{1/2}\left(\mathbb{P}_{g}^{S,i},\mathbb{P}_{g}^{S,j}\right)\right)^{2}.
\end{align*}
}{\footnotesize\par}
\end{thm}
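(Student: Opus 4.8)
The plan is to bound the generalization gap via a second-moment (Chebyshev) argument rather than a concentration inequality of Hoeffding type, which explains the $\left(A/\delta\right)^{1/2}$ form of $\epsilon$. First I would observe that $\mathcal{L}(\hat h, S)$ is an average of $N = \sum_k N_k$ terms, and because the sampling process draws $(z_i, y_i)$ i.i.d.\ from the source mixture $\mathbb{D}_g^\pi$ (domain index $k\sim Cat(\pi)$, then $z\sim \mathbb{P}_g^{S,k}$, then $y\sim h^{S,k}(z)$), the empirical loss is an unbiased estimator of $\mathcal{L}(\hat h, \mathbb{D}_g^\pi)$. By Chebyshev's inequality, $\mathrm{Pr}\!\left(\left|\mathcal{L}(\hat h,S)-\mathcal{L}(\hat h,\mathbb{D}_g^\pi)\right|\le \epsilon\right)\ge 1 - \mathrm{Var}[\mathcal{L}(\hat h,S)]/\epsilon^2$, and since the samples are independent, $\mathrm{Var}[\mathcal{L}(\hat h,S)] = \frac{1}{N}\mathrm{Var}_{(z,y)\sim \mathbb{D}_g^\pi}\!\left[\ell(\hat h(z), \cdot)\right]$ up to the discrete-label bookkeeping. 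Setting this equal to $\delta$ gives $\epsilon^2 = \mathrm{Var}/(N\delta)$, i.e.\ $\epsilon = (A/\delta)^{1/2}$ with $A \propto \frac{1}{N}\mathrm{Var}[\cdot]$.

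The core of the proof is then to bound the per-sample variance by (the square of) a sum of interpretable quantities — a source loss term, a label-shift term, and a source--source Hellinger discrepancy term — matching the displayed expression for $A$. Here I would bound the variance by a second moment $\mathbb{E}[\ell^2]$, or better, relate it to $\mathbb{E}[\ell] \cdot L$ using the bound $0 \le \ell \le L$, so that $\mathrm{Var}[\ell] \le L\,\mathbb{E}[\ell] = L\,\mathcal{L}(\hat h, \mathbb{D}_g^\pi)$. The remaining task is to upper-bound $\mathcal{L}(\hat h, \mathbb{D}_g^\pi) = \sum_k \pi_k \mathcal{L}(\hat h, \mathbb{D}_g^{S,k})$ itself; but crucially the displayed $A$ looks like the \emph{square} of a weighted average over the source index of exactly the three ingredients appearing in Theorem~\ref{apx_thm:compact_dom_inv}. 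So the idea is: for each source domain $\mathbb{D}_g^{S,j}$ playing the role of a ``pseudo-target,'' apply Theorem~\ref{apx_thm:compact_dom_inv} (or the Lemma~\ref{lem:decompose-mixture-distance} decomposition) with the target replaced by $\mathbb{D}^{S,j}$ to write $\mathcal{L}(\hat f, \mathbb{D}^{S,j})$ in terms of $\sum_i \pi_i \mathcal{L}(\hat f, \mathbb{D}^{S,i})$, the label shifts $\Delta p^{k,j}$, and the pairwise feature discrepancies $d_{1/2}(\mathbb{P}_g^{S,i},\mathbb{P}_g^{S,j})$; then average over $j$ with weights $\sqrt{\pi_j}/K$ (or similar), and use the elementary inequality $\sum a_m \le (\sum \sqrt{a_m})^2$ when the $a_m \ge 0$, or Cauchy--Schwarz, to collapse everything into the single squared-sum form of $A$.

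The main obstacle I anticipate is tracking the exact constants and the precise grouping of the three terms so that the final expression is \emph{literally} the displayed $A$ — in particular getting the $\frac{1}{K}\sum_i\sum_j$ structure and the $\sqrt{\pi_i}$ versus $\sqrt{\pi_j}$ placement right, since the statement only claims $A$ is ``proportional to'' that quantity, leaving some slack but still requiring the functional form to come out correctly. A secondary subtlety is handling the discrete label sampling $y\sim h^{S,k}(z)$ versus the expected loss $\ell(\hat h(z), h^{S,k}(z))$ cleanly in the variance computation; the law of total variance lets one split into the variance from the label draw (bounded since $\ell \le L$) plus the variance from $z$, and both pieces can be absorbed into the same second-moment bound. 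I would also need to invoke Proposition~\ref{prop:latent_input}/Corollary~\ref{cor:feature-loss-input-loss} to move freely between feature-space and input-space losses so the label-shift terms $\Delta p^{k,i}$ can be written on the input space as in the statement.
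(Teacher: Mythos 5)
Your skeleton matches the paper's proof up to the crucial variance step: Chebyshev's inequality, unbiasedness of $\mathcal{L}(\hat{h},S)$, the factor $\frac{1}{N}$ from i.i.d.\ sampling, and the idea of re-using the Theorem \ref{apx_thm:general_dom_inv-1} bound with each source domain playing the role of a pseudo-target (averaged over the other sources with weight $\frac{1}{K}$) are all exactly what the paper does. The gap is in how you bound the per-sample variance. You propose $\mathrm{Var}[\ell]\leq L\,\mathbb{E}[\ell]=L\,\mathcal{L}(\hat{h},\mathbb{D}_{g}^{\pi})$, which is \emph{linear} in the per-domain losses, and then hope to recover the stated squared form via $\sum_{m}a_{m}\leq\left(\sum_{m}\sqrt{a_{m}}\right)^{2}$. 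That inequality applied to $\sum_{k}\pi_{k}B_{k}$ (with $B_{k}$ the three-term pseudo-target bound) yields $\left(\sum_{k}\sqrt{\pi_{k}}\sqrt{B_{k}}\right)^{2}$, i.e.\ the square of a sum of \emph{square roots} of the bracketed quantities, with an extra overall factor $L$ on the loss term. This is a genuinely different functional form from the claimed $A\propto\frac{1}{N}\left(\sum_{i}\sum_{j}\frac{\sqrt{\pi_{i}}}{K}\mathcal{L}(\hat{f},\mathbb{D}^{S,j})+\cdots\right)^{2}$, and neither dominates the other in general, so "proportional to" cannot be salvaged this way.

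The step you are missing is the mixture-variance identity, which is where the squared structure and the $\sqrt{\pi_{i}}$ weights actually come from: writing $\mathrm{Var}_{\sum_{i}\pi_{i}\mathbb{P}_{i}}[X]=\sum_{i}\pi_{i}\left(\mathrm{Var}_{\mathbb{P}_{i}}[X]+\mathbb{E}_{\mathbb{P}_{i}}[X]^{2}\right)-\mathbb{E}[X]^{2}$, dropping the negative term, and bounding $\sum_{i}\pi_{i}\mathcal{L}\left(\hat{h},\mathbb{D}_{g}^{S,i}\right)^{2}\leq\left(\sum_{i}\sqrt{\pi_{i}}\,\mathcal{L}\left(\hat{h},\mathbb{D}_{g}^{S,i}\right)\right)^{2}$. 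Only then does one substitute, for each $i$, the pairwise version of the bound in \eqref{eq:latent_bound} with $\mathbb{D}_{g}^{S,i}$ as target and $\mathbb{D}_{g}^{S,j}$ as source, averaged over $j\in[K]$; the per-domain losses enter to the \emph{first} power inside the square, which is exactly why the loss, label-shift, and $d_{1/2}$ terms appear undistorted (and why the loss term carries no factor of $L$) in the displayed $A$. Note also that the paper's $A$ additionally contains the term $\frac{1}{N}\sum_{i}\pi_{i}\mathrm{Var}_{z\sim\mathbb{P}_{g}^{S,i}}\left[\ell^{i}(z)\right]$, which survives the decomposition and is simply not shown in the theorem statement; your law-of-total-variance remark about the label draw is compatible with this, but it does not substitute for the domain-mixture decomposition above.
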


\begin{proof}
\textit{}Let $S$ be a sample of $N$ data points $\left(z,y\right)\sim\mathbb{D}_{g}^{\pi}$
sampled from the mixture domain, i.e., i.e., $i\sim Cat(\pi),z\sim\mathbb{P}^{S,i}$,
and labeling with corresponding $y\sim Cat\left(\hat{h}^{S,i}\left(z\right)\right)$.
The loss of a hypothesis $h$ on a sample $\left(z,y\right)=\left(z,\hat{h}^{S,i}(z)\right)$
for some domain index $i$ is $\ell\left(\hat{h}\left(z\right),\hat{h}^{S,i}\left(z\right)\right)$.
To avoid crowded notation, we denote this loss as $\ell^{i}\left(z\right):=\ell\left(\hat{h}\left(z\right),\hat{h}^{S,i}\left(z\right)\right)$.

Let $N=\sum_{i=1}^{K}N_{i}$, where each $N_{i}$ is the number of
sample drawn from domain $i$. The estimation of loss on a particular
domain $i$ is

\[
\mathcal{L}\left(\hat{h},S^{i}\right)=\sum_{j=1}^{N_{i}}\frac{1}{N_{i}}\ell^{i}\left(z_{j}\right).
\]

This estimation is unbiased estimation, i.e., $\mathbb{E}_{S^{i}\sim\left(\mathbb{D}_{g}^{S,i}\right)^{N_{i}}}\left[\mathcal{L}\left(\hat{h},S^{i}\right)\right]=\mathbb{E}_{z\sim\mathbb{P}_{g}^{S,i}}\left[\ell^{i}\left(z\right)\right]=\mathcal{L}\left(\hat{h},\mathbb{D}_{g}^{S,i}\right)$.
Furthermore, loss estimation on source mixture is 
\[
\begin{aligned}\mathcal{L}\left(h,S\right) & =\sum_{i=1}^{K}\sum_{j=1}^{N_{i}}\frac{1}{N}\ell^{i}\left(z_{j}\right)\end{aligned}
\]

This estimation is also an unbiased estimation 
\[
\begin{aligned}\mathbb{E}_{S\sim\left(\mathbb{D}_{g}^{\pi}\right)^{N}}\left[\mathcal{L}\left(\hat{h},S\right)\right]= & \mathbb{E}_{\left\{ N_{i}\right\} }\left[\mathbb{E}_{S^{i}}\left[\mathcal{L}\left(\hat{h},S\right)\right]\right]\\
= & \sum_{i=1}^{K}\pi_{i}\mathcal{L}\left(\hat{h},\mathbb{D}_{g}^{S,i}\right)=\mathcal{L}\left(\hat{h},\mathbb{D}_{g}^{\pi}\right).
\end{aligned}
\]

Therefore, we can bound the concentration of $\mathcal{L}\left(\hat{h},S\right)$
around its mean value $\mathcal{L}\left(\hat{h},\mathbb{D}_{g}^{\pi}\right)$
using Chebyshev's inequality 
\[
\begin{aligned}\text{Pr}\left(\left|\mathcal{L}\left(\hat{h},S\right)-\mathcal{L}\left(\hat{h},\mathbb{D}_{g}^{\pi}\right)\right|\leq\epsilon\right) & \geq1-\frac{\text{Var}_{S\sim\left(\mathbb{D}_{g}^{\pi}\right)^{N}}\left[\mathcal{L}\left(\hat{h},S\right)\right]}{\epsilon^{2}}\end{aligned}
\]

which is equivalent to

\[
\begin{aligned}\text{Pr}\left(\left|\mathcal{L}\left(\hat{h},S\right)-\mathcal{L}\left(\hat{h},\mathbb{D}_{g}^{\pi}\right)\right|\leq\sqrt{\frac{\text{Var}_{S\sim\left(\mathbb{D}_{g}^{\pi}\right)^{N}}\left[\mathcal{L}\left(\hat{h},S\right)\right]}{\delta}}\right) & \geq1-\delta\end{aligned}
\]

The variance of $\mathcal{L}\left(\hat{h},S\right)$ is

\begin{equation}
\begin{aligned}\text{Var}_{S\sim\left(\mathbb{D}_{g}^{\pi}\right)^{N}}\left[\mathcal{L}\left(\hat{h},S\right)\right] & \stackrel{\left(1\right)}{=}\frac{1}{N}\text{Var}_{\left(z,y\right)\sim\mathbb{D}_{g}^{\pi}}\left[\ell\left(\hat{h}(z),y\right)\right]\\
 & \stackrel{\left(2\right)}{=}\frac{1}{N}\sum_{i=1}^{K}\pi_{i}\left(\text{Var}_{z\sim\mathbb{P}_{g}^{S,i}}\left[\ell^{i}\left(z\right)\right]+\mathbb{E}_{z\sim\mathbb{P}_{g}^{S,i}}\left[\ell^{i}\left(z\right)\right]^{2}\right)-\left(\mathbb{E}_{\left(z,y\right)\sim\mathbb{D}_{g}^{\pi}}\left[\ell\left(\hat{h}(z),y\right)\right]\right)^{2}\\
 & \leq\frac{1}{N}\sum_{i=1}^{K}\pi_{i}\left(\text{Var}_{z\sim\mathbb{P}_{g}^{S,i}}\left[\ell^{i}\left(z\right)\right]+\mathcal{L}\left(\hat{h},\mathbb{D}_{g}^{S,i}\right)^{2}\right)\\
 & \leq\frac{1}{N}\sum_{i=1}^{K}\pi_{i}\text{Var}_{z\sim\mathbb{P}_{g}^{S,i}}\left[\ell^{i}\left(z\right)\right]+\frac{1}{N}\left(\sum_{i=1}^{K}\sqrt{\pi_{i}}\mathcal{L}\left(\hat{h},\mathbb{D}_{g}^{S,i}\right)\right)^{2}
\end{aligned}
\label{eq:var_Lhs}
\end{equation}

$\stackrel{\left(1\right)}{=}$ is true since $\mathcal{L}\left(h,S\right)$
is the sum of $N$ i.i.d. random variable $\ell\left(h(z),y\right)$
with $\left(z,y\right)$ sampled from the same distribution $\mathbb{D}_{g}^{\pi}$.
In $\left(2\right)$, the variance of w.r.t. a distribution mixture
is related to mean and variance of constituting distribution, i.e.,
$\text{Var}_{\sum_{i}\pi_{i}\mathbb{P}_{i}}\left[X\right]=\sum_{i}\pi_{i}\left(\text{Var}_{\mathbb{P}_{i}}\left[X\right]+\mathbb{E}_{\mathbb{P}_{i}}\left[X\right]^{2}\right)-\mathbb{E}_{\sum_{i}\pi_{i}\mathbb{P}_{i}}\left[X\right]^{2}$.

We reuse the result of \ref{eq:latent_bound} in Theorem \ref{apx_thm:general_dom_inv-1},
substituting $\mathbb{D}_{g}^{T}\equiv\mathbb{D}_{g}^{S,i}$, $\mathbb{D}_{g}^{\pi}\equiv\mathbb{D}_{g}^{S,j}$
to obtain

\[
\begin{aligned}\mathcal{L}\left(\hat{h},\mathbb{D}_{g}^{S,i}\right) & \leq\mathcal{L}\left(\hat{f},\mathbb{D}^{S,j}\right)+L\mathbb{E}_{\mathbb{P}^{S,i}}\left[\|\Delta p^{i,j}(y|x)\|_{1}\right]+L\sqrt{2}\:d_{1/2}\left(\mathbb{P}_{g}^{S,i},\mathbb{P}_{g}^{S,j}\right)\\
 & \leq\frac{1}{K}\sum_{j=1}^{K}\left(\mathcal{L}\left(\hat{f},\mathbb{D}^{S,j}\right)+L\max_{k\in\left[K\right]}\mathbb{E}_{\mathbb{P}^{S,k}}\left[\left\Vert \Delta p^{k,i}\left(y|x\right)\right\Vert _{1}\right]+L\sqrt{2}~d_{1/2}\left(\mathbb{P}_{g}^{S,i},\mathbb{P}_{g}^{S,j}\right)\right).
\end{aligned}
\]

Therefore, the right hand side of \ref{eq:var_Lhs} is upper by $A$,
where $A$ is

{\footnotesize{}
\begin{align*}
A & =\frac{1}{N}\sum_{i=1}^{K}\pi_{i}\text{Var}_{z\sim\mathbb{P}_{g}^{S,i}}\left[\ell^{i}\left(z\right)\right]\\
 & +\frac{1}{N}\left(\sum_{i=1}^{K}\sum_{j=1}^{K}\frac{\sqrt{\pi_{i}}}{K}\mathcal{L}\left(\hat{f},\mathbb{D}^{S,j}\right)+L\sum_{i=1}^{K}\sqrt{\pi_{i}}\max_{k\in\left[K\right]}\mathbb{E}_{\mathbb{P}^{S,k}}\left[\left\Vert \Delta p^{k,i}\left(y|x\right)\right\Vert _{1}\right]+\frac{L}{K}\sum_{i=1}^{K}\sum_{j=1}^{K}\sqrt{2\pi_{i}}~d_{1/2}\left(\mathbb{P}_{g}^{S,i},\mathbb{P}_{g}^{S,j}\right)\right)^{2}.
\end{align*}
}{\footnotesize\par}

This concludes our proof, where the concentration inequality is

\[
\begin{aligned}\text{Pr}\left(\left|\mathcal{L}\left(\hat{h},S\right)-\mathcal{L}\left(\hat{h},\mathbb{D}_{g}^{\pi}\right)\right|\leq\sqrt{\frac{A}{\delta}}\right) & \geq1-\delta\end{aligned}
.
\]
\end{proof}

\section{Appendix C: Trade-Off in Learning DI Representations}

\begin{lem}
\label{lem:Hellinger-loss}Given a labeling function $f:\mathcal{X}\goto\mathcal{Y}_{\simplex}$
and a hypothesis $\hat{f}:\mathcal{X}\goto\mathcal{Y}_{\simplex}$,
let denote $\mathbb{P}_{\mathcal{Y}}^{f}$ and $\mathbb{P}_{\mathcal{Y}}^{\hat{f}}$
as two label marginal distributions induced by $f$ and $\hat{f}$
on the data distribution $\mathbb{P}$. Particularly, to sample $y\sim\mathbb{P}_{\mathcal{Y}}^{f}$,
we first sample $x\sim\mathbb{P}$ (i.e., $\mathbb{P}$ is the data
distribution with the density function $p$) and then sample $y\sim Cat\left(f\left(x\right)\right)$,
while similar to sample $y\sim\mathbb{P}_{\mathcal{Y}}^{\hat{f}}$.
We then have 
\[
d_{1/2}\left(\mathbb{P}_{\mathcal{Y}}^{f},\mathbb{P}_{\mathcal{Y}}^{\hat{f}}\right)\leq\mathcal{L}\left(\hat{f},f,\mathbb{P}\right)^{1/2},
\]
where the loss $\mathcal{L}$ is defined based on the Hellinger loss
$\ell\left(\hat{f}\left(x\right),f\left(x\right)\right)=D_{1/2}\left(\hat{f}\left(x\right),f\left(x\right)\right)=2\sum_{i=1}^{C}\left[\sqrt{\hat{f}\left(x,i\right)}-\sqrt{f\left(x,i\right)}\right]^{2}$.
\end{lem}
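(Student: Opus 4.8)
The plan is to square both sides, make the pushed-forward label marginals explicit as expectations of the coordinate functions of $f$ and $\hat f$, and then reduce the resulting inequality to a term-by-term application of the Cauchy--Schwarz inequality (equivalently, Jensen's inequality for the jointly convex map $(a,b)\mapsto(\sqrt a-\sqrt b)^2$).

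First I would note that, since $\mathcal{Y}=[C]$ is finite, the distribution $\mathbb{P}_{\mathcal{Y}}^{f}$ is determined by $\mathbb{P}_{\mathcal{Y}}^{f}(i)=\int f(x,i)p(x)\,dx=\mathbb{E}_{x\sim\mathbb{P}}\!\left[f(x,i)\right]$ for $i\in[C]$, and similarly $\mathbb{P}_{\mathcal{Y}}^{\hat f}(i)=\mathbb{E}_{x\sim\mathbb{P}}\!\left[\hat f(x,i)\right]$; because $f(x),\hat f(x)\in\mathcal{Y}_{\Delta}$ we have $\sum_{i=1}^{C}\mathbb{E}_{x}[f(x,i)]=\sum_{i=1}^{C}\mathbb{E}_{x}[\hat f(x,i)]=1$. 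Using $(\sqrt a-\sqrt b)^2=a+b-2\sqrt{ab}$ together with this normalization, the definition of $D_{1/2}$ gives
\[
D_{1/2}\!\left(\mathbb{P}_{\mathcal{Y}}^{f},\mathbb{P}_{\mathcal{Y}}^{\hat f}\right)=4-4\sum_{i=1}^{C}\sqrt{\mathbb{E}_{x}[f(x,i)]\,\mathbb{E}_{x}[\hat f(x,i)]},
\]
while the Hellinger loss satisfies $\ell(\hat f(x),f(x))=4-4\sum_{i=1}^{C}\sqrt{f(x,i)\hat f(x,i)}$, hence $\mathcal{L}(\hat f,f,\mathbb{P})=4-4\,\mathbb{E}_{x}\!\big[\sum_{i=1}^{C}\sqrt{f(x,i)\hat f(x,i)}\big]$.

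Comparing these two expressions, the inequality $D_{1/2}(\mathbb{P}_{\mathcal{Y}}^{f},\mathbb{P}_{\mathcal{Y}}^{\hat f})\le\mathcal{L}(\hat f,f,\mathbb{P})$ — which is equivalent to the claim after taking square roots — reduces to
\[
\mathbb{E}_{x}\!\Big[\sum_{i=1}^{C}\sqrt{f(x,i)\hat f(x,i)}\Big]\le\sum_{i=1}^{C}\sqrt{\mathbb{E}_{x}[f(x,i)]\,\mathbb{E}_{x}[\hat f(x,i)]}.
\]
For each fixed $i$ I would apply the Cauchy--Schwarz inequality to the random variables $\sqrt{f(x,i)}$ and $\sqrt{\hat f(x,i)}$ under $x\sim\mathbb{P}$, namely $\mathbb{E}_{x}\big[\sqrt{f(x,i)}\sqrt{\hat f(x,i)}\big]\le\big(\mathbb{E}_{x}[f(x,i)]\big)^{1/2}\big(\mathbb{E}_{x}[\hat f(x,i)]\big)^{1/2}$, and then sum over $i\in[C]$; this is precisely the displayed inequality, which finishes the proof.

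I do not expect a genuine obstacle here: the only points needing care are identifying the label marginals correctly as expectations of $f(\cdot,i)$ and $\hat f(\cdot,i)$, and exploiting $\sum_i f(x,i)=1$ so that the linear terms cancel and both sides collapse to Bhattacharyya-type affinity terms. An alternative, even shorter route is to invoke joint convexity of the squared Hellinger distance and observe that the label marginals are the $x$-averages of $f(x,\cdot)$ and $\hat f(x,\cdot)$, so the bound is immediate from Jensen's inequality; I would still keep the Cauchy--Schwarz computation above so that the argument is self-contained and does not rely on joint convexity as a black box.
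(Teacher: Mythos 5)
Your proof is correct and follows essentially the same route as the paper's: both reduce the claim to the per-class Cauchy--Schwarz inequality $\mathbb{E}_{x}\big[\sqrt{f(x,i)\hat f(x,i)}\big]\leq\big(\mathbb{E}_{x}[f(x,i)]\big)^{1/2}\big(\mathbb{E}_{x}[\hat f(x,i)]\big)^{1/2}$ applied to the Bhattacharyya-type cross terms, then take square roots. The only cosmetic difference is that you use the simplex normalization to collapse both sides to affinity form, whereas the paper carries the linear terms through and recombines them at the end.
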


\begin{proof}
We have
\begin{align*}
D_{1/2}\left(\mathbb{P}_{\mathcal{Y}}^{f},\mathbb{P}_{\mathcal{Y}}^{\hat{f}}\right) & =2\sum_{i=1}^{C}\left(\sqrt{p^{f}\left(y\right)}-\sqrt{p^{\hat{f}}\left(y\right)}\right)^{2}\\
= & 2\sum_{i=1}^{C}\left(\sqrt{\int p^{f}\left(y=i\mid x\right)p(x)dx}-\sqrt{\int p^{\hat{f}}\left(y=i\mid x\right)p(x)dx}\right)^{2}\\
= & 2\sum_{i=1}^{C}\left(\sqrt{\int f\left(x,i\right)p(x)dx}-\sqrt{\int\hat{f}\left(x,i\right)p(x)dx}\right)^{2}\\
= & 2\sum_{i=1}^{C}\left[\int f\left(x,i\right)p(x)dx+\int\hat{f}\left(x,i\right)p(x)dx-2\sqrt{\int f\left(x,i\right)p(x)dx}\sqrt{\int\hat{f}\left(x,i\right)p(x)dx}\right]\\
\overset{(1)}{\leq} & 2\sum_{i=1}^{C}\left[\int f\left(x,i\right)p(x)dx+\int\hat{f}\left(x,i\right)p(x)dx-2\sqrt{\int f\left(x,i\right)\hat{f}\left(x,i\right)}p(x)dx\right]\\
= & 2\sum_{i=1}^{C}\int\left[\sqrt{f\left(x,i\right)}-\sqrt{\hat{f}\left(x,i\right)}\right]^{2}p(x)dx=\int2\sum_{i=1}^{C}\left[\sqrt{f\left(x,i\right)}-\sqrt{\hat{f}\left(x,i\right)}\right]^{2}p(x)dx\\
= & \int D_{1/2}\left(\hat{f}\left(x\right),f\left(x\right)\right)p(x)dx=\mathcal{L}\left(\hat{f},f,\mathbb{P}\right),
\end{align*}
where we note that in the derivation in $\overset{(1)}{\leq}$, we
use Cauchy-Schwarz inequality: $\int\hat{f}\left(x,i\right)p\left(x\right)dx\int f\left(x,i\right)p\left(x\right)dx\geq\left(\int\sqrt{\hat{f}\left(x,i\right)f\left(x,i\right)}p\left(x\right)dx\right)^{2}$.

Therefore, we reach the conclusion as 
\[
d_{1/2}\left(\mathbb{P}_{\mathcal{Y}}^{f},\mathbb{P}_{\mathcal{Y}}^{\hat{f}}\right)\leq\mathcal{L}\left(\hat{f},f,\mathbb{P}\right)^{1/2}.
\]
\end{proof}
\begin{lem}
\label{lem:key_tradeoff}Consider the hypothesis $\hat{f}=\hat{h}\circ g$.
We have the following inequalities w.r.t. the source and target domains:

(i) $d_{1/2}\left(\mathbb{\hat{P}}_{\mathcal{Y}}^{T},\mathbb{P}_{\mathcal{Y}}^{T}\right)\leq\mathcal{L}\left(\hat{h}\circ g,f^{T},\mathbb{P}^{T}\right)^{1/2},$
where $\mathbb{P}_{\mathcal{Y}}^{T}$ is the label marginal distribution
induced by $f^{T}$ on $\mathbb{P}^{T}$, while $\hat{\mathbb{P}}_{\mathcal{Y}}^{T}$
is the label marginal distribution induced by $\hat{f}$ on $\mathbb{P}^{T}$.

(ii) $d_{1/2}\left(\mathbb{P}_{\mathcal{Y}}^{\pi},\hat{\mathbb{P}}_{\mathcal{Y}}^{\pi}\right)\leq\left[\sum_{i=1}^{K}\pi_{i}\mathcal{L}\left(\hat{h}\circ g,f^{S,i},\mathbb{P}^{S,i}\right)\right]^{1/2}$,
where $\mathbb{P}_{\mathcal{Y}}^{\pi}:=\sum_{i=1}^{K}\pi_{i}\mathbb{P}_{\mathcal{Y}}^{S,i}$
with $\mathbb{P}_{\mathcal{Y}}^{S,i}$ to be induced by $f^{S,i}$
on $\mathbb{P}^{S,i}$ and $\hat{\mathbb{P}}_{\mathcal{Y}}^{\pi}:=\sum_{i=1}^{K}\pi_{i}\hat{\mathbb{P}}_{\mathcal{Y}}^{S,i}$
with $\hat{\mathbb{P}}_{\mathcal{Y}}^{S,i}$to be induced by $\hat{f}$
on $\mathbb{P}^{S,i}$ (i.e., equivalently, the label marginal distribution
induced by $\hat{f}$ on $\mathbb{P}^{\pi}:=\sum_{i=1}^{K}\pi_{i}\mathbb{P}^{S,i}$).
\end{lem}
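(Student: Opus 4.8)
The plan is to obtain both inequalities as direct consequences of Lemma~\ref{lem:Hellinger-loss}, which already packages the Cauchy--Schwarz argument relating the Hellinger distance between two induced label marginals to a Hellinger loss on the underlying data distribution.

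For part (i), I would simply instantiate Lemma~\ref{lem:Hellinger-loss} with data distribution $\mathbb{P}=\mathbb{P}^{T}$, ground-truth labeling function $f=f^{T}$, and hypothesis $\hat{f}=\hat{h}\circ g$. By definition, the label marginal induced by $f^{T}$ on $\mathbb{P}^{T}$ is exactly $\mathbb{P}_{\mathcal{Y}}^{T}$, and the one induced by $\hat{h}\circ g$ on $\mathbb{P}^{T}$ is $\hat{\mathbb{P}}_{\mathcal{Y}}^{T}$; the lemma then gives $d_{1/2}(\hat{\mathbb{P}}_{\mathcal{Y}}^{T},\mathbb{P}_{\mathcal{Y}}^{T})\le\mathcal{L}(\hat{h}\circ g,f^{T},\mathbb{P}^{T})^{1/2}$ with no further work.

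For part (ii), the only extra ingredient is handling the source mixture. I see two routes. Route one introduces the \emph{induced labeling function of the source mixture}, $\bar{f}(x):=\frac{\sum_{i}\pi_{i}p^{S,i}(x)f^{S,i}(x)}{\sum_{i}\pi_{i}p^{S,i}(x)}$, which lies in $\mathcal{Y}_{\Delta}$ since for each $x$ it is a convex combination (with posterior weights $\pi_{i}p^{S,i}(x)/p^{\pi}(x)$) of simplex vectors. Using that $\ell(u,v)=\sum_{c}l(u,c)v_{c}$ is linear in its second argument together with $\bar{f}(x,c)p^{\pi}(x)=\sum_{i}\pi_{i}p^{S,i}(x)f^{S,i}(x,c)$, one checks $\mathcal{L}(\hat{h}\circ g,\bar{f},\mathbb{P}^{\pi})=\sum_{i}\pi_{i}\mathcal{L}(\hat{h}\circ g,f^{S,i},\mathbb{P}^{S,i})$; integrating the same identities against $p^{\pi}$ and swapping sum and integral shows that the label marginals of $\bar{f}$ and of $\hat{h}\circ g$ on $\mathbb{P}^{\pi}$ are precisely $\mathbb{P}_{\mathcal{Y}}^{\pi}$ and $\hat{\mathbb{P}}_{\mathcal{Y}}^{\pi}$. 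Applying Lemma~\ref{lem:Hellinger-loss} to the triple $(\mathbb{P}^{\pi},\bar{f},\hat{h}\circ g)$ then yields $d_{1/2}(\mathbb{P}_{\mathcal{Y}}^{\pi},\hat{\mathbb{P}}_{\mathcal{Y}}^{\pi})\le[\sum_{i}\pi_{i}\mathcal{L}(\hat{h}\circ g,f^{S,i},\mathbb{P}^{S,i})]^{1/2}$. Route two avoids the induced labeling: apply Lemma~\ref{lem:Hellinger-loss} to each source $i$ to get $D_{1/2}(\mathbb{P}_{\mathcal{Y}}^{S,i},\hat{\mathbb{P}}_{\mathcal{Y}}^{S,i})\le\mathcal{L}(\hat{h}\circ g,f^{S,i},\mathbb{P}^{S,i})$, then invoke joint convexity of the Hellinger divergence to bound $D_{1/2}(\sum_{i}\pi_{i}\mathbb{P}_{\mathcal{Y}}^{S,i},\sum_{i}\pi_{i}\hat{\mathbb{P}}_{\mathcal{Y}}^{S,i})\le\sum_{i}\pi_{i}D_{1/2}(\mathbb{P}_{\mathcal{Y}}^{S,i},\hat{\mathbb{P}}_{\mathcal{Y}}^{S,i})$ and take square roots.

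The only step needing care — the ``main obstacle'', modest as it is — lies in route one: correctly identifying $\bar{f}$ as the induced labeling of the mixture so that the loss decomposes as $\sum_{i}\pi_{i}\mathcal{L}(\hat{h}\circ g,f^{S,i},\mathbb{P}^{S,i})$, which relies squarely on the linearity of $\ell$ in its second argument. In route two the obstacle instead becomes justifying joint convexity of $D_{1/2}$, which is standard for $f$-divergences but is not proven in the excerpt; for a self-contained write-up I would therefore prefer route one, or equivalently re-run the Cauchy--Schwarz computation of Lemma~\ref{lem:Hellinger-loss} verbatim with $p(x)$ replaced by $p^{\pi}(x)$ and $f$ replaced by $\bar{f}$.
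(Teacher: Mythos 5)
Part (i) and your ``route two'' for part (ii) are exactly the paper's proof: the paper applies Lemma~\ref{lem:Hellinger-loss} per source domain and then invokes convexity of $D_{1/2}$ as an $f$-divergence to get $D_{1/2}\left(\sum_{i}\pi_{i}\mathbb{P}_{\mathcal{Y}}^{S,i},\sum_{i}\pi_{i}\hat{\mathbb{P}}_{\mathcal{Y}}^{S,i}\right)\leq\sum_{i}\pi_{i}D_{1/2}\left(\mathbb{P}_{\mathcal{Y}}^{S,i},\hat{\mathbb{P}}_{\mathcal{Y}}^{S,i}\right)$, followed by a square root. So the result is established by the route you treat as the fallback.

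Your preferred ``route one'', however, has a genuine flaw as written. In this lemma the loss $\mathcal{L}$ is built from the Hellinger loss $\ell\left(\hat{f}(x),f(x)\right)=D_{1/2}\left(\hat{f}(x),f(x)\right)=2\sum_{c}\left(\sqrt{\hat{f}(x,c)}-\sqrt{f(x,c)}\right)^{2}$, which is \emph{not} linear in its second argument: the paper's Appendix C points out that it can only be written as $\sum_{c}l\left(\hat{f}(x),c\right)f(x,c)$ with $l\left(\hat{f}(x),c\right)=\left[\sqrt{\hat{f}(x,c)/f(x,c)}-1\right]^{2}$, i.e.\ with $l$ itself depending on $f$, and even then only under a positivity condition. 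Consequently the claimed identity $\mathcal{L}\left(\hat{h}\circ g,\bar{f},\mathbb{P}^{\pi}\right)=\sum_{i}\pi_{i}\mathcal{L}\left(\hat{h}\circ g,f^{S,i},\mathbb{P}^{S,i}\right)$, which you justify ``squarely'' by linearity of $\ell$ in the second argument, is false in general: since $\bar{f}(x)$ is the pointwise mixture $\sum_{i}\frac{\pi_{i}p^{S,i}(x)}{p^{\pi}(x)}f^{S,i}(x)$, convexity of $v\mapsto D_{1/2}(u,v)$ gives only $\ell\left(\hat{f}(x),\bar{f}(x)\right)\leq\sum_{i}\frac{\pi_{i}p^{S,i}(x)}{p^{\pi}(x)}\ell\left(\hat{f}(x),f^{S,i}(x)\right)$, typically with strict inequality. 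The good news is that an inequality is all you need, so route one is salvageable: identify the marginals of $\bar{f}$ and $\hat{f}$ on $\mathbb{P}^{\pi}$ as you do (that part is fine, since the marginal is a genuinely linear integral), apply Lemma~\ref{lem:Hellinger-loss} to the triple $\left(\mathbb{P}^{\pi},\bar{f},\hat{h}\circ g\right)$, and then bound $\mathcal{L}\left(\hat{h}\circ g,\bar{f},\mathbb{P}^{\pi}\right)\leq\sum_{i}\pi_{i}\mathcal{L}\left(\hat{h}\circ g,f^{S,i},\mathbb{P}^{S,i}\right)$ by the convexity of $D_{1/2}$. But note that this repaired version uses the very same convexity fact you were trying to avoid in route two, so it buys no extra self-containedness over the paper's (and your) simpler argument.
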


\begin{proof}
(i) The proof of this part is obvious from Lemma \ref{lem:Hellinger-loss}
by considering $f^{T}$ as $f$ and $\mathbb{P}^{T}$ as $\mathbb{P}$.

(ii) By the convexity of $D_{1/2}$, which is a member of $f$-divergence
family, we have
\begin{align*}
D_{1/2}\left(\mathbb{P}_{\mathcal{Y}}^{\pi},\hat{\mathbb{P}}_{\mathcal{Y}}^{\pi}\right) & =D_{1/2}\left(\sum_{i=1}^{K}\pi_{i}\mathbb{P}_{\mathcal{Y}}^{S,i},\sum_{i=1}^{K}\pi_{i}\hat{\mathbb{P}}_{\mathcal{Y}}^{S,i}\right)\leq\sum_{i=1}^{K}\pi_{i}D_{1/2}\left(\mathbb{P}_{\mathcal{Y}}^{S,i},\hat{\mathbb{P}}_{\mathcal{Y}}^{S,i}\right)\\
\overset{(1)}{\leq} & \sum_{i=1}^{K}\pi_{i}\mathcal{L}\left(\hat{h}\circ g,f^{S,i},\mathbb{P}^{S,i}\right),
\end{align*}
where the derivation in $\overset{(1)}{\leq}$ is from Lemma \ref{lem:Hellinger-loss}.
Therefore, we reach the conclusion as
\[
d_{1/2}\left(\mathbb{P}_{\mathcal{Y}}^{\pi},\hat{\mathbb{P}}_{\mathcal{Y}}^{\pi}\right)\leq\left[\sum_{i=1}^{K}\pi_{i}\mathcal{L}\left(\hat{h}\circ g,f^{S,i},\mathbb{P}^{S,i}\right)\right]^{1/2}.
\]
\end{proof}
\begin{thm}
\label{apx_thm:trade_off_bounds} (\textbf{Theorem 8 in the main paper})
Consider a feature extractor $g$ and a hypothesis $\hat{h}$, the
Hellinger distance between two label marginal distributions $\mathbb{P}_{\mathcal{Y}}^{\pi}$
and $\mathbb{P}_{\mathcal{Y}}^{T}$ can be upper-bounded as: 

(i) $d_{1/2}\left(\mathbb{P}_{\mathcal{Y}}^{\pi},\mathbb{P}_{\mathcal{Y}}^{T}\right)\leq\left[\sum_{k=1}^{K}\pi_{k}\mathcal{L}\left(\hat{h}\circ g,f^{S,k},\mathbb{P}^{S,k}\right)\right]^{1/2}+d_{1/2}\left(\mathbb{P}_{g}^{T},\mathbb{P}_{g}^{\pi}\right)+\mathcal{L}\left(\hat{h}\circ g,f^{T},\mathbb{P}^{T}\right)^{1/2}.$

(ii) $d_{1/2}\left(\mathbb{P}_{\mathcal{Y}}^{\pi},\mathbb{P}_{\mathcal{Y}}^{T}\right)\leq\left[\sum_{i=1}^{K}\pi_{i}\mathcal{L}\left(\hat{h}\circ g,f^{S,i},\mathbb{P}^{S,i}\right)\right]^{1/2}+\sum_{i=1}^{K}\sum_{j=1}^{K}\frac{\sqrt{\pi_{j}}}{K}d_{1/2}\left(\mathbb{P}_{g}^{S,i},\mathbb{P}_{g}^{S,j}\right)+\sum_{i=1}^{K}\sum_{j=1}^{K}\frac{\sqrt{\pi_{j}}}{K}d_{1/2}\left(\mathbb{P}_{g}^{T},\mathbb{P}_{g}^{S,i}\right)+\mathcal{L}\left(\hat{h}\circ g,f^{T},\mathbb{P}^{T}\right)^{1/2}.$

Here we note that the general loss $\mathcal{L}$ is defined based
on the Hellinger loss $\ell$ defined as $\ell\left(\hat{f}\left(x\right),f(x)\right)=D_{1/2}\left(\hat{f}\left(x\right),f(x)\right)$.
\end{thm}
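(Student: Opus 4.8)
The plan is to interpolate between $\mathbb{P}_{\mathcal{Y}}^{\pi}$ and $\mathbb{P}_{\mathcal{Y}}^{T}$ through the two label marginals $\hat{\mathbb{P}}_{\mathcal{Y}}^{\pi}$ and $\hat{\mathbb{P}}_{\mathcal{Y}}^{T}$ induced by the learned hypothesis $\hat{f}=\hat{h}\circ g$, and to exploit that $d_{1/2}$ is a genuine metric (stated in the notation section). By the triangle inequality,
\[
d_{1/2}\!\left(\mathbb{P}_{\mathcal{Y}}^{\pi},\mathbb{P}_{\mathcal{Y}}^{T}\right)\leq d_{1/2}\!\left(\mathbb{P}_{\mathcal{Y}}^{\pi},\hat{\mathbb{P}}_{\mathcal{Y}}^{\pi}\right)+d_{1/2}\!\left(\hat{\mathbb{P}}_{\mathcal{Y}}^{\pi},\hat{\mathbb{P}}_{\mathcal{Y}}^{T}\right)+d_{1/2}\!\left(\hat{\mathbb{P}}_{\mathcal{Y}}^{T},\mathbb{P}_{\mathcal{Y}}^{T}\right).
\]
The first and third terms are exactly the quantities controlled in Lemma~\ref{lem:key_tradeoff}: part (ii) bounds $d_{1/2}(\mathbb{P}_{\mathcal{Y}}^{\pi},\hat{\mathbb{P}}_{\mathcal{Y}}^{\pi})$ by $[\sum_i\pi_i\mathcal{L}(\hat{h}\circ g,f^{S,i},\mathbb{P}^{S,i})]^{1/2}$, and part (i) bounds $d_{1/2}(\hat{\mathbb{P}}_{\mathcal{Y}}^{T},\mathbb{P}_{\mathcal{Y}}^{T})$ by $\mathcal{L}(\hat{h}\circ g,f^{T},\mathbb{P}^{T})^{1/2}$. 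Hence the whole task reduces to bounding the middle term by the feature-space discrepancy $d_{1/2}(\mathbb{P}_g^{T},\mathbb{P}_g^{\pi})$.

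For the middle term I would note that, since $\hat{f}=\hat{h}\circ g$, the change-of-variables formula for push-forwards gives $\hat p^{\pi}(y=i)=\int \hat h(z,i)\,p_g^{\pi}(z)\,dz$ and likewise $\hat p^{T}(y=i)=\int \hat h(z,i)\,p_g^{T}(z)\,dz$; that is, $\hat{\mathbb{P}}_{\mathcal{Y}}^{\pi}$ and $\hat{\mathbb{P}}_{\mathcal{Y}}^{T}$ are the push-forwards of $\mathbb{P}_g^{\pi}$ and $\mathbb{P}_g^{T}$ through the common stochastic channel $z\mapsto Cat(\hat h(z))$. Expanding $D_{1/2}(\hat{\mathbb{P}}_{\mathcal{Y}}^{\pi},\hat{\mathbb{P}}_{\mathcal{Y}}^{T})$, using $\sum_i \hat h(z,i)=1$ to collapse the linear terms to $1$, and then applying Cauchy--Schwarz in the form $\sqrt{\int \hat h(z,i)p_g^{\pi}(z)dz}\,\sqrt{\int \hat h(z,i)p_g^{T}(z)dz}\ge \int \hat h(z,i)\sqrt{p_g^{\pi}(z)p_g^{T}(z)}\,dz$ (exactly the manoeuvre used in the proof of Lemma~\ref{lem:Hellinger-loss}), one obtains $D_{1/2}(\hat{\mathbb{P}}_{\mathcal{Y}}^{\pi},\hat{\mathbb{P}}_{\mathcal{Y}}^{T})\le 2\int\big(\sqrt{p_g^{\pi}(z)}-\sqrt{p_g^{T}(z)}\big)^2 dz = D_{1/2}(\mathbb{P}_g^{\pi},\mathbb{P}_g^{T})$, i.e.\ a data-processing inequality for the Hellinger divergence through $\hat h$. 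Taking square roots and combining with the two Lemma~\ref{lem:key_tradeoff} bounds yields part (i).

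For part (ii) I would further expand $d_{1/2}(\mathbb{P}_g^{T},\mathbb{P}_g^{\pi})$ exactly as in the proof of Theorem~\ref{apx_thm:compact_dom_inv}: Lemma~\ref{lem:decompose-mixture-distance} gives $d_{1/2}(\mathbb{P}_g^{T},\mathbb{P}_g^{\pi})\le\sum_j\sqrt{\pi_j}\,d_{1/2}(\mathbb{P}_g^{T},\mathbb{P}_g^{S,j})$, the triangle inequality inserts an arbitrary $\mathbb{P}_g^{S,i}$ to give $d_{1/2}(\mathbb{P}_g^{T},\mathbb{P}_g^{S,j})\le d_{1/2}(\mathbb{P}_g^{T},\mathbb{P}_g^{S,i})+d_{1/2}(\mathbb{P}_g^{S,i},\mathbb{P}_g^{S,j})$, and averaging the resulting estimate over $i\in[K]$ produces $\sum_{i=1}^{K}\sum_{j=1}^{K}\frac{\sqrt{\pi_j}}{K}\big(d_{1/2}(\mathbb{P}_g^{T},\mathbb{P}_g^{S,i})+d_{1/2}(\mathbb{P}_g^{S,i},\mathbb{P}_g^{S,j})\big)$; substituting into the bound from (i) gives (ii). The triangle inequalities and the Cauchy--Schwarz bookkeeping are routine; the only step needing genuine care — and the one I expect to be the main obstacle — is the middle term, where one must correctly identify $\hat{\mathbb{P}}_{\mathcal{Y}}^{\pi}$ and $\hat{\mathbb{P}}_{\mathcal{Y}}^{T}$ as push-forwards through the single channel $\hat h$ (using $\hat f=\hat h\circ g$, or, if one wants to be fully rigorous, Proposition~\ref{prop:latent_input}(i) with $c\equiv 1$ applied to $\hat f$ whose induced latent labeling function is $\hat h$), and then chain the Cauchy--Schwarz estimates in the order that makes the stochastic labeling map contract the Hellinger divergence. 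Everything else is assembly of lemmas already established in Appendices A and C.
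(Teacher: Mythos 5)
Your proposal is correct and follows essentially the same route as the paper's proof: the same triangle-inequality interpolation through $\hat{\mathbb{P}}_{\mathcal{Y}}^{\pi}$ and $\hat{\mathbb{P}}_{\mathcal{Y}}^{T}$, the same use of Lemma~\ref{lem:key_tradeoff} for the outer terms, and the same reduction of part (ii) via Lemma~\ref{lem:decompose-mixture-distance} plus averaging over $i$. The only difference is that where the paper simply invokes the monotonicity (data-processing) property of the Hellinger distance under the channel $p(y=i\mid z)=\hat{h}(z,i)$, you prove it explicitly by the Cauchy--Schwarz argument, which is a correct and slightly more self-contained rendering of the same step.
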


\begin{proof}
(i)We define $\mathbb{P}_{\mathcal{Y}}^{\pi},\hat{\mathbb{P}}_{\mathcal{Y}}^{\pi}$
and $\mathbb{P}_{\mathcal{Y}}^{T},\hat{\mathbb{P}}_{\mathcal{Y}}^{T}$
as in Lemma \ref{lem:key_tradeoff}. Recap that to sample $y\sim\hat{\mathbb{P}}_{\mathcal{Y}}^{\pi}$,
we sample $k\sim Cat(\pi),x\sim\mathbb{P}^{S,k}$ (i.e., $x\sim\mathbb{P}^{\pi}:=\sum_{k=1}^{K}\pi_{k}\mathbb{P}^{S,k}$),
compute $z=g\left(x\right)$ (i.e., $z\sim\mathbb{P}_{g}^{\pi}$)
, and $y\sim Cat\left(\hat{h}\left(z\right)\right)$, while similar
to draw $y\sim\mathbb{\hat{P}}_{\mathcal{Y}}^{T}$.

Using triangle inequality for Hellinger distance, we have

\[
\begin{aligned}d_{1/2}\left(\mathbb{P}_{\mathcal{Y}}^{\pi},\mathbb{P}_{\mathcal{Y}}^{T}\right) & \leq d_{1/2}\left(\mathbb{P}_{\mathcal{Y}}^{\pi},\hat{\mathbb{P}}_{\mathcal{Y}}^{\pi}\right)+d_{1/2}\left(\mathbb{\hat{P}}_{\mathcal{Y}}^{\pi},\hat{\mathbb{P}}_{\mathcal{Y}}^{T}\right)+d_{1/2}\left(\hat{\mathbb{P}}_{\mathcal{Y}}^{T},\mathbb{P}_{\mathcal{Y}}^{T}\right)\end{aligned}
.
\]

Referring to Lemma \ref{lem:key_tradeoff}, we achieve

\[
\begin{aligned}d_{1/2}\left(\mathbb{P}_{\mathcal{Y}}^{\pi},\mathbb{P}_{\mathcal{Y}}^{T}\right) & \leq\left[\sum_{i=1}^{K}\pi_{i}\mathcal{L}\left(\hat{h}\circ g,f^{S,i},\mathbb{P}^{S,i}\right)\right]^{1/2}+d_{1/2}\left(\mathbb{P}_{\hat{\mathcal{Y}}}^{\pi},\mathbb{P}_{\hat{\mathcal{Y}}}^{T}\right)+\mathcal{L}\left(\hat{h}\circ g,f^{T},\mathbb{P}^{T}\right)^{1/2}.\end{aligned}
\]

From the monotonicity of Hellinger distance, when applying to $\mathbb{P}_{g}^{T}$
and $\mathbb{P}_{g}^{\pi}$ with the same transition probability $p\left(y=i\mid z\right)=\hat{h}\left(z,i\right)$
for obtaining $\mathbb{P}_{\hat{\mathcal{Y}}}^{\pi}\text{ and }\mathbb{P}_{\hat{\mathcal{Y}}}^{T}$,
we have

\[
d_{1/2}\left(\mathbb{P}_{\hat{\mathcal{Y}}}^{\pi},\mathbb{P}_{\hat{\mathcal{Y}}}^{T}\right)\leq d_{1/2}\left(\mathbb{P}_{g}^{\pi},\mathbb{P}_{g}^{T}\right).
\]

Finally, we reach the conclusion as

\[
\begin{aligned}d_{1/2}\left(\mathbb{P}_{\mathcal{Y}}^{\pi},\mathbb{P}_{\mathcal{Y}}^{T}\right) & \leq\left[\sum_{i=1}^{K}\pi_{i}\mathcal{L}\left(\hat{h}\circ g,f^{S,i},\mathbb{P}^{S,i}\right)\right]^{1/2}+d_{1/2}\left(\mathbb{P}_{g}^{\pi},\mathbb{P}_{g}^{T}\right)+\mathcal{L}\left(\hat{h}\circ g,f^{T},\mathbb{P}^{T}\right)^{1/2}\end{aligned}
.
\]

(ii) From Lemma \ref{lem:decompose-mixture-distance}, we can decompose
the data shift term and use triangle inequality again, hence arriving
at

\[
\begin{aligned}d_{1/2}\left(\mathbb{P}_{\mathcal{Y}}^{\pi},\mathbb{P}_{\mathcal{Y}}^{T}\right) & \leq\left[\sum_{i=1}^{K}\pi_{i}\mathcal{L}\left(\hat{h}\circ g,f^{S,i},\mathbb{P}^{S,i}\right)\right]^{1/2}+\sum_{j=1}^{K}\sqrt{\pi_{j}}d_{1/2}\left(\mathbb{P}_{g}^{T},\mathbb{P}_{g}^{S,j}\right)+\mathcal{L}\left(\hat{h}\circ g,f^{T},\mathbb{P}^{T}\right)^{1/2}\\
 & \leq\left[\sum_{i=1}^{K}\pi_{i}\mathcal{L}\left(\hat{h}\circ g,f^{S,i},\mathbb{P}^{S,i}\right)\right]^{1/2}+\sum_{i=1}^{K}\sum_{j=1}^{K}\frac{\sqrt{\pi_{j}}}{K}\left(d_{1/2}\left(\mathbb{P}_{g}^{T},\mathbb{P}_{g}^{S,i}\right)+d_{1/2}\left(\mathbb{P}_{g}^{S,i},\mathbb{P}_{g}^{S,j}\right)\right)\\
 & +\mathcal{L}\left(\hat{h}\circ g,f^{T},\mathbb{P}^{T}\right)^{1/2}.
\end{aligned}
\]

This concludes our proof.
\end{proof}
The loss $\mathcal{L}$ in Theorem \ref{apx_thm:trade_off_bounds}
defined based on the Hellinger loss $\ell$ defined as $\ell\left(\hat{f}\left(x\right),f(x)\right)=D_{1/2}\left(\hat{f}\left(x\right),f(x)\right)$,
while theory development in previous sections bases on the loss $\ell$
which has the specific form
\begin{equation}
\ell\left(\hat{f}\left(x\right),f\left(x\right)\right)=\sum_{i=1}^{C}l\left(\hat{f}\left(x\right),i\right)f\left(x,i\right).\label{eq:loss_family}
\end{equation}

To make it more consistent, we discuss under which condition the Hellinger
loss is in the family defined in (\ref{eq:loss_family}). It is evident
that if the labeling function $f$ satisfying $f\left(x,i\right)>0,\forall x\sim\mathbb{P}$
and $i\in\left[C\right]$, for example, we apply label smoothing \citep{Szegedy_cvpr_label_smoothing,Muller_nips2019_when_label_smoothing}
on ground-truth labels, the Hellinger loss is in the family of interest.
That is because the following derivation:
\begin{align*}
D_{1/2}\left(\hat{f}\left(x\right),f(x)\right) & =2\sum_{i=1}^{C}\left[\sqrt{\hat{f}\left(x,i\right)}-\sqrt{f\left(x,i\right)}\right]^{2}\\
= & 2\sum_{i=1}^{C}\left[\sqrt{\frac{\hat{f}\left(x,i\right)}{f\left(x,i\right)}}-1\right]^{2}f\left(x,i\right),
\end{align*}
where we consider $l\left(\hat{f}\left(x\right),i\right)=\left[\sqrt{\frac{\hat{f}\left(x,i\right)}{f\left(x,i\right)}}-1\right]^{2}$.

\section{Appendix D: Additional Experiments}

\subsection{Experiment on Colored MNIST}

\subsubsection{Dataset}

We conduct experiments on the colored MNIST dataset \citep{OOD_IRM_arjovsky_2020}
whose data is generated as follow. Firstly, for any original image
$X$ in the MNIST dataset \citep{lecun2010mnist}, the value of digit
feature is $Z_{d}=0$ if the image's digit is from $0\goto4$, while
$Z_{d}=1$ is assigned to image with digit from $5\goto9$. Next,
the ground-truth label for the image $X$ is also binary and sampled
from either $\mathbb{P}(Y|Z_{d}=1)$ or $\mathbb{P}(Y|Z_{d}=0)$,
depending on the value of digit feature $Z_{d}$. These binomial distributions
are such that $\mathbb{P}(Y=1|Z_{d}=1)=\mathbb{P}(Y=0|Z_{d}=0)=0.75$.
Next, the color feature binary random variable $Z_{c}$ is assigned
to each image conditioning on its label, i.e., $z_{C}\sim\mathbb{P}(Z_{C}|Y=1)$
or $z_{c}\sim\mathbb{P}(Z_{C}|Y=0)$ with $\mathbb{P}(Z_{c}=1|Y=1)=\mathbb{P}(Z_{c}=0|Y=0)=\theta$,
depending on the domain. Finally, we color the image red if $Z_{c}=0$
or green if $Z_{c}=1$.

For both DG and MSDA experiments, there are 7 source domains generated
by setting $\mathbb{P}(Z_{c}=1|Y=1)=\mathbb{P}(Z_{c}=0|Y=0)=\theta^{S,i}$
where $\theta^{s,i}\sim Uni\left(\left[0.6,1\right]\right)$ for $i=1,\ldots,7$.
In our actual implemetation, we take $\theta^{s,i}=0.6+\text{\ensuremath{\frac{0.4}{7}\left(i-1\right)}}$.
The two target domains are created with $\theta^{T,i}\in\text{\ensuremath{\left\{ 0.05,0.7\right\} }}$
for $i=1,2$. After domain creation, data from each domain is split
into training set and validation set. For DG experiment, no data from
the target domain is used in training. On the other hand, the same
train-validation split is applied to target domains in MSDA and the
unlabeled training splits are used for training, while the validation
splits are used for testing.

\begin{figure}
\begin{centering}
\includegraphics[width=0.6\textwidth]{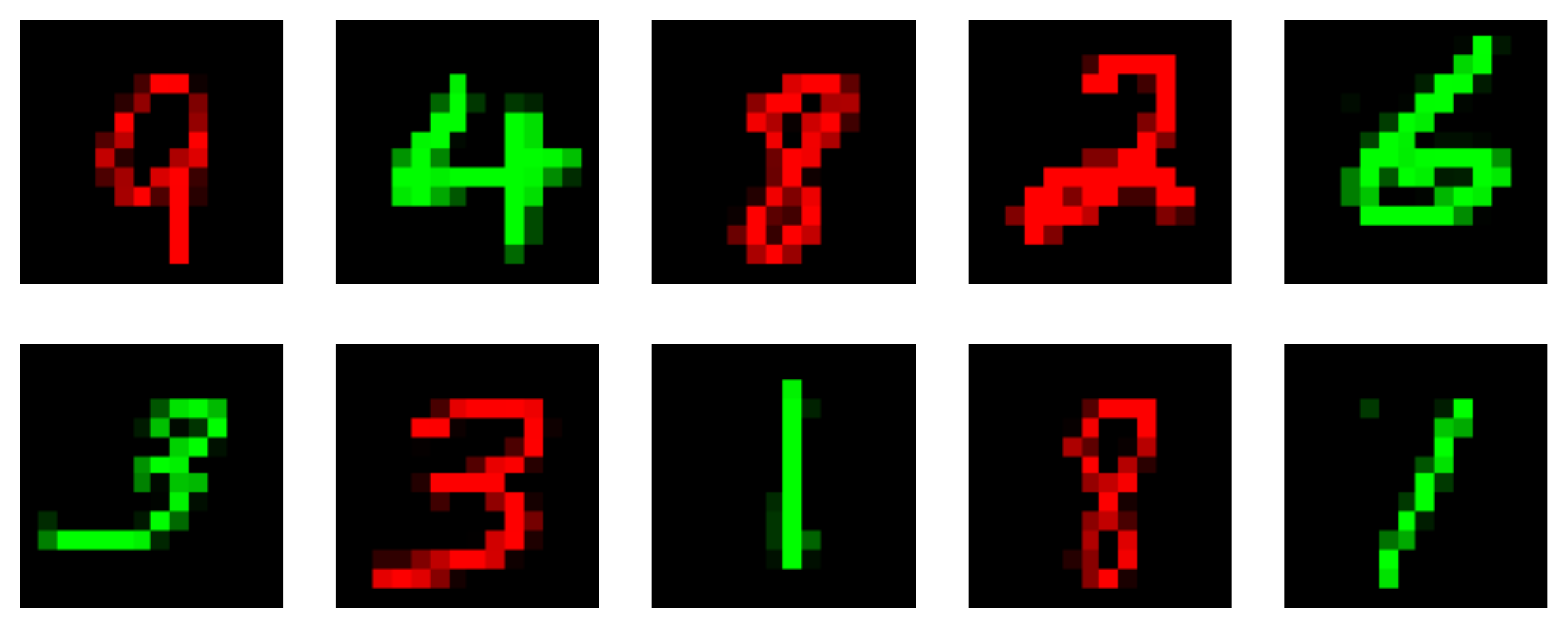}
\par\end{centering}
\caption{\label{fig:ColoredMNIST-illus}Images in Colored MNIST dataset are
\textquotedblleft colored\textquotedblright{} according to color feature
$Z_{C}$. If $Z_{C}=0$, channel $0$ is kept while channel $1$ contains
all $0$, corresponding to red images. Similarly, $Z_{C}=1$ means
channel $1$ is kept intact while channel $0$ is zero-out, represented
by green images.}

\end{figure}

\subsubsection{Model}

We train a hypothesis $\hat{f}=\hat{h}\circ g$ and minimize the classification
loss w.r.t. entire source data

\[
\mathcal{L}_{gen}:=\sum_{i=1}^{7}\frac{N_{i}}{N_{S}}\mathbb{E}_{\left(x,y\right)\sim\mathbb{D}^{S,i}}\left[CE\left(\hat{h}\left(g\left(x\right)\right),y\right)\right],
\]
where CE is the cross-entropy loss, $N_{i}$ is the number of samples
from domain $i$, and $N_{S}$is the total number of source samples.

To align source-source representation distribution, we apply adversarial
learning \citep{GAN_Goodfellow} as in \citep{DA_ganin_DANN_2016},
in which a min-max game is played, where the domain discriminator
$\hat{h}^{s-s}$ tries to predict domain labels from input representations,
while the feature extractor (generator) $g$ tries to fool the domain
discriminator, i.e., $\min_{g}\max_{\hat{h}^{s-s}}\mathcal{L}_{disc}^{s-s}$.
The source-source compression loss is defined as

\[
\mathcal{L}_{disc}^{s-s}:=\sum_{i=1}^{7}\frac{N_{i}}{N_{S}}\mathbb{E}_{x\sim\mathbb{P}^{S,i}}\left[-CE\left(\log\hat{h}^{s-s}\left(g\left(x\right)\right),i\right)\right],
\]

where $i$ is the domain label. It is well-known \citep{GAN_Goodfellow}
that if we search $\hat{h}^{s-s}$ in a family with infinite capacity
then 
\[
\max_{\hat{h}^{s-s}}\mathcal{L}_{disc}^{s-s}=JS\left(\mathbb{P}_{g}^{S,1},...,\mathbb{P}_{g}^{S,7}\right).
\]
.

Similarly, alignment between source and target feature distribution
is enforced by employing adversarial learning between another discriminator
$\hat{h}^{s-t}$ and the encoder $g$, with the objective is $\min_{g}\max_{\hat{h}^{s-s}}\mathcal{L}_{disc}^{s-t}$.
The loss function for source-target compression is

\[
\mathcal{L}_{disc}^{s-t}:=\frac{N_{k}}{N_{S}+N_{T}}\mathbb{E}_{x\sim\mathbb{P}^{\pi,S}}\left[-\log\hat{h}^{s-t}\left(g\left(x\right)\right)\right]+\frac{N_{T}}{N_{S}+N_{T}}\mathbb{E}_{x\sim\mathbb{P}^{T}}\left[-\log\left(1-\hat{h}^{s-t}\left(g\left(x\right)\right)\right)\right],
\]

where $\mathbb{P}^{\pi,S}=\sum_{i=1}^{7}\frac{N_{i}}{N_{S}}\mathbb{P}^{S,i}$
is the source mixture and $\mathbb{P}^{T}$ is the chosen target domain
among the two.

Finally, for DG we optimize the objective

\begin{equation}
\min_{g}\left(\min_{\hat{h}}\mathcal{L}_{gen}+\lambda\max_{\hat{h}^{s-s}}\mathcal{L}_{disc}^{s-s}\right),\label{eq:exp_dg_loss}
\end{equation}

where $\lambda$ being the trade-off hyperparameter: $\lambda=0$
corresponds to DG's general DI representation, while larger $\lambda$
corresponds to more compressed DI representation. On the other hand,
the objective for MSDA setting is

\begin{equation}
\min_{g}\left(\min_{\hat{h}}\mathcal{L}_{gen}+\lambda^{s-s}\max_{\hat{h}^{s-s}}\mathcal{L}_{com}^{s-s}+\lambda^{s-t}\max_{\hat{h}^{s-t}}\mathcal{L}_{com}^{s-t}\right),\label{eq:exp_msda_loss}
\end{equation}

with $\lambda^{s-s}$ controls the source-source compression and $\lambda^{s-t}$
controls the source-target compression.

Our implementation is based largely on Domain Bed repository \citep{domainbed}.
Specifically, the encoder $g$ is a convolutional neural network with
4 cnn layers, each is accompanied by RELU activation and batchnorm,
while the classifier $\hat{h}$ and discriminators $\hat{h}^{s-s},\hat{h}^{s-t}$
are densely connected multi-layer perceptions with 3 layers. Our code
can be found in the zip file accompanying this appendix. Moreover,
our experiments were run on one Tesla V100 GPU and it took around
30 minutes for one training on Colored MNIST.

\subsubsection{Multiple-source Domain Adaptation}

In additional to DG experiment provided in Section 3 in the main paper,
further experiment is conducted on MSDA, in which source-target compression
is applied in additional to source-source compression. Specifically,
unlabeled data from a target domain is supplied for training, whose
label is $1$ while all labeled source data has label $0$, and the
source-target discriminator is tasked with classifying them. We experiment
with 2 target domains $\theta^{T,i}\in\left\{ 0.5,0.7\right\} $ separately
and report accuracy on target domain for different compression strength.
The result is presented in Figure \ref{fig:Accuracy-target-msda}.

\begin{figure}
\begin{centering}
\subfloat[Distant target domain $\theta=0.05$]{\begin{centering}
\includegraphics[width=0.3\textwidth]{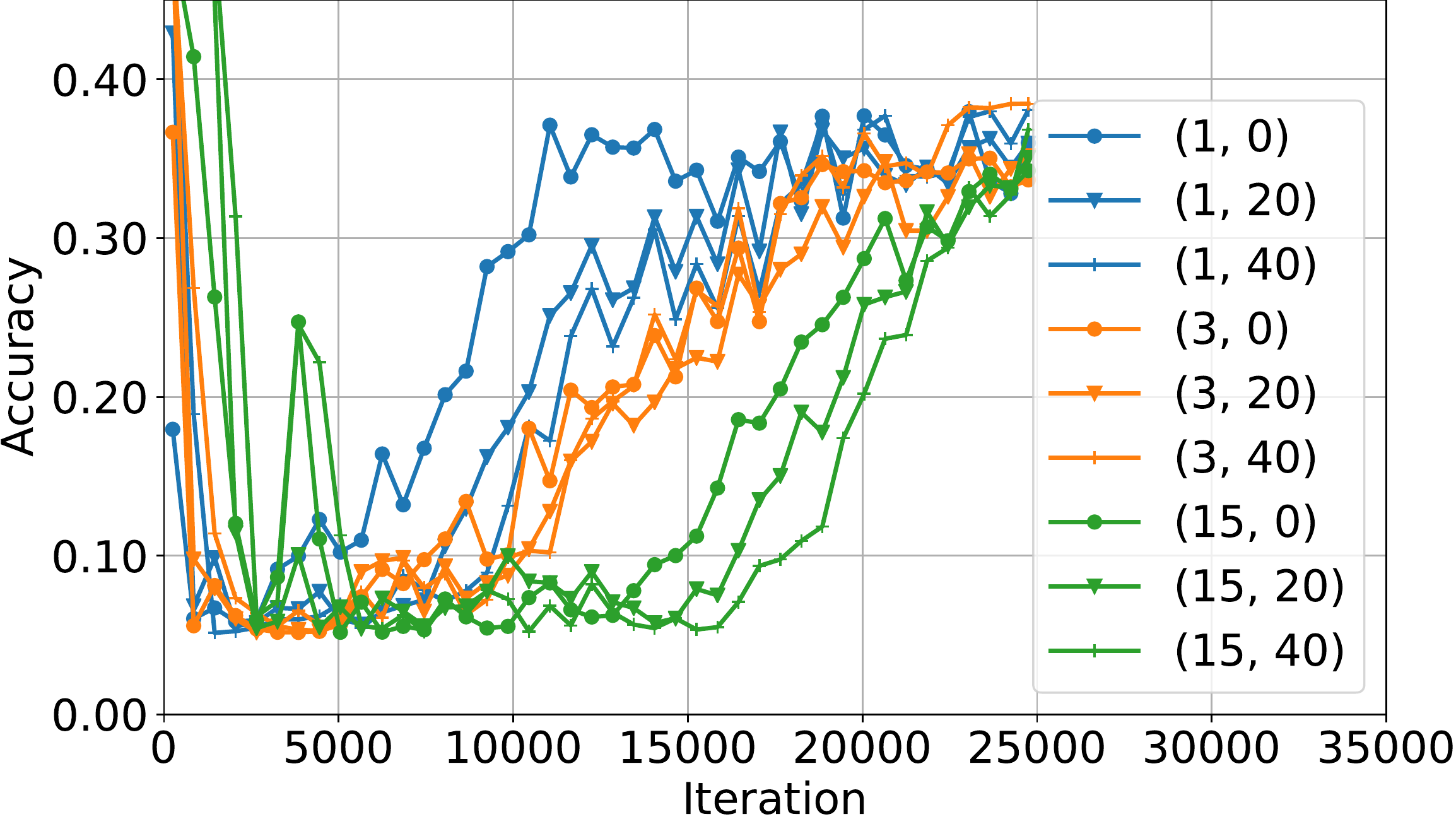}
\par\end{centering}
}\hfill{}\subfloat[\label{fig:MSDA-close-src-tar}Close target domain $\theta=0.7$,
only $\lambda^{s-t}$ changes]{\begin{centering}
\includegraphics[width=0.3\textwidth]{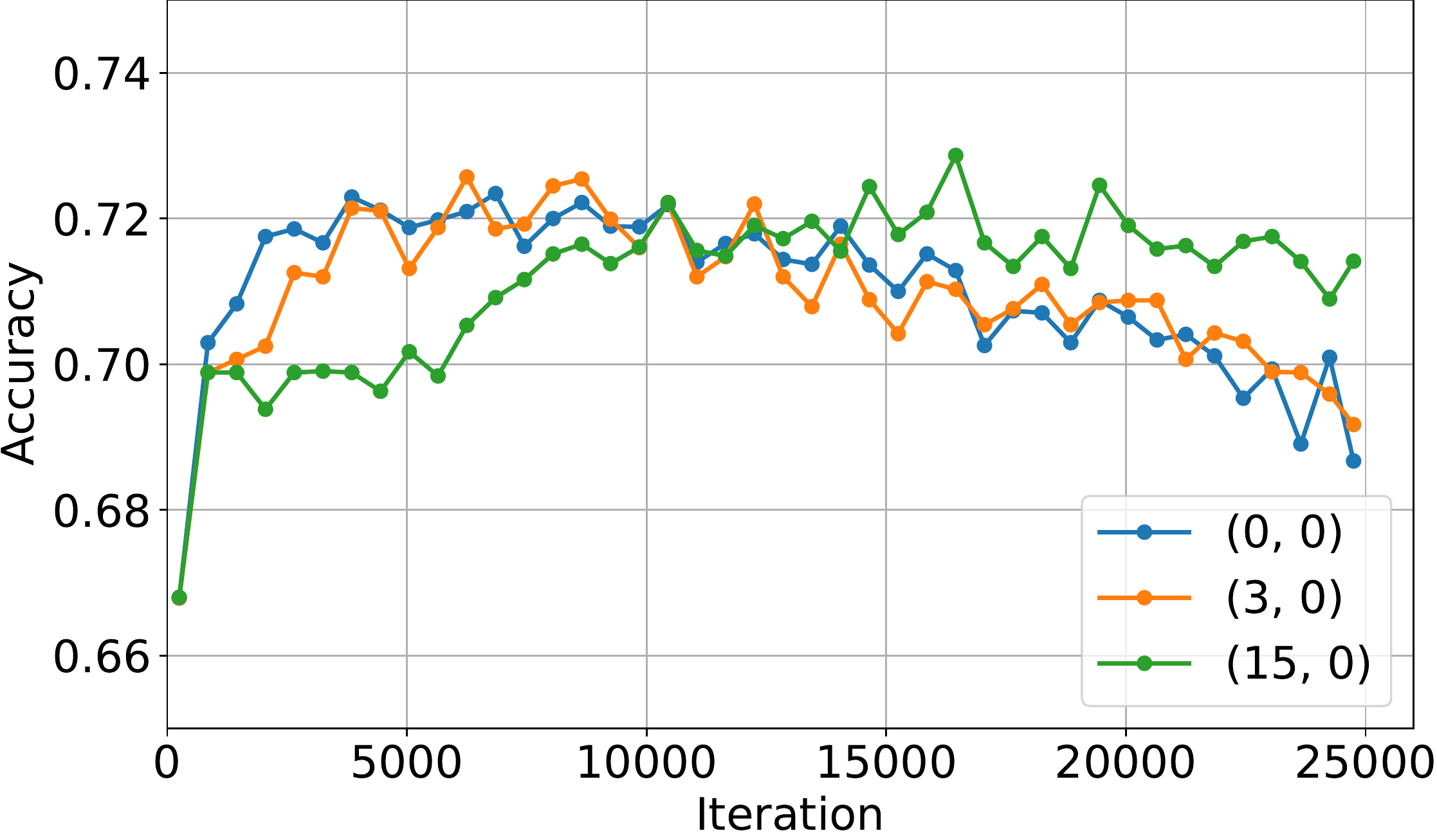}
\par\end{centering}
}\hfill{}\subfloat[\label{fig:MSDA-close-src-src}Close target domain $\theta=0.7$,
only $\lambda^{s-s}$ changes]{\begin{centering}
\includegraphics[width=0.3\textwidth]{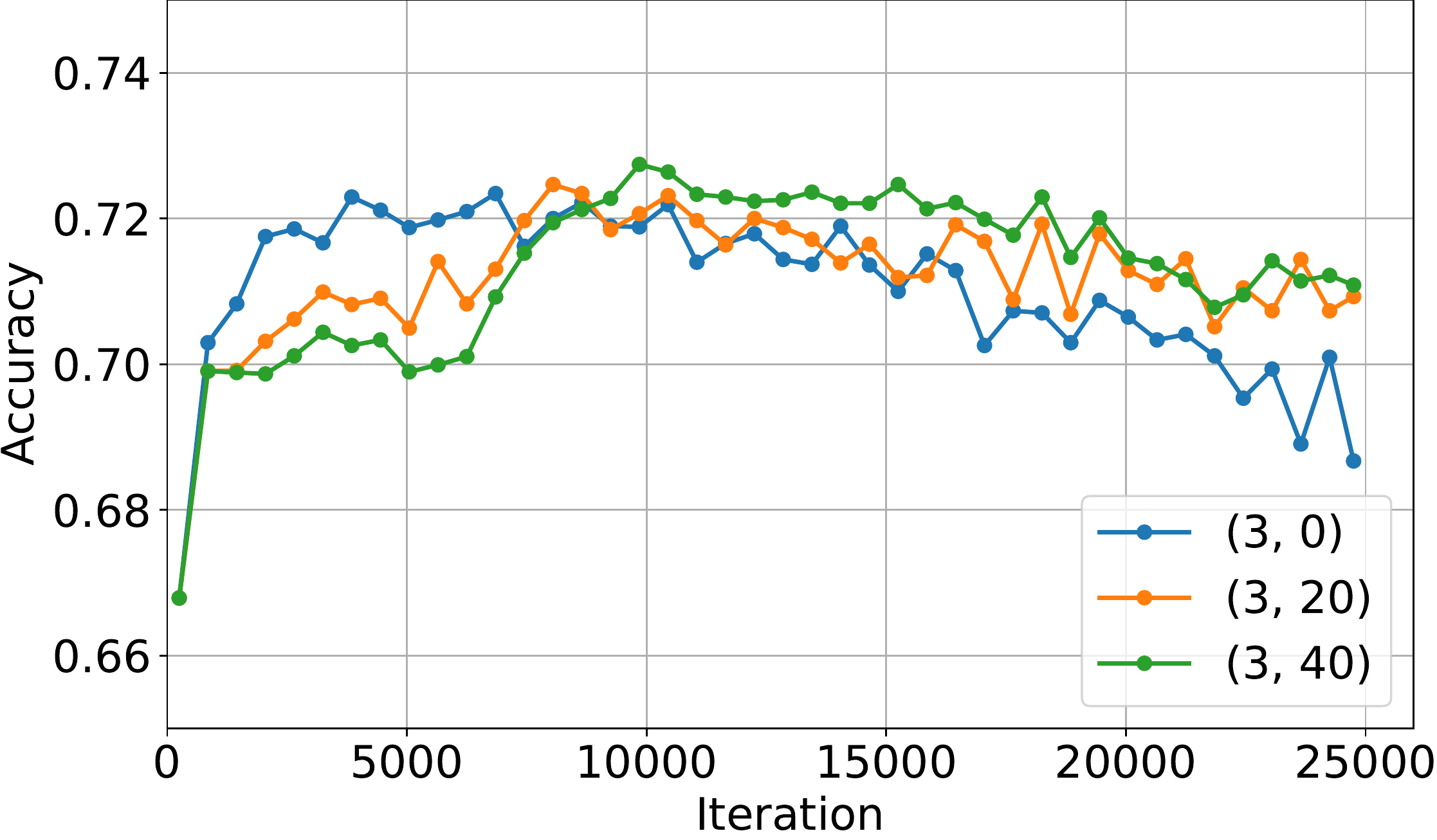}
\par\end{centering}
}
\par\end{centering}
\caption{\label{fig:Accuracy-target-msda}Accuracy on distant and close target
domains, where tuples $\left(\lambda^{s-t},\lambda^{s-s}\right)$
indicate strength of source-target compression and source-source compression,
respectively.}

\end{figure}

\begin{figure}
\begin{centering}
\subfloat[Art Painting as target domain.]{\begin{centering}
\includegraphics[width=0.4\textwidth]{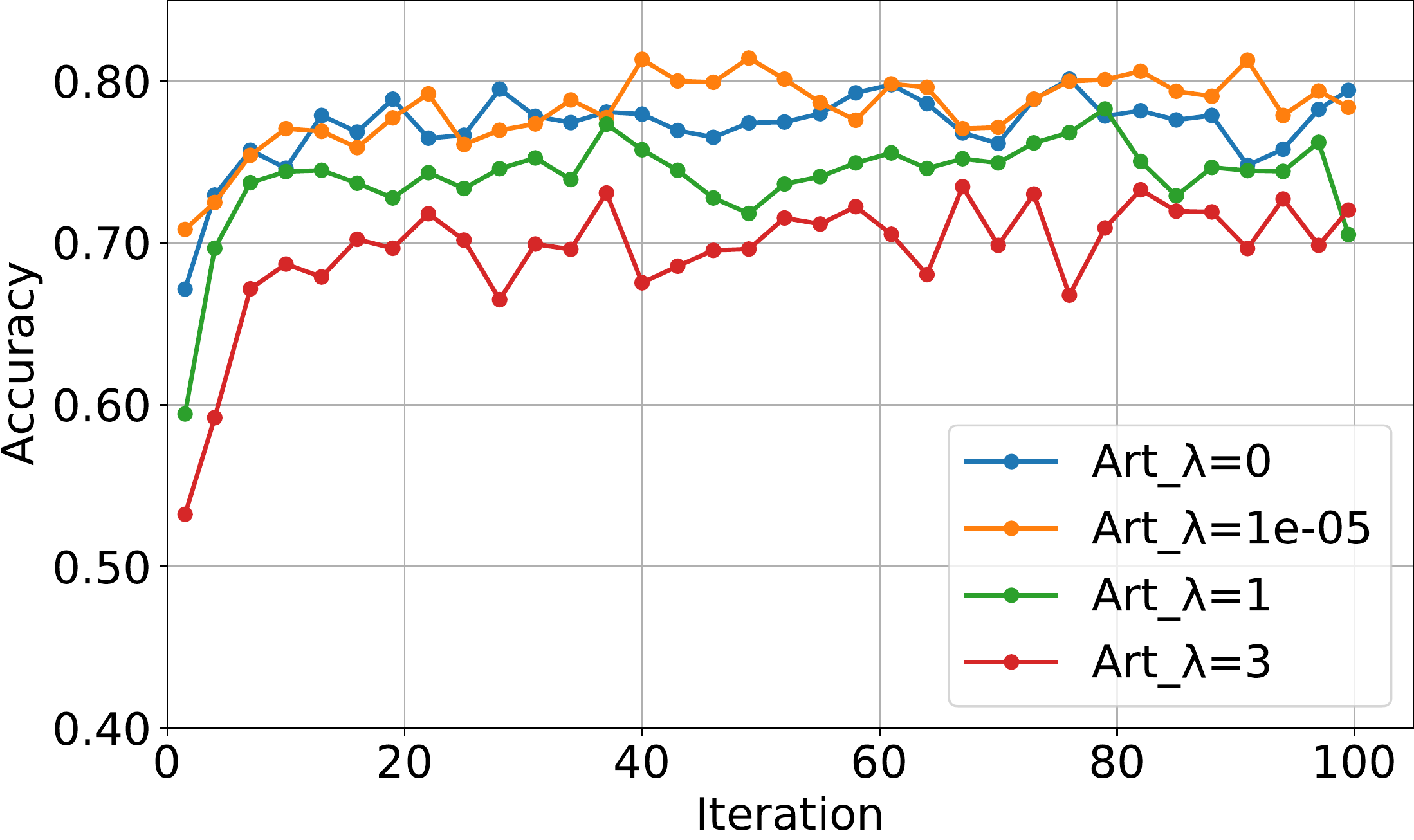}
\par\end{centering}
}\hfill{}\subfloat[Sketch as target domain.]{\begin{centering}
\includegraphics[width=0.4\textwidth]{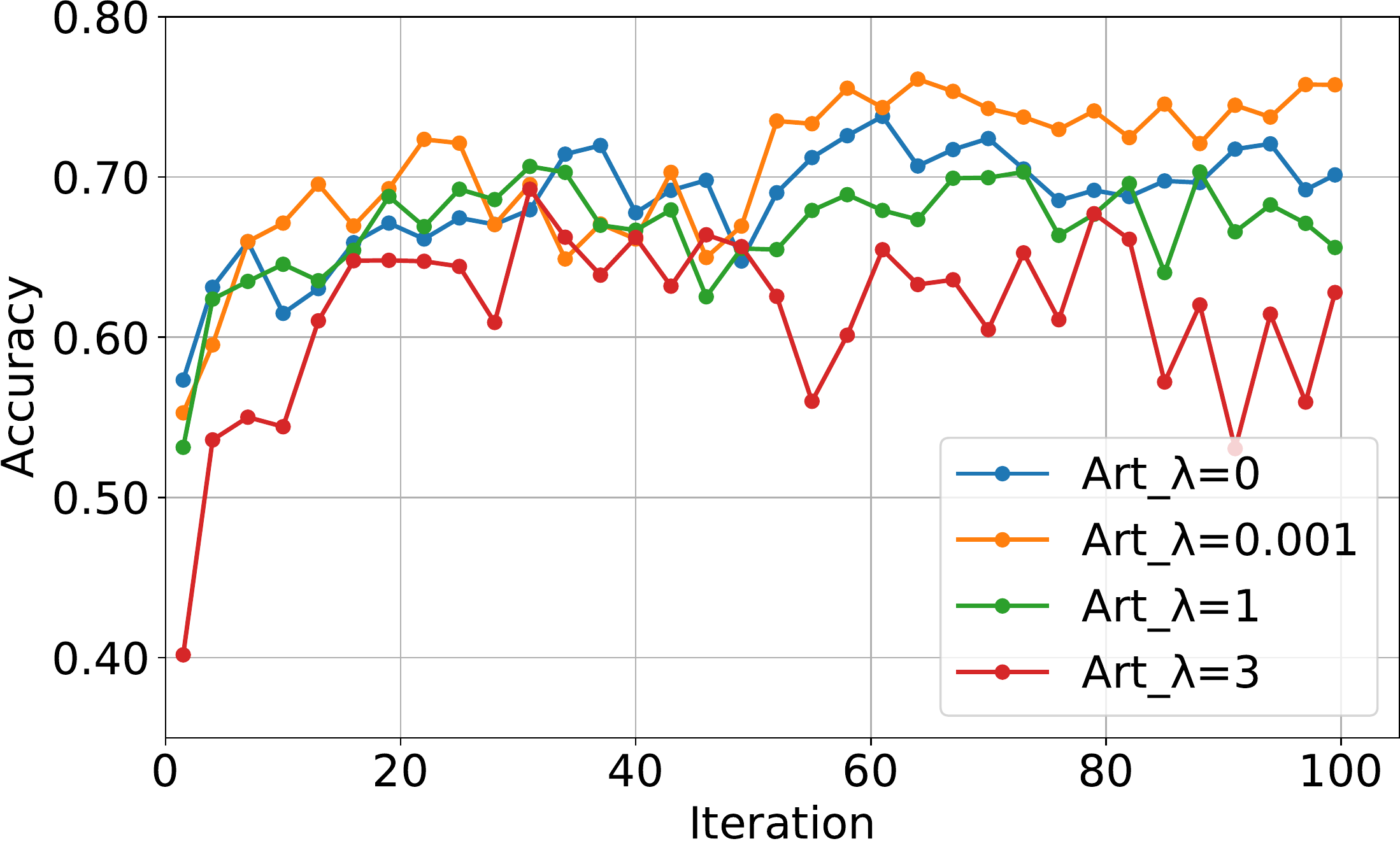}
\par\end{centering}
}
\par\end{centering}
\caption{\label{fig:Accuracy-target-pacs}Domain generalization on PACS with
different compression strength. On both domain, a slight compression
is beneficial which increases accuracy, in-line with possible benefit
of compressed DI representation. However, larger compression deteriorates
performance, which confirm our discussion using trade-off bound.}
\end{figure}

It is evident from the figure that the more compression on both source-source
and source-target representation, the lower the accuracy. This result
aligns with our previous bound (Theorem 8 in main paper), i.e.,

\[
\begin{aligned}d_{1/2}\left(\mathbb{P}_{\mathcal{Y}}^{\pi},\mathbb{P}_{\mathcal{Y}}^{T}\right)-\sum_{i=1}^{K}\sum_{j=1}^{K}\frac{\sqrt{\pi_{j}}}{K}d_{1/2}\left(\mathbb{P}_{g}^{T},\mathbb{P}_{g}^{S,i}\right)-\sum_{i=1}^{K}\sum_{j=1}^{K}\frac{\sqrt{\pi_{j}}}{K}d_{1/2}\left(\mathbb{P}_{g}^{S,i},\mathbb{P}_{g}^{S,j}\right)\\
\leq\left[\sum_{i=1}^{K}\pi_{i}\mathcal{L}\left(\hat{h}\circ g,f^{S,i},\mathbb{P}^{S,i}\right)\right]^{1/2}+\mathcal{L}\left(\hat{h}\circ g,f^{T},\mathbb{P}^{T}\right)^{1/2}.
\end{aligned}
\]

When source-source and source-target compression is applied, the term
$\sum_{i=1}^{K}\sum_{j=1}^{K}\frac{\sqrt{\pi_{j}}}{K}d_{1/2}\left(\mathbb{P}_{g}^{T},\mathbb{P}_{g}^{S,i}\right)+\sum_{i=1}^{K}\sum_{j=1}^{K}\frac{\sqrt{\pi_{j}}}{K}d_{1/2}\left(\mathbb{P}_{g}^{S,i},\mathbb{P}_{g}^{S,j}\right)$
is minimized, raising the lower bound of the loss terms. Subsequently,
as source loss $\sum_{i=1}^{K}\pi_{i}\mathcal{L}\left(\hat{h}\circ g,f^{S,i},\mathbb{P}^{S,i}\right)$
is minimized, the target loss $\mathcal{L}\left(\hat{h}\circ g,f^{T},\mathbb{P}^{T}\right)$
is high and hence performance is hindered.

However, the drop in accuracy for large compression on close target
domain is not as significant as in distant target domain, as indicated
in Figure \ref{fig:MSDA-close-src-tar} and \ref{fig:MSDA-close-src-src}.
In fact, accuracy for some compressed representation is higher than
no compressed representation at larget iteration. It is possible that
the negative effect of raising lower bound as in trade-off Theorem
is counteracted by the benefit of compressed DI representation (Section
2.3.3 of main paper), i.e., the learned classifier for compressed
DI representation better approximates the ground-truth labeling function.
On the other hand, model with general DI representation cannot approximate
this ground-truth labeling function as accurately, but overfit to
training dataset, resulting in target accuracy drop at larger iteration.

\subsection{Experiment on PACS dataset}

In order to verify our theoretical finding on real dataset, we conduct
further experiment on PACS dataset \citep{DG_li2017_iccv_pacs}, which
has 4 domains: Photo, Art Painting, Cartoon, and Sketch. Among the
4 domains, Photo and Cartoon are chosen as training domains, while
Art Painting is chosen as target domain close to the training ones,
and Sketch is the target domain distant from the training ones. We
use Resnet18 \citep{he_resnet} as the feature map, while label classifier
and domain discriminator are multi-layer perceptrons. We only investigate
DG setting on this dataset, in which training objective function is
Eq. \ref{eq:exp_dg_loss}. The result in Figure \ref{fig:Accuracy-target-pacs}
illustrates similar pattern to DG experiment on Colored MNIST. Specifically,
the accuracies for both target domains raise until a peak is reached
and then decrease, which confirms our developed theory for benefit
and trade-off of compressed DI representation.

\end{document}